\newcommand{\RR}{\mathbb{R}}
\renewcommand{\S}{\mathcal{S}}
\renewcommand{\P}{\mathcal{P}}
\newcommand{\D}{\mathcal{D}}
\newcommand{\A}{\mathcal{A}}
\newcommand{\diam}[1]{\operatorname{diam}\,\left( #1 \right)}
\newcommand{\dom}[1]{\texttt{dom}\,\left( #1 \right)}
\newcommand{\Ind}[1]{\mathds{1}_{\left[ #1 \right]}}
\newcommand{\Exp}[1]{\mathbb E \left[ #1 \right]} 
\renewcommand{\Pr}{\mathbb{P}}
\newcommand{\pfail}{\delta}
\newcommand{\Qhat}[2]{\mathbf{Q}_{#1}^{#2}}
\newcommand{\Vhat}[2]{\mathbf{V}_{#1}^{#2}}
\newcommand\numberthis{\addtocounter{equation}{1}\tag{\theequation}}
\mathchardef\mhyphen="2D 
\DeclareMathOperator*{\argmax}{\arg\!\max}
\DeclarePairedDelimiter{\norm}{\lVert}{\rVert}
\let\originalleft\left
\let\originalright\right
\renewcommand{\left}{\mathopen{}\mathclose\bgroup\originalleft}
\renewcommand{\right}{\aftergroup\egroup\originalright}
\newcommand\blfootnote[1]{%
  \begingroup
  \renewcommand\thefootnote{}\footnote{#1}%
  \addtocounter{footnote}{-1}%
  \endgroup
}
\newtheorem{assumption}{Assumption}
\newtheorem{theorem}{Theorem}
\numberwithin{theorem}{section}
\newtheorem{definition}[theorem]{Definition}
\newtheorem{lemma}[theorem]{Lemma}
\newtheorem{proposition}[theorem]{Proposition}
\newtheorem{corollary}[theorem]{Corollary}
\begin{document}

\title{Adaptive Discretization for Episodic Reinforcement Learning in Metric Spaces}
\author{Sean R. Sinclair\thanks{Email: \texttt{srs429@cornell.edu}} \qquad Siddhartha Banerjee\thanks{Email: \texttt{sbanerjee@cornell.edu}} \qquad Christina Lee Yu\thanks{Email: \texttt{cleeyu@cornell.edu}} \\
	Cornell University}
\date{}
	\maketitle

	\begin{abstract}
We present an efficient algorithm for model-free episodic reinforcement learning on large (potentially continuous) state-action spaces. Our algorithm is based on a novel $Q$-learning policy with adaptive data-driven discretization. The central idea is to maintain a finer partition of the state-action space in regions which are frequently visited in historical trajectories, and have higher payoff estimates. We demonstrate how our adaptive partitions take advantage of the shape of the optimal $Q$-function and the joint space, without sacrificing the worst-case performance. In particular, we recover the regret guarantees of prior algorithms for continuous state-action spaces, which additionally require either an optimal discretization as input, and/or access to a simulation oracle. Moreover, experiments demonstrate how our algorithm automatically adapts to the underlying structure of the problem, resulting in much better performance compared both to heuristics and $Q$-learning with uniform discretization.\blfootnote{The code for the experiments is available at \url{https://github.com/seanrsinclair/AdaptiveQLearning}.}
	\end{abstract}
	\newpage
	\setcounter{tocdepth}{2}
	\tableofcontents
	\newpage

\section{Introduction}

Reinforcement learning (RL) is a natural model for systems involving real-time sequential decision making~\cite{sutton2018reinforcement}.  An agent interacts with a system having stochastic transitions and rewards, and aims to learn to control the system by exploring available actions and using real-time feedback.  This requires the agent to navigate the \textit{exploration exploitation trade-off}, between exploring unseen parts of the environment and exploiting historical high-reward actions.
In addition, many RL problems involve large state-action spaces, which makes learning and storing the entire transition kernel infeasible (for example, in memory-constrained devices). This motivates the use of \emph{model-free RL algorithms}, which eschew learning transitions and focus only on learning good state-action mappings. The most popular of these algorithms is \emph{Q-learning}~\cite{azar_2017,jin_is_2018, watkins1989learning}, which forms the focus of our work.

In even higher-dimensional state-spaces, in particular, continuous spaces, RL algorithms require embedding the setting in some metric space, and then using an appropriate discretization of the space. A major challenge here is in learning an ``optimal'' discretization, trading-off memory requirements and algorithm performance. 
{Moreover, unlike optimal quantization problems in `offline' settings (i.e., where the full problem is specified), there is an additional challenge of learning a good discretization and control policy when the process of learning itself must also be constrained to the available memory.}

This motivates our central question:
\begin{center}
    \textit{Can we modify $Q$-learning to learn a near-optimal policy while limiting the size of the discretization?}
\end{center}

Current approaches to this problem consider uniform discretization policies, which are either fixed based on problem primitives, or updated via a fixed schedule (for example, via a `doubling trick'). 
However, a more natural approach is to adapt the discretization over space and time in a data-driven manner. This allows the algorithm to learn policies which are not uniformly smooth, but adapt to the geometry of the underlying space.  Moreover, the agent would then be able to explore more efficiently by only sampling important regions. 

Adaptive discretization has been proposed and studied in the simpler multi-armed bandit settings~\cite{Kleinberg:2019:BEM:3338848.3299873, slivkins_contextual_2015}. The key idea here is to develop a non-uniform partitioning of the space, whose coarseness depends on the density of past observations. These techniques, however, do not immediately extend to RL, with the major challenge being in dealing with error propagation over periods. In more detail, in bandit settings, an algorithm's \emph{regret} (i.e., additive loss from the optimal policy) can be decomposed in straightforward ways, so as to isolate errors and control their propagation.  Since errors can propagate in complex ways over sample paths, naive discretization could result in over-partitioning suboptimal regions of the space (leading to over exploration), or not discretizing enough in the optimal region due to noisy samples (leading to loss in exploitation). Our work takes an important step towards tackling these issues. 

{Adaptive partitioning for reinforcement learning makes progress towards addressing the challenge of limited memory for real-time control problems. In particular, we are motivated by considering the use of RL for computing systems problems such as memory management, resource allocation, and load balancing \cite{Mao:2016, Comden:2019}. These are applications in which the process of learning the optimal control policy must be implemented directly on-chip due to latency constraints, leading to restrictive memory constraints. Adaptive partitioning finds a more ``efficient'' discretization of the space for the problem instance at hand, reducing the memory requirements. This could have useful applications to many control problems, even with discrete state spaces, as long as the model exhibits ``smoothness'' structure between nearby state-action pairs.}

\subsection{Our Contributions}
\label{section:contributions}

\begin{figure*}[t]
    \centering
    \includegraphics[width=\columnwidth]{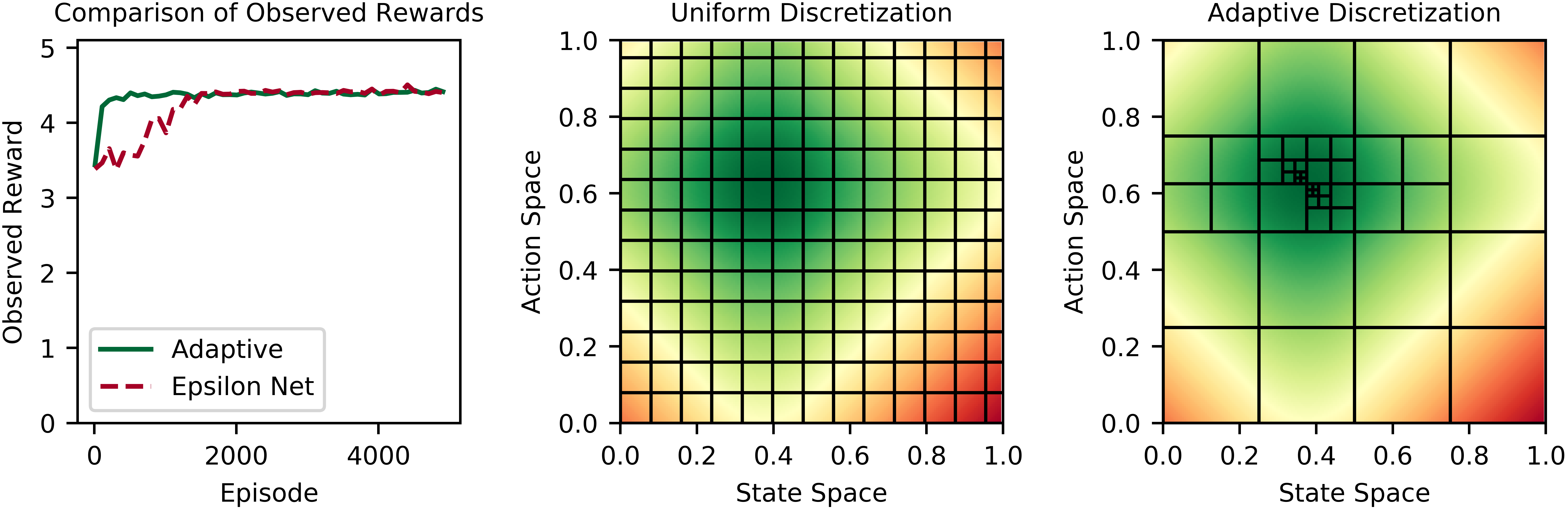}
    \caption{Comparison of the observed rewards and state-action space discretization under the uniform mesh $(\epsilon-$Net) algorithm~\cite{song_efficient_2019} and our adaptive discretization algorithm. Both algorithms were applied to the ambulance routing problem with shifting uniform arrival distributions (see Section~\ref{section:experiments_set_up}). The colors correspond to the relative $Q_h^\star$ value of the given state-action pair, where green corresponds to a higher value for the expected future rewards.  The adaptive algorithm converges faster to the optimal policy by keeping a coarser discretization of unimportant parts of the space and a fine discretization in important parts.}
    \label{fig:partition}
\end{figure*}

{As the main contribution of this paper, we design and analyze a \emph{$Q$-learning algorithm based on data-driven adaptive discretization of the state-action space}.  Our algorithm only requires that the underlying state-action space can be embedded in a compact metric space, and that the optimal $Q$-function is Lipschitz continuous with respect to the metric.  This setting is general, encompassing discrete and continuous state-action spaces, deterministic systems with natural metric structures, and stochastic settings with mild regularity assumptions on the transition kernels. Notably, our algorithm only requires access to the metric, unlike other algorithms which require access to simulation oracles.}  {In addition, our algorithm is \textit{model-free}, requiring less space and computational complexity to learn the optimal policy.}

From a theoretical perspective, we show that our adaptive discretization policy achieves near-optimal dependence of the regret on the covering dimension of the metric space.  {In particular, we prove that over $K$ episodes, our algorithm achieves a regret bound \begin{equation*}
R(K) = \tilde{O}\left(H^{5/2}K^{(d+1)/(d+2)}\right)
\end{equation*}
where $d$ is the covering dimension and $H$ is the number of steps in each episode. 
Moreover, for non-uniform metric spaces where the covering dimension is not tight, we show improved bounds which adapt to the geometry of the space.}

Our algorithm manages the trade-off between exploration and exploitation by careful use of event counters and upper-confidence bounds. 
It then creates finer partitions of regions which have high empirical rewards and/or are visited frequently, to obtain better $Q$-value estimates.  {This reduces the memory requirement of our RL algorithm, as it adapts the discretization of the space to learn the shape of the optimal policy. In addition, it requires less instance dependent tuning, as it only needs to tune the scaling of the confidence bounds. Implementing uniform mesh algorithms, in contrast, also requires tuning the mesh size.}

{We complement our theoretical guarantees with experiments, where we compare our adaptive $Q$-learning algorithm to the net based $Q$-learning algorithm from~\cite{song_efficient_2019} on two canonical problems with continuous spaces.}  Our algorithm achieves order-wise better empirical rewards compared to the uniform mesh algorithm, while maintaining a much smaller partition. Moreover, {the performance gap of our algorithm to the uniform mesh algorithm grow larger with increasing non-uniformity in the underlying $Q$-function}.  {As an example, in Figure~\ref{fig:partition} we demonstrate the performance of our algorithm and net based $Q$-learning for a canonical ambulance routing problem (cf. Section~\ref{section:experiments}).
We see that the adaptive discretization maintains different levels of coarseness across the space, resulting in a faster convergence rate to the optimal policy as compared to the uniform mesh algorithm.}

\subsection{Related Work}
\label{section:related}

Our work sits at the intersection of two lines of work -- on model-free $Q$-learning, and on adaptive zooming for multi-armed bandits. We highlight some of the closest work below; for a more extensive list of references, refer to~\cite{sutton2018reinforcement} (for RL) and~\cite{aleks2019introduction, bubeck2012regret} (for bandits).  There are two popular approaches to RL algorithms: model-free and model-based.

\textbf{Model-based.} Model-based algorithms are based on learning a model for the environment, and use this to learn an optimal policy~\cite{azar_2017, kakade_2003, Ortner2013, Ortner2012, lakshmanan15, osband2014}.  These methods converge in fewer iterations, but have much higher computation and space complexity. As an example, the UCBVI algorithm~\cite{azar_2017} requires storing an estimate of the transition kernel for the MDP, leading to a space complexity of $O(S^2 AH)$ (where $S$ is the number of states, $A$ is the number of actions, and $H$ the number of steps per episode).  Another algorithm for discrete spaces, UCRL \cite{NIPS2008_3401}, and its state-aggregation followup \cite{Ortner2012, Ortner2013}, maintain estimates of the transition kernel and use this for learning the optimal policy.  Other model-based approaches assume the optimal $Q$-function lies in a function class and hence can be found via kernel methods~\cite{yang2019sample, yang_reinforcement_2019}, or that the algorithm has access to an oracle which calculates distributional shifts~\cite{du2019provably}.

There has been some work on developing model-based algorithms for reinforcement learning on metric spaces \cite{osband2014, Ortner2013, lakshmanan15}.  The Posterior Sampling for Reinforcement Learning algorithm \cite{osband2014} uses an adaptation of Thompson sampling, showing regret scaling in terms of the Kolmogorov and eluder dimension. Other algorithms like UCCRL \cite{Ortner2013} and UCCRL-Kernel Density \cite{lakshmanan15} extend UCRL \cite{NIPS2008_3401} to continuous spaces by picking a uniform discretization of the state space and running a discrete algorithm on the discretization with a finite number of actions.  The regret bounds scale in terms of $K^{(2d+1)/(2d+2)}$. Our algorithm achieves better regret, scaling via $K^{(d+1)/(d+2)}$ and works for continuous action spaces.

\smallskip

\textbf{Model-free.}  Our work follows the model-free paradigm of learning the optimal policy {directly from the historical rewards and state trajectories without fitting the model parameters}; these typically have space complexity of $O(SAH)$, which is more amenable for high-dimensional settings or on memory constrained devices.  
The approach most relevant for us is the work on $Q$-learning first started by Watkins \cite{watkins1989learning} and later extended to the discrete model-free setting using upper confidence bounds by Jin et al.~\cite{jin_is_2018}.  They show a regret bound scaling via $\tilde{O}(H^{5/2}\sqrt{SAK})$ where $S$ is the number of states and $A$ is the number of actions.  

These works have since led to numerous extensions, including for infinite horizon time discounted MDPs~\cite{dong2019q}, continuous spaces via uniform $\epsilon$-Nets~\cite{song_efficient_2019}, {and deterministic systems on metric spaces using a function approximator~\cite{yang2019learning}.}  The work by Song et al. assumes the algorithm has access to an optimal $\epsilon-$Net as input, where $\epsilon$ is chosen as a function of the number of episodes and the dimension of the metric space \cite{song_efficient_2019}. Our work differs by adaptively partitioning the environment over the course of learning, only requiring access to a covering oracle as described in Section~\ref{section:assumptions}.  While we recover the same worst-case guarantees, we show an improved covering-type regret bound (Section~\ref{ssec:regret_detailed}).  The experimental results presented in Section~\ref{section:experiments} compare our adaptive algorithm to their net based $Q$-learning algorithm.  We also note that a similar algorithm to ours has been proposed in concurrent and independent work \cite{zhu2019stochastic}.  Our algorithm extends their results to general metric spaces, and we further provide a packing type regret guarantee and simulations.  

The work by Yang et al. for deterministic systems on metric spaces shows regret scaling via $\tilde{O}(HK^{d/(d+1)})$ where $d$ is the doubling dimension \cite{yang2019learning}.  As the doubling dimension is at most the covering dimension, they achieve better regret specialized to deterministic MDPs.  Our work achieves sub-linear regret for stochastic systems as well.

Lastly, there has been work on using $Q$-learning with nearest neighbors \cite{shah2018}.  Their setting considers continuous state spaces but finitely many actions, and analyzes the infinite horizon time-discounted case.  While the regret bounds are generally incomparable (as we consider the finite horizon non-discounted case), we believe that nearest-neighbor approaches can also be used in our setting.  Some preliminary analysis in this regards is in Section~\ref{section:discussion}.

\smallskip

\textbf{Adaptive Partitioning.} The other line of work most relevant to this paper is the literature on adaptive zooming algorithms for multi-armed bandits.  For a general overview on the line of work on regret-minimization for multi-armed bandits we refer the readers to~\cite{lattimore_2018, aleks2019introduction, bubeck2012regret}.  Most relevant to us is the work on bandits with continuous action spaces where there have been numerous algorithms which adaptively partition the space~\cite{ Kleinberg:2019:BEM:3338848.3299873, bubeck2009online}. Slivkins \cite{slivkins_contextual_2015} similarly uses data-driven discretization to adaptively discretize the space.  Our analysis supersedes theirs by generalizing it to reinforcement learning.  Recently, Wang et al.~\cite{wang2019stochbandits} gave general conditions for a partitioning algorithm to achieve regret bounds in contextual bandits.  Our partitioning can also be generalized in a similar way, and the conditions are presented in Section~\ref{section:discussion}.

\subsection{Outline of the paper}
Section~\ref{section:preliminaries} presents preliminaries for the model.  The adaptive $Q$-learning algorithm is explained in Section~\ref{section:algorithm} and the regret bound is given in Section~\ref{section:analysis}.  Section~\ref{section:sketch} presents a proof sketch of the regret bound. Section~\ref{section:experiments} presents numerical experiments of the algorithm.  Proofs are deferred to the appendix.

\section{Preliminaries}
\label{section:preliminaries}

In this paper, we consider an agent interacting with an underlying finite-horizon Markov Decision Processes (MDP) over $K$ sequential episodes, denoted $[K] = \{1, \ldots, K\}$.

The underlying MDP is given by a five-tuple $(\S, \A, H, \Pr, r)$ where $\S$ denotes the set of states, $\A$ the set of actions, and horizon $H$ is the number of steps in each episode. We allow the state-space $\S$ and action-space $\A$ to be large (potentially infinite).
Transitions are governed by a collection $\Pr = \{\Pr_h(\cdot \mid x,a) \mid h \in [H], x \in \S, a \in \A \}$ of transition kernels, where $\Pr_h(\cdot \mid x, a) \in \Delta(\S)$ gives the distribution of states if action $a$ is taken in state $x$ at step $h$, {and $\Delta(\S)$ denotes the set of probability distributions on $\S$}. Finally, the rewards are given by $r = \{r_h \mid h \in [H] \}$, where we assume each $r_h : \S \times \A \rightarrow [0, 1]$ is a deterministic reward function.~\footnote{{This assumption is made for ease of presentation, and can be relaxed by incorporating additional UCB terms for the rewards.}} 

A policy $\pi$ is a sequence of functions $\{ \pi_h \mid h \in [H] \}$ where each $\pi_h : \S \rightarrow \A$ is a mapping from a given state $x \in \S$ to an action $a \in \A$.
At the beginning of each episode $k$, the agent decides on a policy $\pi^k$, and is given an initial state $x_1^k \in \S$ (which can be arbitrary). In each step $h \in [H]$ in the episode, the agent picks the action $\pi^k_h(x_h^k)$, receives reward $r_h(x_h^k, \pi^k_h(x_k^k))$, and transitions to a random state $x_{h+1}^k$ determined by $\Pr_h\left(\cdot \mid x_h^k, \pi^k_h(x_h^k)\right)$.  
This continues until the final transition to state $x_{H+1}^k$, whereupon the agent chooses the policy $\pi^{k+1}$ for the next episode, and the process is repeated.

\subsection{Bellman Equations}

For any policy $\pi$, we use $V_h^\pi: \S \rightarrow \RR$ to denote the value function at step $h$ under policy $\pi$, i.e., $V_h^\pi(x)$ gives the expected sum of future rewards under policy $\pi$ starting from $x_h = x$ in step $h$ until the end of the episode,
$$V_h^\pi(x) := \Exp{\sum_{h'=h}^H r_{h'}(x_{h'}, \pi_{h'}(x_{h'})) ~\Big|~ x_h = x}.$$

We refer to $Q_h^\pi : \S \times \A \rightarrow \RR$ as the $Q$-value function at step $h$, where $Q_h^\pi(x,a)$ is equal to the sum of $r_h(x,a)$
and the expected future rewards received for playing policy $\pi$ in all subsequent steps of the episode after taking action $a_h = a$ at step $h$ from state $x_h = x$,
$$Q_h^\pi(x,a) := r_h(x,a) + \Exp{\sum_{h'=h+1}^H r_{h'}(x_{h'}, \pi_{h'}(x_{h'})) ~\Big|~ x_h = x, a_h = a}.$$

Under suitable conditions on $\S \times \A$ and the reward function, there exists an optimal policy $\pi^\star$ which gives the optimal value $V_h^\star(x) = \sup_\pi V_h^\pi(x)$ for all $x \in \S$ and $h \in [H]$.  For simplicity and ease of notation we denote $\Exp{V_{h+1}(\hat{x}) | x, a} := \mathbb{E}_{\hat{x} \sim \Pr_h(\cdot \mid x,a)}[V_{h+1}(\hat{x})]$ and set $Q^\star = Q^{\pi^\star}$.  We recall the Bellman equations which state that \cite{puterman_1994}:
\begin{equation}
\label{eqn:bellman_equation}
\left\{\begin{array}{l}{V_{h}^{\pi}(x)=Q_{h}^{\pi}\left(x, \pi_{h}(x)\right)} \\ {Q_{h}^{\pi}(x, a) = r_{h}(x, a) + \Exp{V_{h+1}^{\pi}(\hat{x}) \mid x, a}} \\ {V_{H+1}^{\pi}(x)=0 \quad \forall x \in \mathcal{S}.}\end{array}\right.
\end{equation}
The optimality conditions are similar, where in addition we have $V_h^\star(x) = \max_{a \in \A} Q_h^\star(x,a).$

The agent plays the game for $K$ episodes $k = 1, \ldots, K$.  For each episode $k$ the agent selects a policy $\pi^k$ and the adversary picks the starting state $x_1^k$.  The goal of the agent is to maximize her total expected reward $\sum_{k=1}^K V_1^{\pi^k}(x_1^k)$.  Similar to the benchmarks used in conventional multi-armed bandits, the agent instead attempts to minimize her \textit{regret}, the expected loss the agent experiences by exercising her policy $\pi^k$ instead of an optimal policy $\pi^\star$ in every episode.  This is defined via: 
\begin{equation}\label{equation:regret}R(K) = \sum_{k=1}^K \left( V_1^\star(x_1^k) - V_1^{\pi^k}(x_1^k) \right).\end{equation}

Our goal is to show that $R(K) \in o(K)$.  The regret bounds presented in the paper scale in terms of {$K^{(d+1)/(d+2)}$} where $d$ is a type of dimension of the metric space.  We begin the next section by outlining the relevant metric space properties.

\subsection{Packing and Covering}

Covering dimensions and other notions of dimension of a metric space will be a crucial aspect of the regret bound for the algorithm.  The ability to adaptively cover the space while minimizing the number of balls required will be a tenant in the adaptive $Q$-learning algorithm.  {Following the notation by Kleinberg, Slivkins, and Upfal \cite{Kleinberg:2019:BEM:3338848.3299873}, let $(X, \D)$ be a metric space, and $r > 0$ be arbitrary (in the regret bounds $r$ will be taken as the radius of a ball).}  We first note that the diameter of a set $B$ is $\diam{B} = \sup_{x,y \in B} \D(x,y)$ and a ball with center $x$ and radius $r$ is denoted by $B(x,r) = \{y \in X : \D(x,y) < r \}.$  We denote by $d_{max} = \diam{X}$ to be the diameter of the entire space.

\begin{definition}
An \textit{$r$-covering} of $X$ is a collection of subsets of $X$, which cover $X$, and each of which has diameter strictly less than $r$.  The minimal number of subsets in an $r$-covering is called the \textit{r-covering number} of $\P$ and is denoted by $N_r$.
\end{definition}
\begin{definition}
\label{definition:net}
{A set of points $\P$} is an \textit{$r$-packing} if the distance between any points in $\P$ is at least $r$.  An \textit{$r$-Net} of the metric space is an $r$-packing where $\bigcup_{x \in \P} B(x, r)$ covers the entire space $X$.
\end{definition}

As an aside, the Net based $Q$-learning algorithm requires an $\epsilon$-Net of the state action space $\S \times \A$ given as input to the algorithm \cite{song_efficient_2019}.

The last definition will be used for a more interpretable regret bound.  {It is also used to bound the size of the adaptive partition generated by the adaptive $Q$-learning algorithm.}  This is used as a dimension of general metric spaces.
\begin{definition}
The \textit{covering dimension} with parameter c induced by the packing numbers $N_r$ is defined as $$d_c = \inf \{ d \geq 0 \mid N_r \leq cr^{-d} \,\, \forall r \in (0, d_{max}]\}.$$
\end{definition}

For any set of finite diameter, the covering dimension is at most the doubling dimension, which is at most $d$ for any set in $(\mathbb{R}^d, \ell_p)$.  However, there are some spaces and metrics where the covering dimensions can be much smaller than the dimension of the entire space \cite{Kleinberg:2019:BEM:3338848.3299873}.

All of these notions of covering are highly related, in fact there are even more definitions of dimensions (including the doubling dimension) through which the regret bound can be formulated {(see Section 3 in \cite{Kleinberg:2019:BEM:3338848.3299873}).}
\subsection{Assumptions}
\label{section:assumptions}

In this section we state and explain the assumptions used throughout the rest of the paper.
We assume that there exists a metric $ \D: (\S \times \A)^2 \rightarrow \RR_+$ so that $\S \times \A$ is a metric space~\footnote{$\S \times \A$ can also be a product metric space, where $\S$ and $\A$ are metric spaces individually and the metric on $\S \times \A$ is a product metric.}.  To make the problem tractable we consider several assumptions which are common throughout the literature.
\begin{assumption}
\label{assumption:bounded}
	$\S \times \A$ has finite diameter with respect to the metric $\D$, namely that $$\diam{\S \times \A} \leq d_{max}.$$
\end{assumption}
This assumption allows us to maintain a partition of $\S \times \A$.  Indeed, for any point $(x, a)$ in $\S \times \A$, the ball centered at $(x, a)$ with radius $d_{max}$ covers all of $\S \times \A$.  {This is also assumed in other research on reinforcement learning in metric spaces where they set $d_{max} = 1$ by re-scaling the metric.}

\begin{assumption}
\label{assumption:Lipschitz}
	For every $h \in [H]$, $Q_h^\star$ is $L$-Lipschitz continuous with respect to $\D$, i.e. for all $(x,a), (x',a') \in \S \times \A,$ $$|Q_h^\star(x,a) - Q_h^\star(x', a')| \leq L\D((x,a), (x', a')).$$
\end{assumption}

Assumption~\ref{assumption:Lipschitz} implies that the $Q_h^\star$ value of nearby state action pairs are close.  This motivates the discretization technique as points nearby will have similar $Q_h^\star$ values and hence can be estimated together. {Requiring the $Q_h^\star$ function to be Lipschitz may seem less interpretable compared to making assumptions on the problem primitives; however, we demonstrate below that natural continuity assumptions on the MDP translate into this condition (cf. Appendix~\ref{sec:assumptions} for details).}

\begin{proposition}
	Suppose that the transition kernel is Lipschitz with respect to total variation distance and the reward function is Lipschitz continuous, i.e.
	\begin{align*}
	\norm{\Pr_h(\cdot \mid x,a) - \Pr_h(\cdot \mid x', a')}_{TV} & \leq L_1 \D((x,a), (x',a')) \text{ and } \\
	|r_h(x,a) - r_h(x', a')| & \leq L_2 \D((x,a), (x',a'))
	\end{align*}
	for all $(x,a), (x', a') \in \S \times \A$ and $h$.  Then it follows that $Q_h^\star$ is also $(2L_1H + L_2)$ Lipschitz continuous.
\end{proposition}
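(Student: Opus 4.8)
\emph{Proof proposal.} The plan is to start from the Bellman optimality equation in~\eqref{eqn:bellman_equation}, namely $Q_h^\star(x,a) = r_h(x,a) + \Exp{V_{h+1}^\star(\hat{x}) \mid x,a}$, and to bound $|Q_h^\star(x,a) - Q_h^\star(x',a')|$ for arbitrary pairs $(x,a),(x',a') \in \S \times \A$ by splitting it through the triangle inequality into a \emph{reward term} and a \emph{transition term}:
\[
|Q_h^\star(x,a) - Q_h^\star(x',a')| \le |r_h(x,a) - r_h(x',a')| + \left| \Exp{V_{h+1}^\star(\hat{x}) \mid x,a} - \Exp{V_{h+1}^\star(\hat{x}) \mid x',a'} \right|.
\]
The first term is immediately controlled by the reward-Lipschitz hypothesis, contributing $L_2\,\D((x,a),(x',a'))$. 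A useful structural observation is that no induction on $h$ is needed: the same fixed optimal value function $V_{h+1}^\star$ appears in both conditional expectations, so the whole argument reduces to comparing the integral of a single function against two different transition measures.

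For the transition term, the key point is that $V_{h+1}^\star$ need not be assumed Lipschitz — mere boundedness suffices, and total variation distance is exactly the notion that lets us exploit this. Since each reward lies in $[0,1]$ and the remaining rewards sum to at most $H$, we have $0 \le V_{h+1}^\star(x) \le H$ for all $x$, i.e. $\norm{V_{h+1}^\star}_\infty \le H$. Writing the two expectations as integrals of $V_{h+1}^\star$ against the probability measures $\Pr_h(\cdot \mid x,a)$ and $\Pr_h(\cdot \mid x',a')$, I would apply the standard inequality that for any bounded measurable $f$ and probability measures $\mu,\nu$,
\[
\left| \int f \, d\mu - \int f \, d\nu \right| \le 2\,\norm{f}_\infty\,\norm{\mu - \nu}_{TV},
\]
which follows from the Hahn--Jordan decomposition of the signed measure $\mu - \nu$, whose positive and negative parts each carry mass $\norm{\mu-\nu}_{TV}$ under the convention $\norm{\mu-\nu}_{TV} = \sup_A |\mu(A)-\nu(A)|$. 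Combined with the transition-Lipschitz hypothesis, this bounds the transition term by $2HL_1\,\D((x,a),(x',a'))$.

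Adding the two contributions yields $|Q_h^\star(x,a) - Q_h^\star(x',a')| \le (2L_1 H + L_2)\,\D((x,a),(x',a'))$, uniformly in $h$, as claimed. I expect the main obstacle — essentially the only nontrivial point — to be pinning down the factor of $2$ in the integral-versus-total-variation inequality, since it depends delicately on the chosen normalization of the total variation distance; matching the $2L_1 H$ in the statement is precisely what fixes the convention $\norm{\cdot}_{TV} = \sup_A|\mu(A)-\nu(A)|$ (the half-$L_1$ normalization) rather than the unnormalized one. A secondary, routine point is verifying that $V_{h+1}^\star$ is a well-defined bounded measurable function so the integrals are meaningful, which holds under the stated compactness and continuity assumptions.
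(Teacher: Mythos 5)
Your proposal is correct and follows essentially the same route as the paper's proof: both split $|Q_h^\star(x,a) - Q_h^\star(x',a')|$ via the Bellman equation into the reward term (bounded by $L_2\D$) and the difference of expectations of the bounded function $V_{h+1}^\star$ (with $\norm{V_{h+1}^\star}_\infty \le H$) against the two transition measures, yielding $2HL_1\D$ without any induction on $h$. The only cosmetic difference is that the paper derives the bound $\left|\int f\,d\mu - \int f\,d\nu\right| \le 2\norm{f}_\infty \norm{\mu-\nu}_{TV}$ by passing through Radon--Nikodym derivatives with respect to the common base measure $\tfrac{1}{2}(\mu+\nu)$ and the identity $\norm{\mu-\nu}_{L_1} = 2\norm{\mu-\nu}_{TV}$, whereas you invoke the Hahn--Jordan decomposition directly; your observation about the normalization convention pinning down the factor of $2$ matches the paper's usage.
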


This gives conditions without additional assumptions on the space $\S \times \A$.  One downside, however, is for deterministic systems.  Indeed, if the transitions in the MDP were deterministic then the transition kernels $\Pr_h$ would be point masses.  Thus, their total variation distance will be either $0$ or $1$ and will not necessarily be Lipschitz.

Another setting which gives rise to a Lipschitz $Q_h^\star$ function (including deterministic transitions) is seen when $\S$ is in addition a compact separable metric space with metric $d_\S$ and the metric on $\S \times \A$ satisfies $\D((x,a),(x',a)) \leq C d_\S(x, x')$ for some constant $C$ and for all $a \in \A$ and $x, x' \in \S$.  This holds for several common metrics on product spaces.  If $\A$ is also a metric space with metric $d_\A$ then common choices for the product metric,
\begin{align*}
    \D((x,a),(x',a')) & = d_\S(x,x') + d_\A(a,a') \\
    \D((x,a),(x',a')) & = \max\{d_\S(x,x'), d_\A(a,a') \} \\
    \D((x,a),(x',a')) & = \norm{(d_\S(x,x'), d_\A(a,a'))}_p
\end{align*}
all satisfy the property with constant $C = 1$.  With this we can show the following.

\begin{proposition}
Suppose that the transition kernel is Lipschitz with respect to the Wasserstein metric and the reward function is Lipschitz continuous, i.e.
\begin{align*}
    |r_h(x,a) - r_h(x', a')| & \leq L_1 D((x,a),(x',a')) \\
    d_W(\Pr_h(\cdot \mid x,a), \Pr_h(\cdot \mid x', a')) & \leq L_2 D((x,a),(x',a'))
\end{align*}
for all $(x,a), (x', a') \in \S \times \A$ and $h$ and where $d_W$ is the Wasserstein metric.  Then $Q_h^\star$ and $V_h^\star$ are both $(\sum_{i=0}^{H-h} L_1 L_2^{i})$ Lipschitz continuous.
\end{proposition}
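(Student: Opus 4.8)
The plan is to prove the two Lipschitz bounds simultaneously by backward induction on $h$, running from $h = H+1$ down to $h=1$. Write $\ell_h = \sum_{i=0}^{H-h} L_1 L_2^i$ for the target constant at step $h$. The inductive hypothesis I would carry is that $V_{h+1}^\star$ is $\ell_{h+1}$-Lipschitz with respect to the state metric $d_\S$ (with the convention that the empty sum $\ell_{H+1} = 0$, matching $V_{H+1}^\star \equiv 0$). The two analytic ingredients are the Kantorovich--Rubinstein dual characterization of the Wasserstein distance, and the fact that taking a pointwise maximum over $\A$ preserves Lipschitz continuity.

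For the step from $V_{h+1}^\star$ to $Q_h^\star$, I would start from the Bellman optimality equation $Q_h^\star(x,a) = r_h(x,a) + \Exp{V_{h+1}^\star(\hat x) \mid x, a}$ and bound the difference at two pairs $(x,a)$ and $(x',a')$ term by term. The reward difference is at most $L_1 \D((x,a),(x',a'))$ by hypothesis. For the expectation term, I would invoke Kantorovich--Rubinstein duality: since the expectation is an integral of the $\ell_{h+1}$-Lipschitz function $V_{h+1}^\star$ against the transition kernels, its difference is at most $\ell_{h+1}\, d_W(\Pr_h(\cdot\mid x,a), \Pr_h(\cdot\mid x',a')) \le \ell_{h+1} L_2 \D((x,a),(x',a'))$. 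Summing the two contributions yields Lipschitz constant $L_1 + L_2 \ell_{h+1}$, and a one-line index shift shows $L_1 + L_2 \ell_{h+1} = \ell_h$, closing this half of the step.

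To pass from $Q_h^\star$ to $V_h^\star$ I would use the optimality relation $V_h^\star(x) = \max_{a} Q_h^\star(x,a)$ together with the standard observation that if $Q_h^\star$ is $\ell_h$-Lipschitz in $\D$, then for fixed $x,x'$, evaluating at the maximizing action of $x$ (and symmetrically) gives $|V_h^\star(x) - V_h^\star(x')| \le \ell_h \sup_a \D((x,a),(x',a))$. Here the product-metric assumption $\D((x,a),(x',a)) \le C\, d_\S(x,x')$ with $C = 1$ converts this into $\ell_h$-Lipschitzness of $V_h^\star$ in $d_\S$, exactly the form needed to feed back into the next induction step. The base case is immediate since $V_{H+1}^\star \equiv 0$ is $0$-Lipschitz.

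I expect the only genuinely delicate point to be the bookkeeping across the two metrics: $V^\star$ is Lipschitz in $d_\S$ while $Q^\star$ and the rewards/kernels are Lipschitz in $\D$, and the Wasserstein distance on $\Delta(\S)$ is itself defined through $d_\S$. The whole argument hinges on the compatibility condition $\D((x,a),(x',a)) \le C\, d_\S(x,x')$ (with $C=1$) to move between them without inflating the constant, and on applying Kantorovich--Rubinstein with the correct Lipschitz constant for $V_{h+1}^\star$; once these are set up, the recursion $\ell_h = L_1 + L_2 \ell_{h+1}$ and its closed form are routine.
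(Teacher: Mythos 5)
Your proposal is correct and follows essentially the same route as the paper's proof: a backward induction combining the Kantorovich--Rubinstein dual form of $d_W$ (applied to the $\ell_{h+1}$-Lipschitz function $V_{h+1}^\star$) with the fact that $\sup_{a}$ preserves Lipschitzness under the product-metric compatibility $\D((x,a),(x',a)) \leq d_\S(x,x')$, yielding the recursion $\ell_h = L_1 + L_2\ell_{h+1}$. The only cosmetic difference is that the paper spells out $h=H+1$ and $h=H$ as separate base cases, while you absorb $h=H$ into the general step via $\ell_{H+1}=0$.
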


Because $\S$ is assumed to be a metric space as well, it follows that $V_h^\star$ is Lipschitz continuous in addition to $Q_h^\star$.  {We also note that the Wasserstein metric is always upper-bounded by the total variation distance, and so Lipschitz with respect to total variation implies Lipschitz with respect to the Wasserstein metric.}  Moreover, this allows Assumption~\ref{assumption:Lipschitz} to hold for deterministic MDPs with Lipschitz transitions.  Indeed, if $g_h(x,a) : \S \times \A \rightarrow \S$ denotes the deterministic transition from taking action $a$ in state $x$ at step $h$ {(so that $\Pr_h(x' \mid x,a) = \Ind{x' = x}$)} then using properties of the Wasserstein metric  we see the following \cite{gibbs2002choosing}. 
\begin{align*}
    d_W(\Pr_h(\cdot \mid x, a), \Pr_h(\cdot \mid x', a')) & = \sup \left\{ \left| \int f \, d\Pr_h(\cdot \mid x, a) - \int f \, d\Pr_h(\cdot \mid x', a')\right| : \norm{f}_L \leq 1 \right\} \\
    & = \sup \left\{ \left| f(g_h(x,a)) - f(g_h(x',a')) \right| : \norm{f}_L \leq 1 \right\} \\
    & \leq d_\S(g_h(x,a), g_h(x',a')) \leq L \D((x,a), (x',a'))
\end{align*}
where $\norm{f}_L$ is the smallest Lipschitz condition number of the function $f$ and we used the Lipschitz assumption of the deterministic transitions $g_h(x,a)$.

The {next} assumption is similar to that expressed in the literature on adaptive zooming algorithms for multi-armed bandits \cite{slivkins_contextual_2015, Kleinberg:2019:BEM:3338848.3299873}.  This assumes unrestricted access to the similarity metric $\D$.  The question of representing the metric, learning the metric, and picking a metric are important in practice, but beyond the scope of this paper \cite{nir2019nonparametric}. We will assume \textit{oracle access} to the similarity metric via specific queries.

\begin{assumption}
	The agent has oracle access to the similarity metric $\D$ via several queries that are used by the algorithm.
\end{assumption}

{The Adaptive $Q-$Learning algorithm presented (Algorithm~\ref{alg}) requires only a \textit{covering oracle} which takes a finite collection of balls and a set $X$ and either declares that they cover $X$ or outputs an uncovered point.  The algorithm then poses at most one oracle call in each round.  An alternative assumption is to assume the \textit{covering oracle} is able to take a set $X$ and value $r$ and output an $r$ packing of $X$.  In practice, this can be implemented in several metric spaces (e.g. Euclidean spaces).  Alternative approaches using arbitrary partitioning schemes (e.g. decision trees, etc) are presented in Section~\ref{section:discussion}.  Implementation details of the algorithm in this setting will be explained in Section~\ref{section:experiments}.}
\section{Algorithm}
\label{section:algorithm}

Our algorithm is parameterized by the number of episodes $K$ and a value $\pfail \in (0, 1)$ related to the high-probability regret bound.~\footnote{Knowledge of the number of episodes $K$ can be relaxed by allowing the algorithm to proceed in phases via the doubling trick.}  {This algorithm falls under ``Upper Confidence Bound'' algorithms popular in multi-armed bandits \cite{bubeck2012regret, lattimore_2018} as the selection rule is greedy with respect to estimates of the $Q_h^\star$ function.}  For each step $h = 1, \ldots, H$ it maintains a collection of balls $\P_h^k$ of $\S \times \A$ which is refined over the course learning for each episode $k \in [K]$.  Each element $B \in \P_h^k$ is a ball with radius $r(B)$.  Initially, when $k = 1$, there is only one ball in each partition $\P_h^1$ which has radius $d_{max}$ and contains the entire state-action space by Assumption~\ref{assumption:bounded}.  A sample of the partition resulting from our adaptive discretization of the space $\S = [0,1], \A = [0,1]$ can be seen in Figure~\ref{fig:partition} with the metric $\D((x,a),(x',a')) = \max \{|x-x'|, |a-a'|\}$.

\begin{algorithm*}[t!]
	\caption{Adaptive $Q$-Learning}
	\label{alg}
	\begin{algorithmic}[1]
		\Procedure{Adaptive $Q$-Learning}{$\S, \A, \D, H, K, \pfail$}
			\State Initiate $H$ partitions $\P_h^1$ for $h = 1, \ldots, H$ each containing a single ball with radius $d_{max}$ and $\Qhat{h}{1}$ estimate $H$
			\For{each episode $k \gets 1, \ldots K$}
				\State Receive initial state $x_1^k$
				\For{each step $h \gets 1, \ldots, H$}
					\State Select the ball $B_{sel}$ by the selection rule $B_{sel} = \argmax_{B \in \texttt{RELEVANT}_h^k(x_h^k)} \Qhat{h}{k}(B)$
					\State Select action $a_h^k = a$ for some $(x_h^k, a) \in \dom{B_{sel}}$
					\State Play action $a_h^k$, receive reward $r_h^k$ and transition to new state $x_{h+1}^k$
					\State Update Parameters: $t = n_h^{k+1}(B_{sel}) \gets n_h^{k}(B_{sel}) + 1$ \\ 
					\State $\Qhat{h}{k+1}(B_{sel}) \gets (1 - \alpha_t)\Qhat{h}{k}(B_{sel}) + \alpha_t(r_h^k + b(t) + \Vhat{h+1}{k}(x_{h+1}^k))$ where \\
					\State $\Vhat{h+1}{k}(x_{h+1}^k) = \min(H, \max_{B \in \text{RELEVANT}_{h+1}^k(x_{h+1}^k)} \Qhat{h+1}{k}(B))$ (see Section~\ref{section:algorithm})
					\If{$n_h^{k+1}(B_{sel}) \geq \left( \frac{d_{max}}{r(B_{sel})}\right)^2$}
					  \textproc{Split Ball}$(B_{sel}, h, k)$
					\EndIf
				\EndFor
			\EndFor
		\EndProcedure
		\Procedure{Split Ball}{$B$, $h$, $k$}
		    \State Set $B_1, \ldots B_n$ to be an $\frac{1}{2}r(B)$-packing of $\dom{B}$, and add each ball to the partition $\P_h^{k+1}$ (see Definition~\ref{definition:net})
		    \State Initialize parameters $\Qhat{h}{k+1}(B_i)$ and $n_h^{k+1}(B_i)$ for each new ball $B_i$ to inherent values from the parent ball $B$
		 \EndProcedure
	\end{algorithmic}
\end{algorithm*}

In comparison to the previous literature where the algorithm takes an optimal $\epsilon$-Net as input, our algorithm refines the partition $\P_h^k$ in a data-driven manner.  In each iteration, our algorithm performs three steps: select a ball via the \textit{selection rule}, \textit{update parameters}, and \textit{re-partition} the space.

For every episode $k$ and step $h$ the algorithm maintains two tables with size linear with respect to the number of balls in the partition $\P_h^k$.  For any ball $B \in \P_h^k$ we maintain an \textit{upper confidence value} $\Qhat{h}{k}(B)$ for the true $Q_h^\star$ value for points in $B$ and $n_h^k(B)$ for the number of times $B$ or its ancestors have been selected by the algorithm at step $h$ in episodes up to $k$.  This is incremented every time $B$ is played. The counter will be used to construct the bonus term in updating the $Q$ estimates and also for determining when to split a ball. We set the learning rate as follows:
\begin{equation}\label{eqn:learning_rate}
\alpha_t = \frac{H+1}{H+t}
\end{equation}
These learning rates are based on the the algorithm in~\cite{jin_is_2018}, and are chosen to satisfy certain conditions, captured via the following lemma (Lemma 4.1 from~\cite{jin_is_2018}).
\begin{lemma}
\label{lemma:lr}
Let $\alpha_t^i \triangleq a_i \prod_{j=i+1}^t (1 - \alpha_j)$. Then $\{\alpha_t\}_{i \leq t}$ satisfy:
\begin{enumerate}
	\item $\sum_{i=1}^t \alpha_t^i = 1$, $\max_{i \in [t]} \alpha_t^i \leq \frac{2H}{t}$ and $\sum_{i=1}^t (\alpha_t^i)^2 \leq \frac{2H}{t}$ for every $t \geq 1$
	\item $\frac{1}{\sqrt{t}} \leq \sum_{i=1}^t \frac{\alpha_t^i}{\sqrt{t}} \leq \frac{2}{\sqrt{t}}$ for every $t \geq 1$
	\item $\sum_{t=i}^\infty \alpha_t^i = 1 + \frac{1}{H}$ for every $i \geq 1$.
\end{enumerate}
\end{lemma}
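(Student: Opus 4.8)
The plan is to reduce all three items to the single recursion
$\alpha_t^i = (1-\alpha_t)\,\alpha_{t-1}^i$, valid for $1 \le i < t$, together with the boundary facts $\alpha_i^i = \alpha_i$ and $\alpha_1 = \frac{H+1}{H+1} = 1$. Both facts follow by inspecting the definition $\alpha_t^i = \alpha_i \prod_{j=i+1}^t (1-\alpha_j)$: the recursion comes from peeling off the $j=t$ factor, and $\alpha_i^i$ has an empty product. Throughout I use $\alpha_j = \frac{H+1}{H+j}$, so $1-\alpha_j = \frac{j-1}{H+j}$, which will make the needed cancellations transparent.

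First I would dispatch item 1. The normalization $\sum_{i=1}^t \alpha_t^i = 1$ follows by induction on $t$: the base case is $\alpha_1^1 = \alpha_1 = 1$, and in the inductive step I split off $\alpha_t^t = \alpha_t$ and apply the recursion to the rest, $\sum_{i=1}^{t-1}\alpha_t^i = (1-\alpha_t)\sum_{i=1}^{t-1}\alpha_{t-1}^i = (1-\alpha_t)$, so the total is $\alpha_t + (1-\alpha_t) = 1$. For the maximum bound I would show the weights are increasing in $i$: a direct computation gives $\frac{\alpha_t^{i+1}}{\alpha_t^i} = \frac{\alpha_{i+1}}{\alpha_i(1-\alpha_{i+1})} = \frac{H+i}{i} > 1$, so $\max_{i} \alpha_t^i = \alpha_t^t = \frac{H+1}{H+t}$, and the inequality $\frac{H+1}{H+t} \le \frac{2H}{t}$ reduces after cross-multiplying to $t \le 2H^2 + Ht$, which holds for $H \ge 1$. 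The sum-of-squares bound is then immediate from the previous two, since $\sum_i (\alpha_t^i)^2 \le (\max_i \alpha_t^i)\sum_i \alpha_t^i \le \frac{2H}{t}$.

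For item 2 I note that as literally written the summand carries the constant $\frac{1}{\sqrt t}$, so $\sum_{i=1}^t \frac{\alpha_t^i}{\sqrt t} = \frac{1}{\sqrt t}\sum_i \alpha_t^i = \frac{1}{\sqrt t}$ by item 1, which satisfies both inequalities at once; the substantive statement is the one with $\frac{1}{\sqrt i}$ in the summand, which I would prove by induction on $t$. Peeling off the top term gives $\sum_{i=1}^t \frac{\alpha_t^i}{\sqrt i} = \frac{\alpha_t}{\sqrt t} + (1-\alpha_t)\sum_{i=1}^{t-1}\frac{\alpha_{t-1}^i}{\sqrt i}$. For the lower bound, bounding the inner sum below by $\frac{1}{\sqrt{t-1}} \ge \frac{1}{\sqrt t}$ yields $\frac{\alpha_t}{\sqrt t} + \frac{1-\alpha_t}{\sqrt t} = \frac{1}{\sqrt t}$. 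The upper bound is the delicate one: bounding the inner sum above by $\frac{2}{\sqrt{t-1}}$ and substituting $1-\alpha_t = \frac{t-1}{H+t}$, the required inequality $\frac{\alpha_t}{\sqrt t} + \frac{2(1-\alpha_t)}{\sqrt{t-1}} \le \frac{2}{\sqrt t}$ reduces to $2\sqrt{t(t-1)} \le H + 2t - 1$, which follows from $2\sqrt{t(t-1)} \le 2\sqrt{(t-\tfrac12)^2} = 2t-1 \le H + 2t - 1$.

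The main obstacle is item 3, the infinite sum over $t$ for a fixed $i$, where the key is a telescoping representation. Writing $P_t = \prod_{j=i+1}^t(1-\alpha_j)$ so that $P_i = 1$, $P_{s+1} = \frac{s}{H+s+1}P_s$, and $\alpha_t^i = \alpha_i P_t$, I would introduce the auxiliary sequence $a_t = \frac{H+t}{H}P_t$ and verify $a_s - a_{s+1} = \frac{H+s}{H}P_s - \frac{s}{H}P_s = P_s$ for every $s \ge i$. Telescoping gives $\sum_{s=i}^{T}P_s = a_i - a_{T+1} = \frac{H+i}{H} - a_{T+1}$, and the factorial form $P_{T+1} = \frac{T!\,(H+i)!}{(i-1)!\,(H+T+1)!}$ shows $a_{T+1} = \frac{(H+i)!}{H\,(i-1)!}\cdot\frac{T!}{(H+T)!} \to 0$ for $H \ge 1$. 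Multiplying the limit $\sum_{s\ge i}P_s = \frac{H+i}{H}$ by $\alpha_i = \frac{H+1}{H+i}$ yields $\sum_{t=i}^\infty \alpha_t^i = \frac{H+1}{H} = 1 + \frac{1}{H}$. I expect identifying the correct auxiliary sequence $a_t$ and confirming that its tail vanishes to be the only genuinely nonroutine step; everything else is bookkeeping driven by the recursion.
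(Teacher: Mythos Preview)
Your proof is correct in all three parts. The paper itself does not prove this lemma; it quotes it verbatim as ``Lemma 4.1 from~\cite{jin_is_2018}'' and uses the stated properties as black boxes throughout the analysis, so there is no in-paper argument to compare against. Your approach---induction via the peeling recursion $\alpha_t^i = (1-\alpha_t)\alpha_{t-1}^i$ for items~1 and~2, and the telescoping auxiliary sequence $a_t = \frac{H+t}{H}P_t$ for item~3---is essentially the standard one and matches how Jin et al.\ prove it in their appendix. You also correctly handle the evident typo in item~2 (the summand should be $\alpha_t^i/\sqrt{i}$, not $\alpha_t^i/\sqrt{t}$), noting that the literal statement is trivially true while the intended one requires the inductive argument you give.
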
{These properties will be important in the proof and we will highlight them as they come up in the proof sketch.}

At a high level the algorithm proceeds as follows. In each episode $k$ and step $h$, a state $x_h^k$ is observed.  The algorithm selects an action according to the \textit{selection rule} by picking a relevant ball $B$ in $\P_h^k$ which has maximum upper confidence value $\Qhat{h}{k}(B)$ and taking an action in that ball.  Next, the algorithm updates the estimates for $\Qhat{h}{k}(B)$ by \textit{updating parameters} and lastly \textit{re-partitions} the state-action space.

In order to define the three steps (selection rule, updating parameters, and re-partitioning) we need to introduce some definitions and notation.  Fix an episode $k$, step $h$, and ball $B \in \P_h^k$.  Let $t = n_h^k(B)$ be the number of times $B$ or its ancestors have been selected by the algorithm at step $h$ in episodes up to the current episode $k$.  The \textit{confidence radius} or bonus of ball $B$ is
\begin{equation}\label{eqn:conf_radius}b(t) = 2 \sqrt{\frac{H^3 \log(4HK/\pfail)}{t}} + \frac{4Ld_{max}}{\sqrt{t}}.
\end{equation}
{The first term in Equation~\ref{eqn:conf_radius} corresponds to the uncertainty in the current estimate of the $Q$ value due to the stochastic nature of the transitions.  The second term corresponds to the discretization error by expanding the estimate to all points in the same ball.  If in addition the rewards were stochastic, there would be a third term to include the confidence in reward estimates.}

The \textit{domain} of a ball $B$ is a subset of $B$ which excludes all balls $B' \in \P_h^k$ of a strictly smaller radius $$\dom{B} = B \setminus \left( \cup_{B' \in \P_h^k : r(B') < r(B)} B'\right).$$  {The domain of the balls in $\P_h^k$ will cover the entire space $\S \times \A$ and be used in the algorithm as a partition of the space.}  A ball $B$ is then \textit{relevant} in episode $k$ step $h$ for a point $x \in \S$ if $(x, a) \in \dom{B}$ for some $a \in \A$.  The set of all relevant balls for a point is denoted by $\texttt{RELEVANT}_h^k(x)$.  In each round the algorithm selects one relevant ball $B$ for the current state $x_h^k$ and plays an action $a$  in that ball. After subsequently observing the reward $r_h = r(x_h^k, a)$  we increment $t = n_h^{k+1}(B) = n_h^k(B) + 1$, and perform the $Q$-learning update according to
\begin{align}
\label{eqn:update}
\Qhat{h}{k+1}(B) = (1 - \alpha_t)\Qhat{h}{k}(B) + \alpha_t(r_h^k + \Vhat{h+1}{k}(x_{new}) + b(t))
\end{align}
where $r_h^k$ is the observed reward, $x_{new}$ is the state the agent transitions to, and \begin{align}
\label{eqn:v_update}
\Vhat{h+1}{k}(x) = \min(H, \max_{B \in \texttt{RELEVANT}_{h+1}^k(x)} \Qhat{h+1}{k}(B))
\end{align} {is our estimate of the expected future reward for being in a given state.}  Let $(x_h^k, a_h^k)$ be the state action pair observed in episode $k$ step $h$ by the algorithm. Then the three rules are defined as follows
\begin{itemize}
	\item \textbf{selection rule}: Select a relevant ball $B$ for $x_h^k$ with maximal value of $\Qhat{h}{k}(B)$ (breaking ties arbitrarily).  Select any action $a$ to play such that $(x_h^k, a) \in \dom{B}$.  This is similar to the greedy ``upper confidence algorithms'' for multi-armed bandits.
	\item \textbf{update parameters}: Increment $n_h^k(B)$ by $1$, and update the $\Qhat{h}{k}(B)$ value for the selected ball given the observed reward according to Equation~\ref{eqn:update}.
	\item \textbf{re-partition the space}: Let $B$ denote the selected ball and $r(B)$ denote its radius. We split when $n_h^{k+1}(B) \geq \left(d_{max}/r(B)\right)^2$.  {We then cover $\dom{B}$ with new balls $B_1, \ldots, B_n$ which form an $\frac{1}{2}r(B)$-Net of $\dom{B}$.}  We call $B$ the \textit{parent} of these new balls and each child ball inherits all values from its parent.  {We then add the new balls $B_1, \ldots, B_n$ to $\P_h^k$ to form the partition for the next episode $\P_h^{k+1}$}.~\footnote{Instead of covering the parent ball each time it is ``split'', we can instead introduce children balls as needed in a greedy fashion until the parent ball is covered. When $B$ is first split, we create a single new ball with center $(x,a)$ and radius $\frac{1}{2}r(B)$ where $x$ and $a$ are the current state and action performed. At every subsequent time the parent ball $B$ is selected where the current state and action $(\hat{x},\hat{a}) \in \dom{B}$, then we create a new ball again with center $(\hat{x},\hat{a})$ and radius $\frac{1}{2}r(B)$, which removes this set from $\dom{B}$.}
\end{itemize}

See Algorithm~\ref{alg} for the pseudocode.  The full version of the pseudocode is in Appendix~\ref{sec:complete_algorithm}.  As a reference, see Table~\ref{table:notation} in the appendix for a list of notation used.

\section{Performance Guarantees}
\label{section:analysis}

{We provide three main forms of performance guarantees: uniform (i.e., \emph{worst-case}) regret bounds with arbitrary starting states, refined metric-specific regret bounds, 
and sample-complexity guarantees for learning a policy.}  {We close the section with a lower-bound analysis of these results, showing that our results are optimal up to logarithmic factors and a factor of $H^2$.}

\subsection{Worst-Case Regret Guarantees}
\label{ssec:regretbasic}

We provide regret guarantees for Algorithm~\ref{alg}.  First recall the definition of the covering number with parameter $c$ as \begin{equation}\label{eqn:covering_dimension}d_c = \inf \{d \geq 0 : N_r \leq cr^{-d} \,\,  \forall r \in (0, d_{max}] \}.
\end{equation}

We show the regret scales as follows:
\begin{theorem}
\label{thm:regret}
For any {any sequence of initial states $\{x_1^k \mid k\in[K]\}$}, and any
$\pfail \in (0, 1)$ with probability at least $1 - \pfail$ Adaptive $Q$-learning (Alg~\ref{alg}) achieves regret guarantee:
\begin{align*}
R(K) \leq & \, 3H^2 + 6 \sqrt{2H^3 K \log(4HK/\pfail)} \\
& + \gamma H K^{(d_c+1)/(d_c+2)}  \left( \sqrt{H^3 \log(4HK/\pfail)} + L d_{max} \right)\\ & = \tilde{O}\left(H^{5/2}K^{(d_c+1)/(d_c+2)}\right)  
\end{align*}
where $d_c$ is the covering number of $\S \times \A$ with parameter $c$ and the problem dependent constant {$$\gamma = 192c^{1/(d_c+2)}d_{max}^{-d_c/(d_c+2)}.$$}
\end{theorem}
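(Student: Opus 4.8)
The plan is to adapt the optimistic $Q$-learning analysis of Jin et al.~\cite{jin_is_2018} to the adaptive-partition setting, where the covering dimension $d_c$ and the Lipschitz geometry enter through the second (discretization) term of the bonus \eqref{eqn:conf_radius} and through a counting argument over the balls created at each scale. First I would unroll the update \eqref{eqn:update}. Fixing $h$, $k$ and a selected ball $B$ with $t = n_h^k(B)$, let $k_1 < \dots < k_t$ be the episodes at which $B$ or one of its ancestors was selected at step $h$. With the weights $\alpha_t^i$ from Lemma~\ref{lemma:lr}, the estimate takes the form $\Qhat{h}{k}(B) = \alpha_t^0 H + \sum_{i=1}^t \alpha_t^i(r_h^{k_i} + \Vhat{h+1}{k_i}(x_{h+1}^{k_i}) + b(i))$, and since $\alpha_1 = 1$ we have $\alpha_t^0 = 0$ for all $t \ge 1$, so the initialization drops out after the first visit.

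Next I would establish \emph{optimism}: on a high-probability event, $\Qhat{h}{k}(B) \ge Q_h^\star(x,a)$ for every $(x,a) \in \dom{B}$, proved by backward induction on $h$. Subtracting the Bellman identity \eqref{eqn:bellman_equation} from the unrolled form, the gap splits into (i) a martingale part $\sum_i \alpha_t^i(V_{h+1}^\star(x_{h+1}^{k_i}) - \Exp{V_{h+1}^\star \mid x_h^{k_i}, a_h^{k_i}})$, controlled by Azuma--Hoeffding using $\sum_i (\alpha_t^i)^2 \le 2H/t$ from Lemma~\ref{lemma:lr}(1) to get a deviation of order $\sqrt{H^3 \log(4HK/\pfail)/t}$; (ii) a discretization part, since each historical sample $(x_h^{k_i}, a_h^{k_i})$ lies in an ancestor ball whose radius $r_i$ obeys $r_i \le d_{max}/\sqrt{i}$ (else the split rule would already have fired), so by Assumption~\ref{assumption:Lipschitz} the Lipschitz error is at most $Lr_i$, and $\sum_i \alpha_t^i L d_{max}/\sqrt{i} \le 2Ld_{max}/\sqrt{t}$ by Lemma~\ref{lemma:lr}(2); and (iii) a nonnegative inductive future term $\sum_i \alpha_t^i(\Vhat{h+1}{k_i} - V_{h+1}^\star) \ge 0$. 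Parts (i) and (ii) are dominated by the two summands of $b(t)$, yielding optimism after a union bound over the at most $HK$ balls and counts, which produces the $\log(4HK/\pfail)$ factor.

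With optimism in hand, $R(K) \le \sum_k (\Vhat{1}{k}(x_1^k) - V_1^{\pi^k}(x_1^k))$. I would set $\delta_h^k = \Vhat{h}{k}(x_h^k) - V_h^{\pi^k}(x_h^k) \ge 0$, bound $\delta_h^k \le \Qhat{h}{k}(B_{sel}) - Q_h^{\pi^k}(x_h^k, a_h^k)$ using the selection rule, and unroll as above. Aggregating across the episodes that share a ball gives the one-step recursion $\sum_k \delta_h^k \le (1 + \tfrac1H)\sum_k \delta_{h+1}^k + \sum_k 2b(n_h^k(B_{sel})) + (\text{martingale in } x_{h+1}^k) + (\text{l.o.t.})$, where the factor $1 + \tfrac1H$ comes from $\sum_{t \ge i}\alpha_t^i = 1 + \tfrac1H$ in Lemma~\ref{lemma:lr}(3) and compounds to $(1+\tfrac1H)^H \le e$ over the $H$ steps. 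Summing over $h$ and $k$ yields three contributions: the martingale terms over all $HK$ steps concentrate to $6\sqrt{2H^3 K \log(4HK/\pfail)}$ by Azuma--Hoeffding; a lower-order clipping/initialization term bounded by $3H^2$; and the accumulated bonus $\sum_{k,h} 2b(n_h^k(B_{sel}))$.

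The crux, which I expect to be the hardest step, is bounding $\sum_k 1/\sqrt{n_h^k(B_{sel})}$, since $b(t) = (2\sqrt{H^3\log(4HK/\pfail)} + 4Ld_{max})/\sqrt{t}$. I would group selections by scale: a ball of radius $r = d_{max}2^{-\ell}$ splits once its count reaches $(d_{max}/r)^2 = 4^\ell$, so while active its count lies in $[4^{\ell-1}, 4^\ell]$, giving $1/\sqrt{t} \le 2^{1-\ell}$; it is selected at most $4^\ell$ times; and the number of such balls is at most the packing number $N_r \le c\,r^{-d_c} = c\,d_{max}^{-d_c}2^{\ell d_c}$, so level $\ell$ contributes $O(c\,d_{max}^{-d_c}2^{\ell(d_c+1)})$. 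Summing over levels coarser than a cutoff $\ell^\star$ gives $O(c\,d_{max}^{-d_c}2^{\ell^\star(d_c+1)})$, while for finer levels I bound $1/\sqrt{t} \le 2^{-\ell^\star}$ against the total budget of at most $K$ selections, contributing $O(K2^{-\ell^\star})$. Choosing $2^{\ell^\star} = (K c^{-1}d_{max}^{d_c})^{1/(d_c+2)}$ balances the two terms and yields $\sum_k 1/\sqrt{n_h^k(B_{sel})} = O(c^{1/(d_c+2)}d_{max}^{-d_c/(d_c+2)}K^{(d_c+1)/(d_c+2)})$; multiplying by the bonus coefficient, by the $H$ from the sum over steps, and by the $e$ factor, and tracking constants, produces the main term with $\gamma = 192\,c^{1/(d_c+2)}d_{max}^{-d_c/(d_c+2)}$. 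The remaining work is purely bookkeeping: propagating the inherited counters through splits in the optimism step and collecting the $(1+\tfrac1H)^H$ factor so that all constants land as stated.
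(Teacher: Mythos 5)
Your proposal follows essentially the same route as the paper's proof: unroll the $Q$-update via the $\alpha_t^i$ weights, establish optimism by splitting the gap into a martingale part (Azuma--Hoeffding with $\sum_i(\alpha_t^i)^2\le 2H/t$), a Lipschitz discretization part using $r(B_h^{k_i})\le d_{max}/\sqrt{i}$ from the split rule, and a nonnegative inductive term; then a step-$h$ recursion with the $(1+\tfrac1H)$ factor, and finally the scale-by-scale packing count with balls of radius $d_{max}2^{-\ell}$ having counts in $[4^{\ell-1},4^\ell]$, balanced at a cutoff $r_0=\Theta(K^{-1/(d_c+2)})$ --- this is exactly the paper's Lemmas on the recursive difference, the ball-count invariants, and the bound on $\sum_k\beta_{n_h^k}$. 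The only quibble is that the per-sample Lipschitz error should be $2Lr_i$ (the diameter of the common ancestor ball containing both $(x,a)$ and the historical sample) rather than $Lr_i$, but the bonus term $4Ld_{max}/\sqrt{t}$ still dominates the resulting $4Ld_{max}/\sqrt{t}$ sum, so nothing breaks.
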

The regret bound in Theorem~\ref{thm:regret} has three main components. The first term $3H^2$ corresponds to the regret due to actions selected in the first episode and its subsequent impact on future episodes, where we loosely initialized the upper confidence value $\Qhat{}{}$ of each state action pair as $H$.
The second term accounts for the stochastic transitions in the MDP from concentration inequalities.  The third {term} is the discretization error, and comes from the error in discretizing the state action space by the adaptive partition.  As the partition is adaptive, this term scales in terms of the covering number of the entire space. Setting $\pfail = K^{-1/(d_c+2)}$, we get the regret of Algorithm~\ref{alg} as $$\tilde{O}\left(H^{5/2}K^{(d_c+1)/(d_c+2)}\right).$$

{This matches the regret bound from prior work when the metric space $\S \times \A$ is taken to be a subset of $[0,1]^d$, such that the covering dimension is simply $d$, the dimension of the metric space \cite{song_efficient_2019}.  For the case of a discrete state-action space with the discrete metric $\D((s,a), (s',a')) = \mathbbm{1}[s=s', a=a']$, which has a covering dimension $d_c = 0$, we recover the $\tilde{O}(\sqrt{H^5 K})$ bound from discrete episodic RL \cite{jin_is_2018}.}

Our experiments in Section~\ref{section:experiments} shows that the adaptive partitioning saves on time and space complexity in comparison to the fixed $\epsilon-$Net {algorithm} \cite{song_efficient_2019}.  Heuristically, our algorithm achieves better regret while reducing the size of the partition.  We also see from experiments that the regret seems to scale in terms of the covering properties of the shape of the optimal $Q_h^\star$ function {instead of the entire space} similar to the results on {contextual bandits in metric spaces} \cite{slivkins_contextual_2015}.

Previous work on reinforcement learning in metric spaces give lower bounds on episodic reinforcement learning on metric spaces and show that any algorithm must achieve regret where $H$ scales as $H^{3/2}$ and $K$ in terms of $K^{(d_c + 1)/(d_c+2)}$.  Because our algorithm achieves worst case regret $\tilde{O}(K^{(d_c+1)/(d_c+2)} H^{5/2})$ this matches the lower bound up to polylogarithmic factors and a factor of $H$ \cite{song_efficient_2019, slivkins_contextual_2015}.  {More information on the lower bounds is in Section~\ref{ssec:lb}.}

\subsection{Metric-Specific Regret Guarantees}
\label{ssec:regret_detailed}
The regret bound formulated in Theorem~\ref{thm:regret} is a covering guarantee similar to prior work on bandits in metric spaces \cite{Kleinberg:2019:BEM:3338848.3299873, slivkins_contextual_2015}.  This bound suffices for metric spaces where the inequality in the definition of the covering dimensions $N_r \leq cr^{-d}$ is tight; {a canonical example is when $\S \times \A = [0,1]^d$ under the Euclidean metric, where the covering dimension scales as $N_r = \frac{1}{r^d}$}.

{More generally, the guarantee in Theorem~\ref{thm:regret} arises from a more refined bound, wherein we replace the $\gamma K^{(d_c+1)/(d_c+2)}$ factor in the third term of the regret with} 
\begin{equation*}
\inf_{r_0 \in (0, d_{max}]} \left(\frac{Kr_0}{d_{max}} + \sum_{\substack{r = d_{max}2^{-i} \\ r \geq r_0}} N_r \frac{d_{max}}{r} \right).    
\end{equation*}
The bound in Theorem~\ref{thm:regret} is obtained by taking $r_0 = \Theta\left( K^{\frac{-1}{d_c+2}}\right)$ inside of the infimum.

This regret bound gives a packing $N_r$ type guarantee.  Discussion on the scaling is deferred to Section~\ref{ssec:lb}.

\subsection{Policy-Identification Guarantees}
\label{ssec:PAC}

{We can also adapt our algorithm to give sample-complexity guarantees on learning a policy of desired quality. For such a guarantee, assuming that the starting states are adversarially chosen is somewhat pessimistic. A more natural framework here is that of probably approximately correct (PAC) guarantees for learning RL policies~\cite{watkins1989learning}.
Here, we assume that in each episode $k\in[K]$, we have a random initial state $X_1^k \in \S$ drawn from some fixed distribution $F_1$, and try to find the minimum number of episodes needed to find an $\epsilon$-optimal policy $\pi$ with probability at least $1 - \pfail$. 
}

Following similar arguments as~\cite{jin_is_2018}, we can show that
\begin{theorem}
For $K = \tilde{O}\left( \left(H^{5/2}/\pfail\epsilon \right)^{d_c+2} \right)$ (where $d_c$ is the covering dimension with parameter $c$), consider a policy $\pi$ chosen uniformly at random from $\pi_1, \ldots, \pi_K$. Then, for initial state $X\sim F_1$, with probability at least $1-\pfail$, the policy $\pi$ obeys
\begin{equation*}
{V_1^\star(X) - V_1^\pi(X)} \leq \epsilon.
\end{equation*}
\end{theorem}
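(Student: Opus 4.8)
The plan is to carry out a standard online-to-batch (regret-to-PAC) conversion, exploiting the fact that the guarantee of Theorem~\ref{thm:regret} holds for \emph{any} sequence of initial states and therefore in particular for the i.i.d.\ draws $X_1^k \sim F_1$. First I would run Algorithm~\ref{alg} for $K$ episodes against the stochastic initial states, recording the policies $\pi^1, \dots, \pi^K$ and the per-episode suboptimality $\Delta^k := V_1^\star(X_1^k) - V_1^{\pi^k}(X_1^k)$, so that $R(K) = \sum_{k=1}^K \Delta^k$.

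The key structural observation is that $\pi^k$ is measurable with respect to the history $\mathcal{F}_{k-1}$ preceding episode $k$, while $X_1^k$ is a fresh draw from $F_1$ independent of $\mathcal{F}_{k-1}$. Writing $g(\pi) := \mathbb{E}_{X \sim F_1}[V_1^\star(X) - V_1^\pi(X)] \ge 0$ for the true suboptimality of a fixed policy, conditioning on $\mathcal{F}_{k-1}$ gives $\mathbb{E}[\Delta^k \mid \mathcal{F}_{k-1}] = g(\pi^k)$. Hence, if $\pi$ is drawn uniformly from $\{\pi^1,\dots,\pi^K\}$ and $X \sim F_1$ is an independent test state, the train/test expectations coincide exactly:
\[
\mathbb{E}\left[ V_1^\star(X) - V_1^\pi(X) \right] = \frac{1}{K}\sum_{k=1}^K \mathbb{E}[g(\pi^k)] = \frac{1}{K}\,\mathbb{E}[R(K)].
\]
No martingale concentration is needed for this step, since both sides average the same quantity $g(\pi^k)$.

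It remains to convert the high-probability regret bound into a bound on $\mathbb{E}[R(K)]$. Because rewards lie in $[0,1]$ we always have $R(K) \le HK$, so running the algorithm with internal failure parameter $\pfail_0$ gives
\[
\mathbb{E}[R(K)] \le \tilde{O}\big(H^{5/2}K^{(d_c+1)/(d_c+2)}\big) + \pfail_0 \cdot HK .
\]
Choosing $\pfail_0 = \Theta(\epsilon\pfail/H)$ keeps the failure term of the same order while costing only logarithmic factors (the regret bound depends on $\pfail_0$ through $\log(1/\pfail_0)$), so dividing by $K$ yields average expected suboptimality $\tilde{O}(H^{5/2}K^{-1/(d_c+2)})$. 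Plugging in $K = \tilde{O}\big((H^{5/2}/(\pfail\epsilon))^{d_c+2}\big)$ forces this below $\epsilon\pfail$, and applying Markov's inequality to the nonnegative variable $V_1^\star(X) - V_1^\pi(X)$ gives $\Pr[V_1^\star(X) - V_1^\pi(X) > \epsilon] \le \mathbb{E}[\,\cdot\,]/\epsilon \le \pfail$, which is the claim.

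The individual steps are routine; the part demanding the most care is the probability bookkeeping, namely reconciling the two distinct roles of the failure probability (the internal $\pfail_0$ governing the regret event versus the target $\pfail$ appearing in the PAC statement) and checking that shrinking $\pfail_0$ to $\Theta(\epsilon\pfail/H)$ leaves the stated order of $K$ unchanged. I would also verify carefully the independence $\pi^k \perp X_1^k$, as this is precisely what makes the averaging identity exact rather than merely approximate.
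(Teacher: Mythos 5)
Your proof is correct and follows essentially the same online-to-batch route as the paper's: average the per-episode suboptimality over the uniformly chosen $\pi^k$, bound it by the regret via Theorem~\ref{thm:regret}, and apply Markov's inequality. If anything you are more careful than the paper's own proof, which applies the high-probability regret bound directly to $\sum_{k=1}^K \Exp{V_1^\star(X) - V_1^{\pi^k}(X)}$ without spelling out either the independence of $\pi^k$ from $X_1^k$ or the high-probability-to-expectation conversion that you make explicit.
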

\noindent Note that in the above guarantee, both $X$ and $\pi$ are random.  The proof is deferred to Appendix~\ref{app:pac_proof}

\subsection{Lower Bounds}
\label{ssec:lb}

Existing lower bounds for this problem have been established previously in the discrete tabular setting \cite{jin_is_2018} and in the contextual bandit literature \cite{slivkins_contextual_2015}.

Jin et al. \cite{jin_is_2018} show the following for discrete spaces.
\begin{theorem}[Paraphrase of Theorem 3 from \cite{jin_is_2018}]
For any algorithm, there exists an $H$-episodic discrete MDP with $S$ states and $A$ actions such that for any $K$, the algorithm's regret is $\Omega(H^{3/2}\sqrt{SAK})$.
\end{theorem}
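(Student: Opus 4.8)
The plan is to prove this as an \emph{information-theoretic minimax lower bound}: exhibit a family of MDPs that are statistically hard to tell apart, and argue that no algorithm can have small regret against every member. (Since the statement is a paraphrase of a known result, I reconstruct the standard argument rather than invoke it.) Writing $T = KH$ for the total number of steps, the target $\Omega(H^{3/2}\sqrt{SAK})$ is the same as $\Omega(\sqrt{H^2 SAT})$. I would build a \emph{time-inhomogeneous} MDP in which, on each episode, forced (choice-free) transitions route the agent to a single ``decision'' state whose identity $(h,s)$ is spread roughly uniformly over the $\Theta(SH)$ possible (step, state) pairs. At the decision state the agent faces an $A$-armed bandit: one planted action $a^\star(h,s)$ transitions to a high-reward absorbing region with probability $\tfrac12 + \Delta$, while every other action does so with probability $\tfrac12$. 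The region pays reward $1$ for each of the remaining $\Theta(H)$ steps, so the suboptimality of a wrong action is $\Theta(\Delta H)$, i.e. the single decision is \emph{amplified over the remaining horizon}. The family is indexed by the assignment $\{a^\star(h,s)\}$, with a ``null'' instance ($\Delta=0$) as the common reference.

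First I would argue that any low-regret algorithm must, on a constant fraction of episodes, play the planted action at the decision state it visits; otherwise each such episode incurs $\Theta(\Delta H)$ regret. Thus small regret forces the algorithm to \emph{identify} $a^\star(h,s)$ at a constant fraction of the $\Theta(SH)$ gadgets. I would then control distinguishability by KL divergence: each visit to a gadget yields one $\mathrm{Ber}(\tfrac12 \pm \Delta)$ observation contributing $O(\Delta^2)$ to the KL between the null and a planted instance, while all routing transitions are common to the whole family and leak no information. Because each episode exercises only one decision, a fixed gadget is visited only $O(K/(SH))$ times in expectation, so the total information about its planted arm is $O\!\left(\tfrac{K}{SH}\Delta^2\right)$. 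By a per-gadget Le~Cam two-point argument (or an Assouad/Fano aggregation over the $\Theta(SH)$ independent gadgets, with a uniform prior over the planted arm to produce the $A$ factor), no algorithm can identify a constant fraction of gadgets once $\tfrac{K}{SH}\Delta^2 \lesssim A$, i.e. once $\Delta \lesssim \sqrt{ASH/K}$.

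Finally I would take $\Delta = \Theta(\sqrt{ASH/K})$, the largest gap that is still unidentifiable. Then a constant fraction of episodes play a wrong action at their decision state, each costing $\Theta(\Delta H)$, so
\[
R(K) \;=\; \Omega(K\,\Delta H) \;=\; \Omega\!\left(K H \sqrt{ASH/K}\right) \;=\; \Omega\!\left(H^{3/2}\sqrt{SAK}\right),
\]
which is the claimed bound.

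The hard part will be pinning down the horizon exponent $3/2$. The naive construction, in which all $H$ steps are independent $\Delta$-gap bandits, only yields $\Omega(\sqrt{H^2 SAK})$, since then each gadget is sampled $\Theta(K/S)$ times rather than $\Theta(K/(SH))$. The extra $\sqrt{H}$ requires the construction to \emph{simultaneously} (i) amplify one decision's suboptimality to $\Theta(\Delta H)$ over the remaining horizon and (ii) dilute the per-gadget sample count to $\Theta(K/(SH))$ by placing the single per-episode decision at a random depth via time-inhomogeneity; balancing amplification against this information scarcity is exactly what turns $H^{1/2}$ into $H^{3/2}$. Secondary care is needed to ensure the routing transitions carry no side information, and to carry out the Assouad/Fano aggregation so that the $S$, $A$, and $H$ factors all appear under the square root at once.
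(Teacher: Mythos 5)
The paper does not actually prove this statement: it is imported verbatim (as a ``paraphrase of Theorem 3'') from Jin et al.\ and is used only to discuss the tightness of the upper bound in Theorem~4.1, so there is no in-paper proof to compare against. Your reconstruction is a sound sketch of the standard minimax argument and, importantly, it identifies the correct mechanism for the $H^{3/2}$ exponent: a single per-episode decision whose location is diluted over $\Theta(SH)$ time-inhomogeneous (step, state) cells, so that each gadget receives only $\Theta(K/(SH))$ samples, combined with value amplification of $\Theta(\Delta H)$ over the remaining horizon. The arithmetic checks out: the per-gadget bandit bound forces $\Delta \lesssim \sqrt{ASH/K}$, and $K\cdot \Delta H$ then gives $H^{3/2}\sqrt{SAK}$; your observation that the naive ``bandit at every step'' construction loses a $\sqrt{H}$ factor is also correct. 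Two points would need care in a full write-up. First, the Assouad/averaging step should be stated as ``for every algorithm there is an instance (equivalently, a uniform prior over planted arms) under which a constant fraction of pulls are wrong in expectation,'' not as a per-instance claim; as sketched you gesture at this but the quantifiers matter. Second, the state budget: the routing chains, the $\Theta(S)$ decision states per level, and the absorbing high/low-reward regions must all fit inside $S$ states per step of a time-inhomogeneous MDP, which costs only constant factors but requires the choice-free routing to be genuinely uninformative (identical transition laws across the whole instance family). With those details pinned down, your argument establishes the cited bound.
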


This shows that our scaling in terms of $H$ is off by a linear factor.  As analyzed in \cite{jin_is_2018}, we believe that using Bernstein's inequality instead of Hoeffding's inequality to better manage the variance of the $\Qhat{h}{k}(B)$ estimates of a ball will allow us to recover $H^{2}$ instead of $H^{5/2}$.

Existing lower bounds for learning in continuous spaces have been established in the contextual bandit literature.  A contextual bandit instance is characterized by a context space $\S$, action space $\A$, and reward function $r: \S \times \A \rightarrow [0,1]$. The agent interacts in rounds, where in each round the agent observes an arbitrary initial context $x$, either drawn from a distribution or specified by an adversary,
and the agent subsequently picks an action $a \in \A$, and receives reward $r(x,a)$.  This is clearly a simplification of an episodic MDP where the number of steps $H=1$ and the transition kernel $\Pr_h(\cdot \mid x, a)$ is independent of $x$ and $a$.  Lower bounds presented in \cite{Kleinberg:2019:BEM:3338848.3299873, slivkins_contextual_2015} show that the scaling in terms of $N_r$ in this general regret bound is optimal.

\begin{theorem}[Paraphrase of Theorem 5.1 from \cite{slivkins_contextual_2015}]
Let $(\S \times \A, \D)$ be an arbitrary metric space satisfying the assumptions in Section~\ref{section:assumptions} with $d_{max} = 1$.  Fix an arbitrary number of episodes $K$ and a positive number $R$ such that $$R \leq C_0 \inf_{r \in (0, 1)} \left( r_0K + \sum_{\substack{r = 2^{-i} \\ r \geq r_0}} \frac{N_r}{r} \log(K)\right).$$  Then there exists a distribution $\mathcal{I}$ over problem instances such that for any algorithm, it holds that $\mathbb{E}_{\mathcal{I}}(R(K)) \geq \Omega(R / \log(K))$.
\end{theorem}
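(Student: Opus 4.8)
The plan is to first reduce the statement to the contextual bandit setting. Setting $H=1$ makes the transition kernel $\Pr_1(\cdot\mid x,a)$ irrelevant, so an episodic MDP degenerates to a contextual bandit: in each of the $K$ rounds the agent observes a context $x_1^k\in\S$, plays an action $a\in\A$, and receives reward $r_1(x_1^k,a)$. Under Assumption~\ref{assumption:Lipschitz}, $Q_1^\star=r_1$ is $L$-Lipschitz on the joint space $(\S\times\A,\D)$, so it suffices to build a distribution $\mathcal I$ over $L$-Lipschitz, $[0,1]$-valued reward functions on $(\S\times\A,\D)$ (with $d_{max}=1$) that forces the claimed regret.

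Next I would build a multi-scale packing hierarchy. For each dyadic scale $r=2^{-i}$ with $r\ge r_0$, fix a maximal $r$-packing of $\S\times\A$; by the definition of $N_r$ this has $\Theta(N_r)$ points, and these packings organize into a tree in which a ball at scale $r$ is parent to the scale-$r/2$ packing points inside it. On this tree I plant bumps: for a random subset of nodes, chosen independently down the tree, add a Lipschitz bump supported on that node's ball of height $\Theta(r)$ at scale $r$. The height is forced to be $\propto r$ precisely to respect $L$-Lipschitzness, and the superposition over scales stays within $[0,1]$ because the heights decay geometrically. The mean reward of an instance is the sum of its planted bumps, and the optimal action for a context is the deepest planted bump containing it.

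The core is the information-theoretic argument. Fix a scale $r$ and a ball $B$ at that scale. Two instances differing only in whether $B$'s bump is present differ in mean reward by $\Theta(r)$ on $B$; by a Le Cam two-point argument (Pinsker plus a divergence bound), any algorithm pulling arms inside $B$ fewer than $\Omega(r^{-2})$ times cannot determine the sign of this gap with constant probability, and hence, averaged over the prior, suffers $\Omega(r)$ regret per round it acts suboptimally in $B$. Thus the algorithm either spends $\Omega(r^{-2})$ pulls on $B$, wasting $\Omega(r^{-2}\cdot r)=\Omega(1/r)$ regret on the $\Theta(r)$-suboptimal siblings while localizing the bump, or it fails to localize and pays $\Omega(r)$ per round. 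Summing over the $\Theta(N_r)$ balls at scale $r$ gives $\Omega(N_r/r)$, with the $\log K$ entering through the confidence level needed to make the two-point bound hold uniformly over the $\le K$ rounds; summing over all $r_0\le r<1$ yields $\Omega\!\left(\sum_{r\ge r_0}\tfrac{N_r}{r}\log K\right)$. Even after resolving everything down to scale $r_0$, committing to a ball of radius $r_0$ leaves a residual per-round gap $\Omega(r_0)$, contributing $\Omega(r_0 K)$. Optimizing over the cutoff $r_0$ reproduces the bound $R\le C_0\inf_{r_0}\big(r_0 K+\sum_{r\ge r_0}\tfrac{N_r}{r}\log K\big)$, and dividing out the $\log K$ absorbed in the confidence level gives $\Exp{R(K)}\ge\Omega(R/\log K)$.

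The main obstacle I anticipate is making the multi-scale argument telescope correctly. A single pull informs the algorithm about the planted bump at \emph{every} scale whose ball contains that point, so the per-scale Le Cam estimates must be combined without double-counting; this requires a careful chain-rule decomposition of the KL divergence between the transcript laws of neighboring instances, recursed along the tree and subject to the global budget $\sum_r(\text{pulls at scale }r)\le K$. One must simultaneously verify that the superposed bumps at all scales remain a valid $[0,1]$-valued, $L$-Lipschitz reward function. Once this accounting and the validity of the construction are established, the remaining pieces — the two-point Pinsker estimate and the final optimization over $r_0$ — are routine.
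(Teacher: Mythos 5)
First, note that the paper does not actually prove this statement: it is quoted (as an explicit paraphrase) from Theorem~5.1 of \cite{slivkins_contextual_2015}, and the paper's only original content here is the observation that a contextual bandit is an episodic MDP with $H=1$ and context-independent transitions, so the lower bound applies to the RL problem class. There is therefore no internal proof to match your attempt against; the relevant comparison is with the construction in the cited source, and your outline does follow the same route as that source --- the $H=1$ reduction, the dyadic packing hierarchy organized as a ball tree, Lipschitz bumps of height $\Theta(r)$ planted at scale $r$, and a KL/Le Cam argument forcing $\Omega(r^{-2})$ pulls per candidate ball.

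That said, as a proof your proposal has a genuine gap, and it is exactly the step you flag as an ``anticipated obstacle'': combining the per-scale two-point bounds into the aggregated guarantee $\Omega\bigl(\sum_{r \ge r_0} N_r/r\bigr)$ under the single shared budget of $K$ rounds. A one-scale argument of the form ``regret $\ge \Omega(\min(rK,\, N_r/r))$ for each fixed $r$'' is routine; the content of Theorem~5.1 of \cite{slivkins_contextual_2015} is precisely the recursive conditioning along the ball tree (together with a context distribution spreading mass appropriately across balls at every scale) that lets the contributions of different scales be added rather than maximized, and your sketch does not carry this out. Relatedly, your account of where the $\log K$ enters is off: in a lower bound there is no ``confidence level that must hold uniformly over rounds'' --- that is an upper-bound device. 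The $\log K$ here is the price of aggregating the $\Theta(\log(1/r_0))$ scales (it appears inside the definition of $R$ and is divided back out in the conclusion $\mathbb{E}_{\mathcal{I}}[R(K)] \ge \Omega(R/\log K)$), so ``dividing out the $\log K$ absorbed in the confidence level'' is not a step of the argument but an artifact of how the theorem is normalized. Finally, verifying that the superposition of bumps over all scales is simultaneously $[0,1]$-valued and $L$-Lipschitz, and that each instance's optimal action is the deepest planted bump, requires the geometric decay and disjointness properties of the tree to be checked explicitly; you assert them but do not establish them. None of these points indicates a wrong approach --- the skeleton is the correct one --- but the proposal as written reconstructs only the easy shell of the cited theorem.
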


This shows that the scaling in terms of $K$ in Section~\ref{ssec:regret_detailed} is optimal up to logarithmic factors.  Plugging in $r_0 = \Theta(K^{-1/(d_c+2)})$ and exhibiting the dependence on $c$ from the definition of the covering dimension (Equation~\ref{eqn:covering_dimension}) gives that the regret of any algorithm over the distribution of problem instances is at least $\Omega(K^{(d_c + 1)/(d_c+2)} c^{1 / (d_c+2)})$.  This matches the dependence on $K$ and $c$ from Theorem~\ref{thm:regret}.  We can port this lower bound construction over to reinforcement learning by constructing a problem instance with $H$ bandit problems in sequence.  An interesting direction for future work is determining which reinforcement learning problem instances have more suitable structure where we can develop tighter regret bounds.

\section{Proof Sketch}
\label{section:sketch}

{In this section we give a proof sketch for Theorem~\ref{thm:regret}; details are deferred to Appendix~\ref{sec:detailedproof}.  We start with a map of the proof before giving some details.}

{Recall that the algorithm proceeds over $K$ \textit{episodes}, with each episode comprising of $H$ \textit{steps}.  We start by showing that our algorithm is \textit{optimistic} \cite{simchowitz2019nonasymptotic}, which means that with high probability, the estimates maintained by the algorithm are an upper bound on their true values.  This allows us to write the regret in terms of the error in approximating the value function to the true value function for the policy employed on that episode.  Next, using induction we relate the error from a given step in terms of the error from the next step.  Unraveling the relationship and using the fact that the value function for the last step is always zero, we write the regret as the sum of the confidence bound terms from Equation~\ref{eqn:conf_radius}.  We finish by bounding these quantities using properties of the splitting rule from Algorithm~\ref{alg}.  Together, this shows the regret bound established in Theorem~\ref{thm:regret}.}

{Before giving some details of the proof, we start with some notation.  Let $B_h^k$ and $(x_h^k, a_h^k)$ denote the ball and state-action pair selected by the algorithm in episode $k$, step $h$.  We also denote $n_h^k = n_h^k(B_h^k)$ as the number of times the ball $B_h^k$ or its ancestors has been previously played by the algorithm.  The overall regret is then given by (Equation~\ref{equation:regret}):
$$R(K) = \sum_{k=1}^K \left( V_1^\star(x_1^k) - V_1^{\pi^k}(x_1^k)\right).$$ 
To simplify presentation, we denote ($V_h^\star(x_h^k) - V_h^{\pi^k}(x_h^k)$) by $(V_h^\star - V_h^{\pi^k})(x_h^k)$. }

{We start by relating the error in the estimates from step $h$ in terms of the $(h+1)$ step estimates.  The following Lemma establishes this relationship, which shows that with high probability $\Qhat{h}{k}$ is both an upper bound on $Q_h^\star$, and exceeds $Q_h^\star$ by an amount which is bounded as a function of the step $(h+1)$ estimates.  This also shows that our algorithm is \textit{optimistic}.}

\begin{lemma}
\label{lemma:upper_lower_bounds_main}
For any ball $B$, step $h$ and episode $k$, let $t = n_h^k(B)$, and $k_1 < \ldots < k_t$ to be the episodes where $B$ or its ancestors were encountered previously by the algorithm in step $h$. 
Then, for any $\pfail \in (0,1)$, with probability at least $1 - \pfail/2$ the following holds simultaneously for all $(x,a,h,k) \in \S \times \A \times [H] \times [K]$ and ball $B$ such that $(x,a) \in \dom{B}$:
\begin{align*}
&1.\, \Qhat{h}{k}(B) \geq Q_h^\star(x,a) ~\text{ and }~ \Vhat{h}{k}(x) \geq V_h^\star(x)\\
&2.\, \Qhat{h}{k}(B) - Q_h^\star(x,a) \leq \Ind{t = 0}H + \beta_t + \sum_{i=1}^t \alpha_t^i\left(\Vhat{h+1}{k_i} - V_{h+1}^\star\right)(x_{h+1}^{k_i})
\end{align*}
where, for any $t\in[K]$ and $i\leq t$, we define $\alpha_i^t = \alpha_i \prod_{j=i+1}^t (1 - \alpha_j)$ and $\beta_t = 2\sum_{i=1}^t \alpha_i^t b(i)$ (where $\alpha_j$ are the learning rates, and $b(\cdot)$ the confidence radius).
\end{lemma}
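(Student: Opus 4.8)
The plan is to prove both claims simultaneously from a single exact identity for the running estimate, obtained by unrolling the update \eqref{eqn:update}, and then to control the resulting error terms by concentration and the Lipschitz assumption. First I would fix $B$, $h$, $k$, set $t = n_h^k(B)$, and let $k_1 < \cdots < k_t$ be the episodes at which $B$ or an ancestor was selected. Writing $\alpha_t^i = \alpha_i\prod_{j=i+1}^t(1-\alpha_j)$ and $\alpha_t^0 = \prod_{j=1}^t(1-\alpha_j)$, a short induction on the number of updates turns the convex-combination update \eqref{eqn:update} into
$$\Qhat{h}{k}(B) = \alpha_t^0 H + \sum_{i=1}^t \alpha_t^i\left( r_h^{k_i} + \Vhat{h+1}{k_i}(x_{h+1}^{k_i}) + b(i)\right).$$
Because $\alpha_1 = 1$ under the learning rate \eqref{eqn:learning_rate}, we have $\alpha_t^0 = 0$ for $t \geq 1$ and $\alpha_t^0 = 1$ for $t = 0$, which produces the $\Ind{t=0}H$ term.

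Next, I would subtract $Q_h^\star(x,a)$ using the Bellman optimality equation $Q_h^\star(x,a) = r_h(x,a) + \Exp{V_{h+1}^\star(\hat x) \mid x,a}$ together with property~1 of Lemma~\ref{lemma:lr} ($\sum_{i=1}^t \alpha_t^i = 1$). Grouping each summand around $Q_h^\star(x_h^{k_i},a_h^{k_i}) = r_h^{k_i} + \Exp{V_{h+1}^\star \mid x_h^{k_i},a_h^{k_i}}$ — which is the key trick letting us use only the $Q^\star$-Lipschitz hypothesis rather than Lipschitzness of rewards and $V^\star$ separately — gives
$$\Qhat{h}{k}(B) - Q_h^\star(x,a) = \Ind{t=0}H + \sum_{i=1}^t \alpha_t^i\Big[\big(Q_h^\star(x_h^{k_i},a_h^{k_i}) - Q_h^\star(x,a)\big) + \big(\Vhat{h+1}{k_i} - V_{h+1}^\star\big)(x_{h+1}^{k_i}) + \big(V_{h+1}^\star(x_{h+1}^{k_i}) - \Exp{V_{h+1}^\star \mid x_h^{k_i},a_h^{k_i}}\big) + b(i)\Big].$$
The middle term is exactly the recursive term of claim~2, so it remains to show that $\sum_i \alpha_t^i b(i) = \tfrac12\beta_t$ dominates the other two. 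The first is a pure discretization error: both $(x,a)$ and $(x_h^{k_i},a_h^{k_i})$ lie in the ancestor ball $B_{k_i}$ active at episode $k_i$, and the split rule ($n \geq (d_{max}/r)^2$) forces $r(B_{k_i}) < d_{max}/\sqrt{i}$, so Assumption~\ref{assumption:Lipschitz} bounds it by $L\,\diam{B_{k_i}} < 2Ld_{max}/\sqrt{i}$, absorbed by the $4Ld_{max}/\sqrt{i}$ piece of $b(i)$. The third term is a martingale difference sequence (as $x_{h+1}^{k_i} \sim \Pr_h(\cdot \mid x_h^{k_i},a_h^{k_i})$) with increments at most $\alpha_t^i H$, so Azuma–Hoeffding together with $\sum_i (\alpha_t^i)^2 \leq 2H/t$ from Lemma~\ref{lemma:lr} controls it, after a union bound, by the $2\sqrt{H^3\log(4HK/\pfail)/i}$ piece of $b(i)$.

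Finally, the two claims follow together by downward induction on $h$. For optimism (claim~1) the base case $\Vhat{H+1}{k} \equiv V_{H+1}^\star \equiv 0$ holds by initialization; assuming $\Vhat{h+1}{k} \geq V_{h+1}^\star$ pointwise makes every recursive term nonnegative, and since the concentration step shows the discretization and martingale errors are dominated by $\tfrac12\beta_t$, the displayed difference is nonnegative, giving $\Qhat{h}{k}(B) \geq Q_h^\star(x,a)$. As the domains of the relevant balls cover $\S \times \A$, applying this to the ball containing $(x,\pi_h^\star(x))$ and using $V_h^\star(x) \leq H$ yields $\Vhat{h}{k}(x) = \min(H,\max_B \Qhat{h}{k}(B)) \geq V_h^\star(x)$, closing the induction; discarding the nonnegative stochastic contributions then yields claim~2. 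I expect the main obstacle to be the concentration step under \emph{adaptive} discretization: the index set $k_1 < \cdots < k_t$ and the ball $B$ are themselves data-dependent (balls appear only as the partition refines), so the union bound must range over all realizable ball histories rather than a fixed net, and one must verify that the martingale filtration is well-defined even though the partition — and hence which ball absorbs a given observation — changes over time, and that the radius bound threading through the ancestor chain is valid at every $k_i$.
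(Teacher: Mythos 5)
Your proposal follows essentially the same route as the paper's proof: unrolling the update into the weighted sum with the $\Ind{t=0}H$ term, regrouping around $Q_h^\star(x_h^{k_i},a_h^{k_i})$ via the Bellman equation, bounding the discretization term by Lipschitzness plus the splitting rule's radius bound $r(B_h^{k_i}) \leq d_{max}/\sqrt{i}$, controlling the martingale term by Azuma--Hoeffding with $\sum_i (\alpha_t^i)^2 \leq 2H/t$, and establishing optimism by downward induction on $h$ using the covering invariant. Two small notes: the unrolled identity should read $\Ind{t=0}\left(H - Q_h^\star(x,a)\right)$ rather than $\Ind{t=0}H$ (harmless since $0 \leq Q_h^\star \leq H$), and the adaptivity obstacle you flag is resolved in the paper by a union bound over only the $HK$ episode--step pairs, since a ball not selected in a round inherits its parent's concentration inequality without its counter $t$ changing.
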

In Appendix~\ref{sec:detailedproof}, we provide an expanded version of Lemma \ref{lemma:upper_lower_bounds_main} with a detailed proof (Lemma~\ref{lemma:upper_lower_bounds}).  Below we provide a proof sketch.
The main claim in the Lemma follows from first expanding the recursive update for $\Qhat{h}{k}(B)$ from \eqref{eqn:update}, the property that $\sum_{i=1}^t \alpha_t^i = 1$ from Lemma~\ref{lemma:lr}, and applying definitions from the Bellman Equation (Equation~\ref{eqn:bellman_equation}) to show that 
\begin{align*}
&\Qhat{h}{k}(B) - Q_h^\star(x,a) = \Ind{t = 0}(H - Q_h^\star(x,a)) +  \sum_{i=1}^t \alpha_t^i \left((\Vhat{h+1}{k_i} - V_{h+1}^\star)(x_{h+1}^{k_i})  + b(i) \right)\\
& + \sum_{i=1}^t \alpha_t^i \left(V_{h+1}^\star(x_{h+1}^{k_i}) - \Exp{V_{h+1}^\star(\hat{x}) \mid x_h^{k_i}, a_h^{k_i}}\right) +\sum_{i=1}^t \alpha_t^i\left( Q_h^\star(x_h^{k_i}, a_h^{k_i}) -  Q_h^\star(x,a) \right).
\end{align*}
The last term $\left|\sum_{i=1}^t \alpha_t^i (Q_h^\star(x_h^{k_i}, a_h^{k_i}) -  Q_h^\star(x,a))\right|$ is the error due to the bias induced by discretization, which we bound by $4Ld_{max}/\sqrt{t}$ using Lipschitzness of the $Q$-function. {The second term is the error due to approximating the future value of a state}.  {By using} Azuma-Hoeffding's inequality, we show that with probability at least $1 - \pfail/2$, the error due to approximating the future value by the next state as opposed to computing the expectation is bounded above by
\begin{align*}
\left|\sum_{i=1}^t \alpha_t^i \left( V_{h+1}^\star(x_{h+1}^{k_i}) - \Exp{V_{h+1}^\star(\hat{x}) \mid x_h^{k_i}, a_h^{k_i}}\right) \right|  \leq 2 \sqrt{\frac{H^3 \log(4HK/\pfail)}{t}}.
\end{align*}
The final inequalities in Lemma \ref{lemma:upper_lower_bounds_main} follow by the definition of $b(i), \beta_t$, and substituting the above inequalities to the expression for $\Qhat{h}{k}(B) - Q_h^\star(x,a)$.

By Lemma \ref{lemma:upper_lower_bounds_main}, with high probability, $\Vhat{h}{k}(x) \geq V_h^\star(x)$, such that the terms within the summation of the regret $(V_1^\star(x_1^k) - V_1^{\pi^k}(x_1^k))$ can be upper bounded by $(\Vhat{1}{k} - V_1^{\pi^k})(x_1^k)$ to show that $$R(K) \leq \sum_{k=1}^K \left( (\Vhat{1}{k} - V_1^{\pi^k})(x_1^k)\right).$$ 

The main inductive relationship is stated in Lemma \ref{lemma:induction_main} which writes the errors of these step $h$ estimates in terms of the $(h+1)$ step estimates.
\begin{lemma} \label{lemma:induction_main}
For any $\pfail \in (0,1)$ if $\beta_t = 2\sum_{i=1}^t \alpha_t^i b(i)$ then with probability at least $1 - \pfail/2$, for all $h \in [H]$,
\begin{align*}
    \sum_{k=1}^K (\Vhat{h}{k} - V_h^{\pi^k})(x_h^k) &\leq \sum_{k=1}^K \left(H \Ind{n_h^k = 0} + \beta_{n_h^k} + \xi_{h+1}^k\right) + \left(1 + \frac{1}{H}\right) \sum_{k=1}^K (\Vhat{h+1}{k} - V_{h+1}^{\pi^k})(x_{h+1}^k) \\
& \text{where }~\xi_{h+1}^k = \Exp{V_{h+1}^\star(\hat{x}) - V_{h+1}^{\pi^k}(\hat{x}) ~\big|~ x_h^k, a_h^k} - (V_{h+1}^\star- V_{h+1}^{\pi^k})(x_{h+1}^k).
\end{align*}
\end{lemma}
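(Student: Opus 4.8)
The plan is to establish the inequality for a single episode $k$ and then sum over $k\in[K]$. Two ingredients drive the per-episode bound: the greedy selection rule, which lets me replace $(\Vhat{h}{k}-V_h^{\pi^k})(x_h^k)$ by a gap between $\Qhat{h}{k}$ and $Q_h^{\pi^k}$ at the selected ball, and Lemma~\ref{lemma:upper_lower_bounds_main}, which controls the distance of $\Qhat{h}{k}$ from $Q_h^\star$. The factor $(1+1/H)$ will emerge only at the very end, from the learning-rate identity $\sum_{t\ge i}\alpha_t^i = 1 + 1/H$ of Lemma~\ref{lemma:lr}.

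First I would fix $k$, $h$ and write $B_h^k$ and $(x_h^k,a_h^k)$ for the selected ball and state-action pair. Because $\Vhat{h}{k}(x_h^k)\le\Qhat{h}{k}(B_h^k)$ by the selection rule (cf.~\eqref{eqn:v_update}) and $V_h^{\pi^k}(x_h^k)=Q_h^{\pi^k}(x_h^k,a_h^k)$ since $\pi^k$ plays $a_h^k$, I obtain the pointwise estimate $(\Vhat{h}{k}-V_h^{\pi^k})(x_h^k)\le \Qhat{h}{k}(B_h^k)-Q_h^{\pi^k}(x_h^k,a_h^k)$. I then split the right side as $[\Qhat{h}{k}(B_h^k)-Q_h^\star(x_h^k,a_h^k)]+[Q_h^\star-Q_h^{\pi^k}](x_h^k,a_h^k)$. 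The first bracket is bounded directly by part~2 of Lemma~\ref{lemma:upper_lower_bounds_main}; the second, via the Bellman equations~\eqref{eqn:bellman_equation} (the reward terms cancel), equals $\Exp{V_{h+1}^\star(\hat x)-V_{h+1}^{\pi^k}(\hat x)\mid x_h^k,a_h^k}$, which by the definition of $\xi_{h+1}^k$ is exactly $\xi_{h+1}^k+(V_{h+1}^\star-V_{h+1}^{\pi^k})(x_{h+1}^k)$.

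Summing the resulting per-episode bound over $k$ produces the advertised $\sum_k(H\Ind{n_h^k=0}+\beta_{n_h^k}+\xi_{h+1}^k)$ term, together with the weighted double sum $\sum_k\sum_{i=1}^{n_h^k}\alpha_{n_h^k}^i(\Vhat{h+1}{k_i}-V_{h+1}^\star)(x_{h+1}^{k_i})$ and the residual $\sum_k(V_{h+1}^\star-V_{h+1}^{\pi^k})(x_{h+1}^k)$. I would then reindex the double sum by its inner episode: since each summand $(\Vhat{h+1}{k}-V_{h+1}^\star)(x_{h+1}^k)$ is nonnegative (part~1 of Lemma~\ref{lemma:upper_lower_bounds_main}), collecting the learning-rate weights attached to a fixed inner episode and bounding their total by $\sum_{t\ge i}\alpha_t^i=1+1/H$ should collapse the double sum into $(1+1/H)\sum_k(\Vhat{h+1}{k}-V_{h+1}^\star)(x_{h+1}^k)$. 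Finally, $V_{h+1}^\star\ge V_{h+1}^{\pi^k}$ makes the residual nonnegative, so inflating it by the harmless factor $(1+1/H)$ lets it merge with the collapsed term into $(1+1/H)\sum_k(\Vhat{h+1}{k}-V_{h+1}^{\pi^k})(x_{h+1}^k)$, which is precisely the claimed bound.

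The delicate part is the reindexing, and specifically making it survive the adaptive splitting. In the tabular argument of~\cite{jin_is_2018} each state-action pair has a single, linearly ordered visit history, so a fixed error term is charged to later visits with consecutive counts and $\sum_{t\ge i}\alpha_t^i=1+1/H$ applies verbatim. Here the ball-splitting subroutine has the children of a ball inherit both its counter $n_h^k$ and its estimate, so the lineage ``$B$ or its ancestors'' indexing $k_1<\dots<k_t$ in Lemma~\ref{lemma:upper_lower_bounds_main} branches forward in time: a single past episode can sit at the same position in the visit histories of several distinct descendant balls, and a naive charging would then give it weight from each branch, threatening the bound $1+1/H$. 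Pushing the reindexing through therefore requires charging each past episode carefully along the tree of balls -- exploiting that counts remain consecutive within any single lineage and that the splitting threshold $n_h^k\ge(d_{max}/r(B))^2$ constrains how selections spread across branches -- so that the aggregate weight is still controlled by the tail of $\sum_{t\ge i}\alpha_t^i$. I expect this branch-accounting, rather than the otherwise routine Bellman and greedy-selection manipulations, to be the crux of the argument.
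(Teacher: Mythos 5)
Your proposal follows the paper's proof essentially step for step: the pointwise bound $(\Vhat{h}{k}-V_h^{\pi^k})(x_h^k)\le \Qhat{h}{k}(B_h^k)-Q_h^{\pi^k}(x_h^k,a_h^k)$ from the selection rule and \eqref{eqn:v_update}, the split through $Q_h^\star$, the Bellman identity turning $Q_h^\star-Q_h^{\pi^k}$ into $\xi_{h+1}^k+(V_{h+1}^\star-V_{h+1}^{\pi^k})(x_{h+1}^k)$, the appeal to part 2 of Lemma~\ref{lemma:upper_lower_bounds_main}, the reindexing of the weighted double sum, and the final merge using $V_{h+1}^{\pi^k}\le V_{h+1}^\star\le\Vhat{h+1}{k}$ (the paper merges by writing the two sums as $\tfrac1H\sum_k(\Vhat{h+1}{k}-V_{h+1}^\star)+\sum_k(\Vhat{h+1}{k}-V_{h+1}^{\pi^k})$, which is the same manipulation as your ``inflate the residual'' step). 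All of these pieces are correct and match the paper's Lemma~\ref{lemma:induction}.

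The one divergence is the reindexing, which you single out as the crux and leave unfinished. The paper closes this step in one line: it asserts that the term $(\Vhat{h+1}{k'}-V_{h+1}^\star)(x_{h+1}^{k'})$ is picked up by later episodes with weights $\alpha_t^{n_h^{k'}}$ for successive counts $t=n_h^{k'},n_h^{k'}+1,\dots$, so that its total coefficient is at most $\sum_{t\ge n_h^{k'}}\alpha_t^{n_h^{k'}}=1+\tfrac1H$ by Lemma~\ref{lemma:lr} --- verbatim the tabular rearrangement of Jin et al. Your concern about branching is real and is not explicitly addressed there: when a ball splits, every child inherits the counter value $(d_{max}/r)^2$, so the first selections of distinct children all occur at the same count $t$, and a fixed past episode $k'$ in the shared lineage is then charged $\alpha_t^{i}$ once per sibling subtree rather than once in total; the sentence ``the next time it appears when $k=n_h^{k'}+1$'' presumes the counts along the charging sequence are distinct and consecutive, which holds within a single lineage but not across siblings. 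So you have not introduced a gap of your own so much as located the single step where the imported tabular argument does not transfer automatically; a complete proof does require the branch accounting you describe (bounding the cross-branch multiplicity of each count against the tail $\sum_{t\ge T}\alpha_t^i$), and neither your proposal nor the paper's written proof actually carries that out.
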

Expanding this inductive relationship, and using the base case that the value functions at step $H+1$, $\Vhat{H+1}{k}$ and $V_{H+1}^{\pi^k}$, are always zero, it follows that
\begin{align*}
    R(K) 
    & \leq \sum_{h=1}^H\left(1 + \frac{1}{H}\right)^{h-1} \sum_{k=1}^K \left(H \Ind{n_h^k = 0} + \beta_{n_h^k} + \xi_{h+1}^k\right) \\
    & \leq 3 \sum_{h=1}^H \sum_{k=1}^K \left(H \Ind{n_h^k = 0} + \beta_{n_h^k} + \xi_{h+1}^k\right).
\end{align*}

Clearly, $\sum_{k=1}^K H \Ind{n_h^k = 0} = H$ as $n_h^k = 0$ only for the first episode $k = 1$.  For the terms with $\xi_{h+1}^k$ we use a standard martingale analysis with Azuma-Hoeffdings inequality to show in Lemma  \ref{lemma:martingale_difference_sequence} of the Appendix that
$$\sum_{h=1}^H \sum_{k=1}^K \xi_{h+1}^k \leq 6 \sqrt{2H^3 K \log(4HK/\pfail)}.$$
The dominating term in the regret is $\sum_{h=1}^H \sum_{k=1}^K \beta_{n_h^k}$, which captures critical terms in the approximation error of $\Qhat{}{}$. By construction of $b(t)$, it follows that $\beta_t = \Theta(\frac{1}{\sqrt{t}})$ from the second condition in Lemma~\ref{lemma:lr}. Using the following two lemmas we are able to bound this sum.  The first lemma states several invariants maintained by the partitioning scheme.

\begin{lemma}
\label{lemma:partition_main}
For every $(h, k) \in [H] \times [K]$ the following invariants are maintained:
\begin{itemize}
	\item (Covering) The domains of each ball in $\P_h^k$ cover $\S \times \A$.
	\item (Separation) For any two balls of radius $r$, their centers are at distance at least $r$.
\end{itemize}
\end{lemma}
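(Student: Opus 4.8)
The plan is to prove both invariants simultaneously by induction over the episodes $k$, for each fixed step $h$, tracking how $\P_h^{k+1}$ differs from $\P_h^k$. The only way the partition ever changes is through the \textsc{Split Ball} routine, which is invoked at most once per step per episode and which only \emph{adds} balls---it never deletes any. For the base case $k=1$, the partition consists of the single ball of radius $d_{max}$, which covers $\S \times \A$ by Assumption~\ref{assumption:bounded}; its domain is the entire ball, so covering holds, and separation holds vacuously since there are no two distinct balls.

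For the covering invariant the key observation is that, since balls are only added, the initial radius-$d_{max}$ ball remains in every $\P_h^k$, so the union of all balls always covers $\S \times \A$. I would then reduce covering-by-domains to covering-by-balls: for any point $p$, among the (nonempty) collection of balls containing $p$ pick one, say $B^\star$, of minimal radius; then $p \in B^\star$ and $p$ lies in no ball of strictly smaller radius, hence $p \in \dom{B^\star}$. Thus covering of the space by balls immediately yields covering by domains, and this step is essentially automatic.

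The separation invariant is the substantive part. When a ball $B$ of radius $r(B)$ is split, the new children all have radius $\tfrac12 r(B)$, and their centers form a $\tfrac12 r(B)$-packing of $\dom{B}$, so any two new children are at center-distance at least $\tfrac12 r(B)$. All pre-existing balls retain their mutual separation by the inductive hypothesis, so the only pairs left to verify consist of one new child and one pre-existing ball, and among these only the radius $\tfrac12 r(B)$ imposes a constraint (different-radius pairs are unconstrained). This is the main obstacle: I must show each new center stays at distance at least $\tfrac12 r(B)$ from every pre-existing ball of radius $\tfrac12 r(B)$. Here the definition of the domain is exactly what is needed---each new center lies in $\dom{B}$, which by construction excludes every ball of radius strictly smaller than $r(B)$, in particular every existing radius-$\tfrac12 r(B)$ ball $B' = B(c',\tfrac12 r(B))$. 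Since such balls are open, a center $c \notin B'$ satisfies $\D(c,c') \geq \tfrac12 r(B)$, which is precisely the required separation. (The greedy variant of the footnote is handled identically: each child is centered at a point of the \emph{current} $\dom{B}$, which already excludes all previously created children.)

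Putting these together completes the induction and establishes both invariants for all $(h,k)$. The entire difficulty is concentrated in the cross-separation between freshly created children and earlier balls of the same radius, and it is absorbed by the single structural fact that the packing is taken over $\dom{B}$ rather than over $B$ itself: the domain's exclusion of all strictly smaller balls is exactly what forces the new centers to keep their distance from the old same-radius centers.
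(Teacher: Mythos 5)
Your proposal is correct and follows essentially the same route as the paper: covering is reduced to the fact that the union of domains equals the union of balls (with the initial radius-$d_{max}$ ball never removed), and separation splits into the sibling case, handled by the $\tfrac{1}{2}r(B)$-packing of $\dom{B}$, and the non-sibling case, handled by observing that a new center lies in the parent's domain and hence outside every pre-existing strictly smaller (open) ball. Your write-up is slightly more explicit than the paper's (the minimal-radius argument for domains and the use of open balls are stated rather than implicit), but the underlying argument is identical.
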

The next lemma states that the number of samples in a given ball grows in terms of its radius.  This is needed to show a notion of ``progress'', where as we get more samples in a ball we further partition it in order to get a more refined estimate for the $Q_h^\star$ value for points in that ball.

\begin{lemma}
	\label{lemma:bound_ball_main}
	For any $h \in [H]$ and child ball $B \in \P_h^K$ (the partition at the end of the last episode $K$) the number of episodes $k \leq K$ such that $B$ is selected by the algorithm is less than $\frac{3}{4}\left(d_{max} / r \right)^2$ where $r = r(B)$.  I.e. denoting by $B_h^k$ the ball selected by the algorithm in step $h$ episode $k$,
	$$| \{ k : B_h^k = B \}| \leq \frac{3}{4} \left(\frac{d_{max}}{r}\right)^2.$$
	
	Moreover, the number of times that ball $B$ and it's ancestors have been played is at least $\frac{1}{4} \left(\frac{d_{max}}{r}\right)^2$.
\end{lemma}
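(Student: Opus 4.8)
The plan is to track the counter $n_h^k(B)$ and exploit two features of Algorithm~\ref{alg}: the splitting threshold is tied to the radius, and upon splitting each child inherits its parent's counter. First I would record the structural fact that every radius produced by the algorithm has the form $r = d_{max} 2^{-\ell}$ for an integer level $\ell \geq 0$, since the initial ball has radius $d_{max}$ and each split halves the radius. A \emph{child} ball $B$ thus sits at level $\ell \geq 1$, so both its own splitting threshold $(d_{max}/r)^2 = 4^{\ell}$ and its parent's threshold $(d_{max}/(2r))^2 = \tfrac14 (d_{max}/r)^2 = 4^{\ell-1}$ are integers; this makes all the counting exact and removes any need for ceiling terms.

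For the ``moreover'' (lower bound) claim I would argue as follows. A child ball $B$ of radius $r$ is created only when its parent $B_{\mathrm{par}}$, of radius $2r$, is split, and by the splitting rule this happens precisely when the counter of $B_{\mathrm{par}}$ first reaches its threshold $(d_{max}/(2r))^2 = \tfrac14 (d_{max}/r)^2$. Since \textsc{Split Ball} initializes each child's counter to the parent's value, we have $n_h(B) = \tfrac14 (d_{max}/r)^2$ at the moment $B$ is created. Because the counter is only ever incremented and never decreases, it follows that $n_h^K(B) \geq \tfrac14 (d_{max}/r)^2$, i.e. $B$ together with its ancestors has been selected at least $\tfrac14 (d_{max}/r)^2$ times.

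For the upper bound on $|\{k : B_h^k = B\}|$, the idea is that $n_h^k(B)$ increases by exactly $1$ each time $B$ itself is selected, and that once $B$ is split it can never be selected again. The latter I would justify from the domain structure and the covering invariant of Lemma~\ref{lemma:partition_main}: after splitting, the children form a $\tfrac12 r$-Net of $\dom{B}$ using balls of radius $\tfrac12 r < r$, so every point formerly in $\dom{B}$ now lies in a strictly smaller ball and is removed from $\dom{B}$ by its definition; hence $B$ is never again relevant for any state and is never selected. Therefore the number of selections of $B$ equals $n_h^K(B) - \tfrac14 (d_{max}/r)^2$, and $n_h^K(B)$ is at most the threshold $(d_{max}/r)^2$ at which $B$ splits (and is strictly smaller when $B$ is never split within $K$ episodes). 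This yields $|\{k : B_h^k = B\}| \leq (d_{max}/r)^2 - \tfrac14 (d_{max}/r)^2 = \tfrac34 (d_{max}/r)^2$, with equality exactly when $B$ is split.

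The main obstacle I anticipate is the ``once split, never reselected'' step: it is tempting to wave it through, but it genuinely depends on selection being over \emph{domains} rather than raw balls, on the children having strictly smaller radius than $B$, and on the covering guarantee of Lemma~\ref{lemma:partition_main}, so I would invoke that lemma explicitly. A secondary point is whether the threshold is hit exactly or overshot; the dyadic-radius observation handles this, since all thresholds are integers and the integer counter lands on them precisely, keeping the constants $\tfrac14$ and $\tfrac34$ sharp.
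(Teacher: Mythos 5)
Your proof is correct and follows essentially the same route as the paper's: the lower bound comes from the child inheriting its parent's counter at the moment the parent hits its threshold $\frac{1}{4}(d_{max}/r)^2$, and the upper bound comes from subtracting that inherited value from the child's own splitting threshold $(d_{max}/r)^2$. Your two added observations --- the dyadic integrality of the thresholds and the explicit justification that a split ball is never reselected --- are points the paper's proof glosses over, and they tighten rather than change the argument.
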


Using these two lemmas we show that $| \P_h^k | \simeq K^{\frac{d_c}{d_c+2}}$.  Hence we get using Jensen's and Cauchy's inequality that:
\begin{align*}
    \sum_{k=1}^K \beta_{n_h^k} & \simeq \sum_{k=1}^K \frac{1}{\sqrt{n_h^k}} \simeq \sum_{B \in \P_h^k} \sum_{k : B_h^k = B} \frac{1}{\sqrt{n_h^k}} \\
    & \simeq \sum_{B \in \P_h^k} \sqrt{|k : B_h^k = B|} \simeq \sqrt{| \P_h^k | K} \\
    & \simeq K^{(d_c+1)/(d_c+2)}.
\end{align*}
Combining these terms gives the regret bound in Theorem~\ref{thm:regret}.

Finally we would like to give some intuition for how the induction relationship presented in Lemma \ref{lemma:induction_main} follows from the key Lemma \ref{lemma:upper_lower_bounds_main}.
Recall that the selection rule in the algorithm enforces that the selected ball $B_h^k$ is the one that maximizes the upper confidence $\Qhat{}{}$-values amongst relevant balls. By definition, it follows that for any $h$ and $k$,
\begin{align*}
    (\Vhat{h}{k} - & V_h^{\pi^k})(x_h^k)
     \leq \max_{B \in \texttt{RELEVANT}_h^k(x_h^k)} \Qhat{h}{k}(B) - Q_h^{\pi^k}(x_h^k, a_h^k) \\
    & = \Qhat{h}{k}(B_h^k) - Q_h^{\pi^k}(x_h^k, a_h^k) \\
    & = \Qhat{h}{k}(B_h^k) - Q_h^\star(x_h^k, a_h^k) + Q_h^\star(x_h^k, a_h^k) - Q_h^{\pi^k}(x_h^k, a_h^k).
\end{align*}
By the definition of the $Q$-function in Equation~\ref{eqn:bellman_equation},
$$Q_h^\star(x_h^k, a_h^k) - Q_h^{\pi^k}(x_h^k, a_h^k) = \Exp{V_{h+1}^\star(\hat{x}) - V_{h+1}^{\pi^k}(\hat{x}) ~\big|~ x_h^k, a_h^k}.$$  We bound $\Qhat{h}{k}(B_h^k) - Q_h^\star(x_h^k, a_h^k)$ by Lemma~\ref{lemma:upper_lower_bounds_main}, as $(x_h^k, a_h^k) \in \dom{B_h^k}$. Putting these bounds together, for $t = n_h^k(B_h^k)$ and for $k_1 < \ldots < k_t$ denoting episodes where $B_h^k$ or its ancestors were previously encountered, it follows that 
\begin{align*}
    (\Vhat{h}{k} - V_h^{\pi^k})(x_h^k)
    & \leq \Ind{t = 0}H + \beta_t + \sum_{i=1}^t \alpha_t^i(\Vhat{h+1}{k_i} - V_{h+1}^\star)(x_{h+1}^{k_i}) + (V_{h+1}^\star- V_{h+1}^{\pi^k})(x_{h+1}^k) + \xi_{h+1}^k,
\end{align*}
where $\xi_{h+1}^k$ is defined in Lemma \ref{lemma:induction_main}. Let us denote $n_h^k = n_h^k(B_h^k)$, and let the respective episodes $k_i(B_h^k)$ denote the time $B_h^k$ or its ancestors were selected for the $i$-th time. By comparing the above inequality to the final inductive relationship, the last inequality we need show is that upon summing over all episodes $k$,
\begin{align*}
&\sum_{k=1}^{K} \sum_{i=1}^{n_h^k} \alpha_{n_h^k}^i(\Vhat{h+1}{k_i(B_h^k)} - V_{h+1}^\star)(x_{h+1}^{k_i(B_h^k)}) + \sum_{k=1}^{K} (V_{h+1}^\star- V_{h+1}^{\pi^k})(x_{h+1}^k) \\
&\leq \left(1 + \frac{1}{H}\right) \sum_{k=1}^K (\Vhat{h+1}{k} - V_{h+1}^{\pi^k})(x_{h+1}^k).
\end{align*}
For every $k' \in [K]$ the term $(\Vhat{h+1}{k'} - V_{h+1}^\star)(x_{h+1}^{k'})$ appears in the summand when $k = n_h^{k'}$. The next time it appears when $k = n_h^{k'} + 1$ and so on. By rearranging the order of the summation,
\begin{align*}
\sum_{k=1}^K &\sum_{i=1}^{n_h^k} \alpha_{n_h^k}^i (\Vhat{h+1}{k_i(B_h^k)} - V_{h+1}^\star)(x_{h+1}^{k_i(B_h^k)}) \\
&\leq \sum_{k=1}^K (\Vhat{h+1}{k} - V_{h+1}^\star)(x_{h+1}^{k}) \sum_{t = n_{h}^{k}}^\infty \alpha_t^{n_h^{k}}.
\end{align*}
The final recursive relationship results from the property that by construction $\sum_{t=i}^{\infty} \alpha_t^i = 1 + \frac{1}{H}$ for all $i$ from Lemma~\ref{lemma:lr}, and the inequality $V_{h+1}^{\pi^k}(x_{h+1}^k) \leq V_{h+1}^\star(x_{h+1}^k) \leq \Vhat{h+1}{k}(x_{h+1}^k)$.

\section{Discussion and Extensions}
\label{section:discussion}

{The partitioning method used in Algorithm~\ref{alg} was chosen due to its implementability.  However, our analysis can be extended to provide a framework for adaptive $Q$-learning algorithms that flexibly learns partitions of the space in a data-driven manner.  This allows practitioners to use their favourite partitioning algorithm (e.g. decision trees, kernel methods, etc) to adaptively partition the space.} {This begs the question: {\em Amongst different partitioning algorithms, which ones still guarantee the same regret scaling as Theorem~\ref{thm:regret}?}}

{In particular, we consider black box partitioning schemes that incrementally build a nested partition, determining {\em when} and {\em how} to split regions as a function of the observed data. This black box partitioning algorithm is then plugged into the ``repartioning'' step of Algorithm 1.}
{Let $\P_h^k$ denote the partition kept by the algorithm for step $h$ in episode $k$.  The algorithm stores estimates $\Qhat{h}{k}(P)$ for the optimal $Q_h^\star$ value for points in a given region $P \in \P_h^k$ and $n_h^k(P)$ for the number of times $P$ or its ancestors has been selected in step $h$ up to the current episode $k$.  In every episode $k$, step $h$ of the algorithm, the procedure proceeds identically to that in Algorithm~\ref{alg} by selecting a region $P_h^k$ which is relevant for the current state with maximal $\Qhat{h}{k}(P)$ value.  The update rules are the same as those defined in \eqref{eqn:update} where instead of balls we consider regions in the partition. When a region is subpartitioned, all of its children must inherent the values of $\Qhat{h}{k}(P)$ and $n_h^k(P)$ from its parent. The black box partitioning algorithm only decides \textit{when} and \textit{how} to subpartition the selected region $P_h^k$.}

In Theorem~\ref{thm:regret_general}, we extend the regret guarantees to modifications of Algorithm~\ref{alg} that maintain the conditions below.
\begin{theorem} \label{thm:regret_general}
For any modification of Algorithm~\ref{alg} with a black box partitioning scheme that satisfies
$\forall\,h \in [H]$, $\P_h^k$:
    \begin{enumerate}
        \item $\{ \P_h^k \}_{k \geq 1}$ is a sequence of nested partitions.
        \item There exists constants $c_1$ and $c_2$ such that for every $h, k$ and $P \in \P_h^k$ $$\frac{c_1^2}{\diam{P}^2} \leq n_h^k(P) \leq \frac{c_2^2}{\diam{P}^2}.$$
        \item $|\P_h^K| \leq K^{d_c/(d_c+2)}$.
    \end{enumerate}
    the achieved regret is bounded by  $\tilde{O}(H^{5/2}K^{(d_c+1)/(d_c+2)})$.
\end{theorem}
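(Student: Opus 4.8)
The plan is to retrace the proof of Theorem~\ref{thm:regret} and isolate the three places where the concrete ``balls plus splitting rule'' partition was actually used, then verify that each is subsumed by one of the hypotheses (i)--(iii). Concretely, the specific partition enters the original argument only through: (a) the inheritance of $\Qhat{h}{k}$ and $n_h^k$ values upon refinement; (b) the discretization-bias bound $\diam{B}\lesssim d_{max}/\sqrt{t}$ that controls the second term of $b(t)$ in \eqref{eqn:conf_radius}; and (c) the partition invariants Lemma~\ref{lemma:partition_main} and Lemma~\ref{lemma:bound_ball_main} used to bound the dominant $\sum\beta_{n_h^k}$ term. Everything else is purely about the learning rates (Lemma~\ref{lemma:lr}), the Bellman recursion, and martingale concentration, none of which refer to how the space is split.

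First I would re-establish the optimism/recursion Lemma~\ref{lemma:upper_lower_bounds_main} for regions $P$ in place of balls. Point (a) is exactly hypothesis~(i): nestedness guarantees children inherit the parent's estimates, so the recursive expansion of \eqref{eqn:update} and the telescoping identity $\sum_{i=1}^t\alpha_t^i=1$ are unchanged. The only quantitative change is the discretization-bias term $\left|\sum_{i=1}^t\alpha_t^i(Q_h^\star(x_h^{k_i},a_h^{k_i})-Q_h^\star(x,a))\right|$: all sampled points lie in the region active at their selection time, so Lipschitzness bounds the $i$-th summand by $L\,\diam{P_i}$, and the \emph{upper} bound in hypothesis~(ii) gives $\diam{P_i}\leq c_2/\sqrt{i}$. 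By property~2 of Lemma~\ref{lemma:lr} ($\sum_{i=1}^t\alpha_t^i/\sqrt{i}\leq 2/\sqrt{t}$) this is at most $2Lc_2/\sqrt{t}$, so replacing $4Ld_{max}$ by $2Lc_2$ in the second term of $b(t)$ makes Lemma~\ref{lemma:upper_lower_bounds_main}, and hence the optimism statement $\Vhat{h}{k}\geq V_h^\star$, go through unchanged; the Azuma--Hoeffding term is untouched. With optimism in hand, the inductive relationship Lemma~\ref{lemma:induction_main} and its expansion $R(K)\leq 3\sum_{h=1}^H\sum_{k=1}^K(H\Ind{n_h^k=0}+\beta_{n_h^k}+\xi_{h+1}^k)$ reproduce line-for-line, as they rely only on the selection rule, $\sum_{t=i}^\infty\alpha_t^i=1+1/H$, and $(1+1/H)^{h-1}\leq 3$.

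It then remains to bound the three sums. The term $\sum_k H\Ind{n_h^k=0}=H$ (only the root, in episode~$1$, has zero count) and the martingale term $\sum_{h,k}\xi_{h+1}^k\leq 6\sqrt{2H^3K\log(4HK/\pfail)}$ are partition-independent and carry over directly. For the dominant term I would use $\beta_{n_h^k}=\Theta(1/\sqrt{n_h^k})$ and group episodes by the region selected: since counts strictly increase on each selection, $\sum_{k:\,P\text{ selected}}1/\sqrt{n_h^k}\leq\sum_{j=1}^{m_P}1/\sqrt{j}\leq 2\sqrt{m_P}$ where $m_P$ counts the episodes selecting $P$ at step $h$, exactly as in the ball case. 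Cauchy--Schwarz then yields $\sum_{k=1}^K 1/\sqrt{n_h^k}\leq 2\sqrt{(\#\text{regions})\,K}$, and hypothesis~(iii), $|\P_h^K|\leq K^{d_c/(d_c+2)}$, converts this into $\Theta(K^{(d_c+1)/(d_c+2)})$. Multiplying by the $\Theta(\sqrt{H^3\log(\cdot)}+Lc_2)$ scale of $\beta$, summing over $h$, and recombining reproduces the $\tilde{O}(H^{5/2}K^{(d_c+1)/(d_c+2)})$ bound.

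The main obstacle is the accounting in that Cauchy--Schwarz step. Each episode contributes exactly one selection at step $h$, so $\sum_P m_P=K$ holds automatically; but the bound $\sum_P\sqrt{m_P}\leq\sqrt{(\#\text{regions})\cdot K}$ requires controlling the \emph{total} number of regions ever created, not just the leaf count $|\P_h^K|$ supplied by hypothesis~(iii). Because the partition is nested (hypothesis~(i)), a refined region is never selected again, so internal nodes also consume episodes, and I must argue the refinement tree has $O(|\P_h^K|)$ nodes overall. Here the \emph{lower} bound of hypothesis~(ii), $n_h^k(P)\geq c_1^2/\diam{P}^2$, plays the role of Lemma~\ref{lemma:bound_ball_main}: it certifies that a region is sampled $\Omega(1/\diam{P}^2)$ times before it is refined, preventing the tree from proliferating faster than its leaves and allowing each internal node's contribution to be charged to its descendants via the inherited-count inequality. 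Making this bookkeeping precise while invoking only (i)--(iii), rather than the concrete $\tfrac12 r(B)$-net construction underlying Lemmas~\ref{lemma:partition_main}--\ref{lemma:bound_ball_main}, is the one genuinely new step relative to the proof of Theorem~\ref{thm:regret}.
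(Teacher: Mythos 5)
Your proposal is correct and follows exactly the route the paper intends: the paper's own ``proof'' of Theorem~\ref{thm:regret_general} is a one-paragraph remark asserting that the only dependence on the partitioning method in the proof of Theorem~\ref{thm:regret} is through conditions (1)--(3), and your walkthrough carries out that check in detail, with condition (1) replacing the inheritance structure, the upper bound in (2) replacing the $r(B_h^{k_i})\leq d_{max}/\sqrt{i}$ step of Lemma~\ref{lemma:radius_ball_bonus} (so $c_2$ replaces $2d_{max}$ in the bonus), and (3) replacing the packing count in Lemma~\ref{lemma:bound_on_beta_summation} via the Cauchy--Schwarz grouping already sketched in Section~\ref{section:sketch}. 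The one point you flag as genuinely new --- that the sum must range over all regions ever selected, not just the final leaves, so the refinement tree must have $O(|\P_h^K|)$ nodes --- is indeed glossed over by the paper, and your resolution (proper nested splits give at most a constant-factor blowup, with the lower bound in (2) preventing degenerate refinement) is sound.
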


{Our algorithm clearly satisfies the first condition.  The second condition is verified in Lemma~\ref{lemma:partition}, and the third condition follows from a tighter analysis in Lemma~\ref{lemma:bound_on_beta_summation}.  The only dependence on the partitioning method used when proving Theorem~\ref{thm:regret} is through these sets of assumptions, and so the proof follows from a straightforward generalization of the results.}

{Similar generalizations were provided in the contextual bandit setting by \cite{wang2019stochbandits}, although they only show sublinear regret.}
{The first condition requires that the partition evolves in a hierarchical manner, as could be represented by a tree, where each region of the partition has an associated parent. Due to the fact that a child inherits its $\Qhat{h}{k}$ estimates from its parent, the recursive update expansion in Lemma~\ref{lemma:upper_lower_bounds_main} still holds. The second condition establishes that the number of times a given region is selected grows in terms of the square of its diameter.  This can be thought of as a bias variance trade-off, as when the number of samples in a given region is large, the variance term dominates the bias term and it is advantageous to split the partition to obtain more refined estimates.  This assumption is necessary for showing Lemma~\ref{lemma:bound_ball_main}.  It also enforces that the coarseness of the partition depends on the density of the observations.  The third condition controls the size of the partition and is used to compute the sum $\sum_{k=1}^K \beta_{n_h^k}$ from the proof sketch in Section~\ref{section:sketch}.}

This theorem provides a practical framework for developing adaptive $Q$-learning algorithms.  The first condition is easily satisfied by picking the partition in an online fashion and splitting the selected region into sub-regions.  {The second condition can be checked at every iteration and determines \textit{when} to sub-partition a region.  The third condition limits the practitioner from creating unusually shaped regions.  Using these conditions, a practitioner can use any decision tree or partitioning algorithm that satisfies these properties to create an adaptive $Q$-learning algorithm.  An interesting future direction is to understand whether different partitioning methods leads to provable instance-specific gains.}
    
\section{Experiments}
\label{section:experiments}
\subsection{Experimental Set up}
\label{section:experiments_set_up}

{We compare the performance of $Q$-learning with Adaptive Discretization (Algorithm~\ref{alg}) to Net Based $Q$-Learning~\cite{song_efficient_2019} on two canonical problems to illustrate the advantage of adaptive discretization compared to uniform mesh.  On both problems the state and action space are taken to be $\S = \A = [0,1]$ and the metric the product metric, i.e. $\D((x,a), (x',a')) = \max\{|x - x'|, |a-a'|\}$.  This choice of metric results in a rectangular partition of $\S \times \A = [0,1]^2$ allowing for simpler implementation and easy comparison of the partitions used by the algorithms.  We focus on problems which have a two-dimensional state-action space to provide a proof of concept of the adaptive discretizations constructed by the algorithm.  These examples have more structure than the worst-case bound, but provide intuition on the partition ``zooming'' into important parts of the space.  More simulation results are available with the code on github at \url{https://github.com/seanrsinclair/AdaptiveQLearning}}.

\medskip
\textbf{Oil Discovery.} This problem, adapted from~\cite{mason2012collaborative}, is a continuous variant of the popular `Grid World' game. It comprises of an agent (or agents) surveying a 1D map in search of hidden ``oil deposits''. The world is endowed with an unknown survey function $f(x)$ which encodes the oil deposit at each location $x$. For agents to move to a new location they must pay a cost proportional to the distance moved; moreover, surveying the land produces noisy estimates of the true value of that location. 

We consider an MDP where $\S = \A = [0,1]$ represent the current and future locations of the agent.  The time-invariant transition kernel (i.e., homogeneous for all $h\in[H]$) is defined via $\Pr_h(x' \mid x, a) = \Ind{x'=a}$, signifying that the agent moves to their chosen location. Rewards are given by $r_h(x,a) = \max \{0, f(a) + \epsilon - |x - a|\}$, where $\epsilon$ is independent sub-Gaussian noise and $f(a) \in [0,1]$ is the survey value of the location $a$ (the max ensures rewards are in $[0,1]$). 

We choose the survey function $f(x)$ to be either $f(x) = e^{-\lambda |x - c|}$ or $f(x) = 1 - \lambda (x-c)^2$ where $c \in [0,1]$ is the location of the oil well and $\lambda$ is a smoothing parameter, which can be tuned to adjust the Lipschitz constant.

\medskip
\textbf{Ambulance Relocation.} This is a widely-studied stochastic variant of the above problem~\cite{brotcorne2003ambulance}. Consider an ambulance navigating an environment and trying to position itself in a region with high predicted service requests. The agent interacts with the environment by first choosing a location to station the ambulance, paying a cost to travel to that location.  Next, an ambulance request is realized, drawn from a fixed distribution, after which the ambulance must travel to meet the demand at the random location.

Formally, we consider an MDP with $\S = \A = [0,1]$ encoding the current and future locations of the ambulance. {The transition kernels are defined via $\Pr_h(x' | x, a) \sim \mathcal{F}_h$, where $\mathcal{F}_h$ denotes the request distribution for time step $h$.}  The reward is $r_h(x'|x,a) = 1 - [ c |x - a| + (1 - c)|x' - a| ]$; here, $c\in[0,1]$ models the trade-offs between the cost of relocation (often is less expensive) and cost of traveling to meet the demand.

For the arrival distribution $\mathcal{F}_h$ we consider Beta$(5,2)$ and Uniform$(0,1)$ to illustrate dispersed and concentrated request distributions.  {We also analyzed the effect of changing the arrival distribution over time (e.g. Figure~\ref{fig:partition}).}  We compare the RL methods to two heuristics: ``No Movement'', where the ambulance pays the cost of traveling to the request, but does not relocate after that, and ``Median'', where the ambulance always relocates to the median of the observed requests.  Each heuristic is near-optimal respectively at the two extreme values of $c$. 

\subsection{Adaptive Tree Implementation}

{We used a tree data structure to implement the partition $\P_h^k$ of Algorithm~\ref{alg}.  We maintain a tree for every step $h \in [H]$ to signify our partition of $\S \times \A = [0,1]^2$.  Each node in the tree corresponds to a rectangle of the partition, and contains algorithmic information such as the estimate of the $\Qhat{h}{k}$ value at the node.  Each node has an associated center $(x,a)$ and radius $r$.  We store a list of (possibly four) children for covering the region which arises when \textit{splitting} a ball.}

To implement the \textit{selection rule} we used a recursive algorithm which traverses through all the nodes in the tree, checks each node if the given state $x_h^k$ is contained in the node, and if so recursively checks the children to obtain the maximum $\Qhat{h}{k}$ value.  {This speeds up computation by following the tree structure instead of linear traversal.  For additional savings we implemented this using a max-heap.}

For Net Based $Q$-Learning, based on the recommendation in~\cite{song_efficient_2019}, we used a fixed $\epsilon$-Net of the state-action space, with $\epsilon = (KH)^{-1/4}$ (since $d = 2$).  An example of the discretization can be seen in Figure~\ref{fig:partition} where each point in the discretization is the center of a rectangle.

\begin{figure*}[t!]
\centering
\includegraphics[width=\columnwidth]{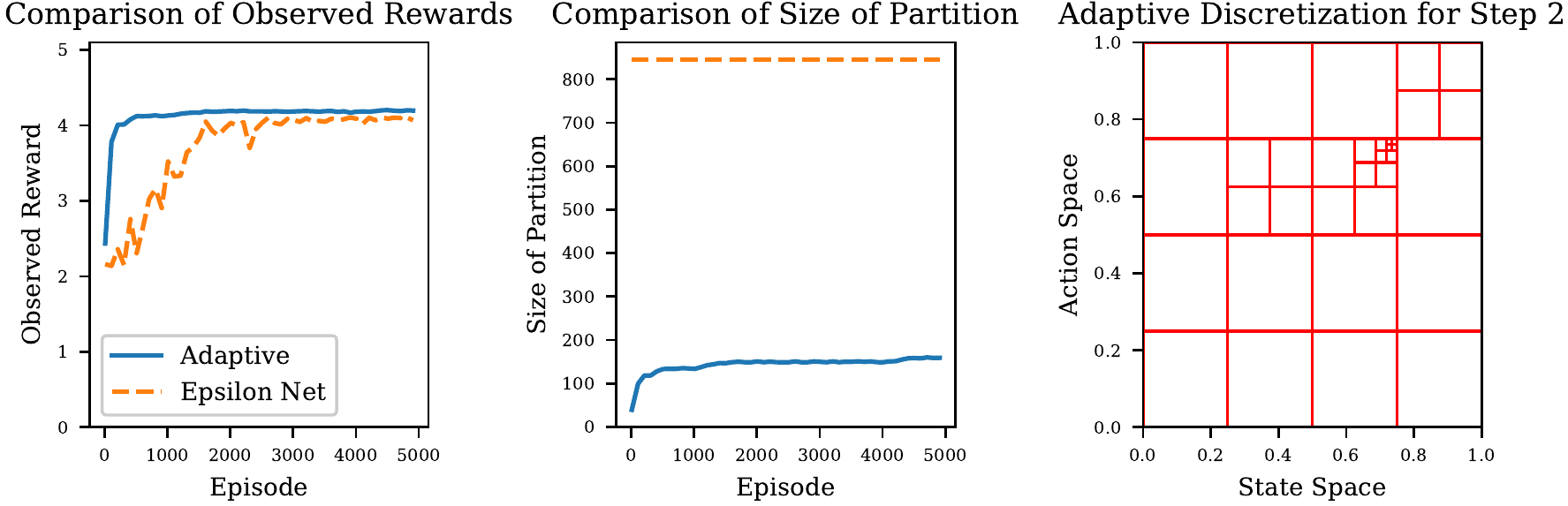}
\caption{Comparison of the observed rewards, size of partition, and discretization for the uniform mesh~\cite{song_efficient_2019} and our adaptive discretization algorithms, on the oil discovery problem with survey function $f(x) = e^{- |x - 0.75|}$.  The transition kernel is $\Pr_h(x' \mid x,a) = \Ind{x' = a}$ and reward function is $r(x,a) = (1 - e^{-|a - 0.75|} - |x - a|)_+$ (see Section~\ref{section:experiments_set_up}).  The adaptive algorithm quickly learns the location of the optimal point $0.75$ and creates a fine partition of the space around the optimal.
}
\label{fig:oil_laplace}
\end{figure*}

\begin{figure*}[t!]
\centering
\includegraphics[width=\columnwidth]{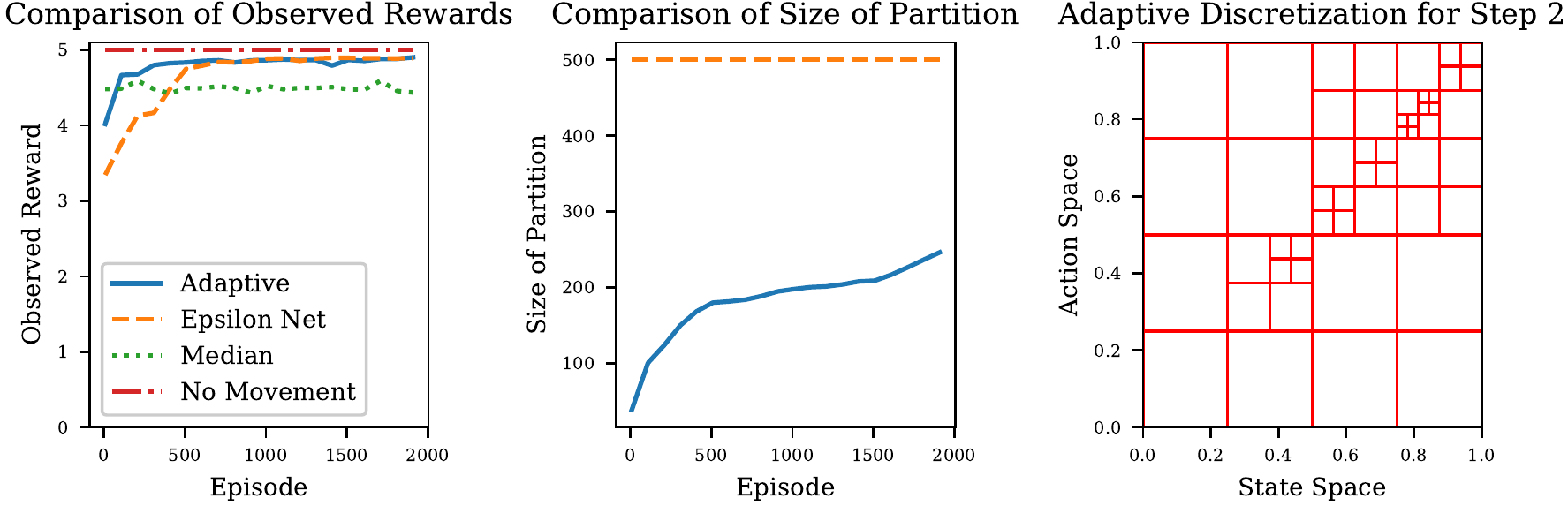}
\caption{
Comparison of the algorithms on the ambulance problem with Beta$(5,2)$ arrival distribution and reward function $r(x, a) = 1 - |x - a|$ (see Section~\ref{section:experiments_set_up}).  Clearly, the no movement heuristic is the optimal policy but the adaptive algorithm learns a fine partition across the diagonal of the space where the optimal policy lies.
}
\label{fig:ambulance_beta}
\end{figure*}

\subsection{Experimental Results}

\medskip
\textbf{Oil Discovery.}
First we consider the oil problem with survey function $f(a) = e^{-\lambda |a-c|}$ where $\lambda = 1$.  This results in a sharp reward function.  Heuristically, the optimal policy can be seen to take the first step to travel to the maximum reward at $c$ and then for each subsequent step stay at the current location.  For the experiments we took the number of steps $H$ to be five and tuned the scaling of the confidence bounds for the two algorithms separately.  In Figure~\ref{fig:oil_laplace} we see that the adaptive $Q$-learning algorithm is able to ascertain the optimal policy in fewer iterations than the epsilon net algorithm.  Moreover, due to adaptively partitioning the space instead of fixing a discretization before running the algorithm, the size of the partition in the adaptive algorithm is drastically reduced in comparison to the epsilon net.  Running the experiment on the survey function $f(a) = 1 - \lambda(x-c)^2$ gave similar results and the graph can be seen in Appendix~\ref{section:full_experiments}.  We note that for both experiments as we increase $\lambda$, causing the survey distribution to become peaky and have a higher Lipschitz constant, the Net based $Q$-learning algorithm suffered from a larger discretization error.  This is due to that fact that the tuning of the $\epsilon$ parameter ignores the Lipschitz constant.  However, the adaptive $Q$-learning algorithm is able to take advantage and narrow in on the point with large reward.

\medskip
\textbf{Ambulance Routing.}
We consider the ambulance routing problem with arrival distribution $\mathcal{F} = \text{Beta}(5,2)$.  The time horizon $H$ is again taken to be five.  We implemented two additional heuristics for the problem to serve as benchmarks.  The first is the ``No Movement'' heuristic.  This algorithm takes the action to never move, and pays the entire cost of traveling to service the arrival.  The second, ``Median'' heuristic takes the action to travel to the estimated median of the distribution based on all past arrivals.  For the case when $c = 1$ no movement is the optimal algorithm.  For $c = 0$ the optimal will be to travel to the median.

After running the algorithms with $c = 1$ the adaptive $Q$-learning algorithm is better able to learn the stationary policy of not moving from the current location, as there is only a cost of moving to the action.  The rewards observed by each algorithm is seen in Figure~\ref{fig:ambulance_beta}.  In this case, the discretization (for step $h = 2$) shows that the adaptive $Q$-learning algorithm maintains a finer partition across the diagonal where the optimal policy lives.  Running the algorithm for $c < 1$ shows that the adaptive $Q$-learning algorithm has a finer discretization around $(x,a) = (0.7, 0.7)$, where $0.7$ is the approximate median of a Beta$(5,2)$ distribution. The algorithm keeps a fine discretization both around where the algorithm frequently visits, but also places of high reward.  For $c < 1$ and the arrival distribution $\mathcal{F} = \text{Uniform}[0,1]$ the results were similar and the graphs are in Appendix~\ref{section:full_experiments}.

{The last experiment was to analyze the algorithms when the arrival distribution changes over time (e.g. over steps $h$).  In Figure~\ref{fig:partition} we took $c = 0$ and shifting arrival distributions $\mathcal{F}_h$ where $\mathcal{F}_1 = \text{Uniform}(0, 1/4), \mathcal{F}_2 = \text{Uniform}(1/4, 1/2), \mathcal{F}_3 = \text{Uniform}(1/2, 3/4), \mathcal{F}_4 = \text{Uniform}(3/4, 1), \text{and }\mathcal{F}_5 = \text{Uniform}(1/2-0.05, 1/2+0.05)$.  In the figure, the color corresponds to the $Q_h^\star$ value of that specific state-action pair, where green corresponds to a larger value for the expected future rewards.  The adaptive algorithm was able to converge faster to the optimal policy than the uniform mesh algorithm.  Moreover, the discretization observed from the Adaptive $Q$-Learning algorithm follows the contours of the $Q$-function over the space.  This shows the intuition behind the algorithm of storing a fine partition across near-optimal parts of the space, and a coarse partition across sub-optimal parts.}
\section{Conclusion}
\label{section:conclusion}
We presented an algorithm for model-free episodic reinforcement learning on continuous state action spaces that uses data-driven discretization to adapt to the shape of the optimal policy.  Under the assumption that the optimal $Q^\star$ function is Lipschitz, the algorithm achieves regret bounds scaling in terms of the covering dimension of the metric space.

Future directions include relaxing the requirements on the metric, such as considering weaker versions of the Lipschitz condition. In settings where the metric may not be known a priori, it would be meaningful to be able to estimate the distance metric from data over the course of executinng of the algorithm. Lastly, we hope to characterize problems where adaptive discretization outperforms uniform mesh.
    \section*{Acknowledgments}
    	We gratefully acknowledge funding from the NSF under grants ECCS-1847393 and DMS-1839346, and the ARL under grant W911NF-17-1-0094.
    \bibliographystyle{plain}
    {\bibliography{references}}
    
    \newpage
    \onecolumn
    \appendix
\section{Notation Table}
\renewcommand{\arraystretch}{1.2}
\begin{table*}[h!]
\begin{tabular}{l|l}
\textbf{Symbol} & \textbf{Definition} \\ \hline
$\S,\A,H$  & MDP specifications (state space, action space, steps per episode)\\
$r_h(x,a)\,,\,\Pr_h(\cdot \mid x,a)$ & Reward/transition kernel for taking action $a$ in state $x$ at step $h$\\
$K$ & Number of episodes\\
$\pi^\star_h$ & Optimal policy in step $h$\\
$(x_h^k, a_h^k)$ & State and action executed by the algorithm at step h in episode k \\
$\Delta(\S)$ & Set of probability measures on $\S$ \\
$V_h^\pi(\cdot),Q_h^\pi(\cdot,\cdot)$ & Value/Q-function at step $h$ under policy $\pi$ \\
$V_h^\star(\cdot),Q_h^\star(\cdot,\cdot)$ & Value/Q-function for step $h$ under the optimal policy \\
$L$ & Lipschitz constant for $Q^\star$ \\
$\D$ & Metric on $\S \times \A$ \\
$d_{max}$ & Bound on $\S \times \A$ using the metric $\D$ \\
$N_r$ & $r$ covering number of $\S \times \A$ \\
$d_c$ & The covering dimension of $\S \times \A$ with parameter $c$\\
$\P_h^k$ & Partition of $\S \times \A$ for step $h$ in episode $k$\\
$B_h^k$ & Ball selected by algorithm in step $h$ episode $k$ \\
$\Qhat{h}{k}(B)$ & Estimate of the $Q$ value for points in $B$ in step $h$ episode $k$\\
$\Vhat{h}{k}(x)$ & Estimate of the value function for $x$ in step $h$ episode $k$ \\
$n_h^k(B)$ & Number of times $B$ or its ancestors has been chosen 
\\
& \quad before episode $k$ in step $h$ \\
$\alpha_t$ & The adaptive learning rate $\frac{H+1}{H+t}$ (cf. Alg~\ref{alg}) \\
$\alpha_t^i$ & $\alpha_i \prod_{j=i+1}^t (1 - \alpha_j)$ \\
$\diam{B}$ & The diameter of a set $B$ \\
$r(B)$ & The radius of a ball $B$ \\
$b(t)$ & The upper confidence term, $2\sqrt{\frac{H^3 \log(4HK/\pfail)}{t}} + \frac{4 L D_{max}}{\sqrt{t}}$ \\
$R(K)$ & The regret up to episode $K$ \\
$\dom{B}$ & The domain of a ball $B$, excludes points in a ball with a smaller radius \\
$\text{RELEVANT}_h^k(x)$ & The set of all balls which are relevant for $x$ in step $h$ episode $k$\\
$\Exp{V_{h+1}(\hat{x}) \mid x,a}$ & $\mathbb{E}_{\hat{x} \sim \Pr_h(\cdot \mid x,a)} [V_{h+1}(\hat{x})]$\\
\hline
\end{tabular}
\caption{List of common notation}
\label{table:notation}
\end{table*}

\newpage

\section{Complete Pseudocode for Adaptive $Q$-Learning Algorithm}
\label{sec:complete_algorithm}

\begin{algorithm}[h!]
	\caption{Adaptive $Q$-Learning}
	\label{full_alg}
	\begin{algorithmic}[1]
		\Procedure{Adaptive $Q$-learning}{$\S, \A, \D, H, K, \pfail$}
			\For{$h \gets 1, \ldots, H$}
				\State $\P_h^1$ contains a single ball $B$ with radius $d_{max}$
				\State $\Qhat{h}{1}(B) \gets H$
			\EndFor
			\For{each episode $k \gets 1, \ldots K$}
				\State Receive initial state $x_1^k$
				\For{each step $h \gets 1, \ldots, H$}
	                \State Selection Rule:
					\State Set $\texttt{RELEVANT}_h^k (x_h^k) = \{B \in \P_h^k \mid \exists a \in \A \text{ with } (x_h^k, a) \in \dom{B}\}$ 
					\State Select the ball $B_{sel}$ by the selection rule $B_{sel} = \argmax_{B \in \texttt{RELEVANT}_h^k(x_h^k)} \Qhat{h}{k}(B)$
					\State Select action $a_h^k = a$ for some $(x_h^k,a) \in \dom{B_{sel}}$
					\State Play action $a_h^k$, receive reward $r_h^k$, and new state $x_{h+1}^k$
					\State Update Parameters:				\State $n_h^{k+1}(B_{sel}) \gets n_h^{k}(B_{sel}) + 1$
					\State $t \gets n_h^{k+1}(B_{sel})$
					\State $b(t) \gets 4 \sqrt{\frac{H^3 \log(4HK / \pfail)}{t}} + \frac{2Ld_{max}}{\sqrt{t}}$
					\State $\Qhat{h}{k+1}(B_{sel}) \gets (1 - \alpha_t)\Qhat{h}{k}(B_{sel}) + \alpha_t(r_h^k + \Vhat{h+1}{k}(x_{h+1}^k) + b(t))$ where
					\State $\Vhat{h}{k}(x_h^k) = \min(H, \max_{B \in \texttt{RELEVANT}_{h}^k (x_{h}^k)} \Qhat{h}{k}(B))$
					\If{$t \geq \left( \frac{d_{max}}{r(B_{sel})}\right)^2$}
					    \State \textproc{Split Ball}$(B_{sel}, h, k)$
					\EndIf
					\State $\P_h^{k+1} \gets \P_h^k$
				\EndFor
			\EndFor
		\EndProcedure
		\Procedure{Split Ball}{$B, h, k$}
		    \State Set $B_1, \ldots B_n$ to be a $\frac{1}{2}r(B)-$Net of $\dom{B}$
		    \For{each ball $B_i$}
		        \State $\Qhat{h}{k+1}(B_i) \gets \Qhat{h}{k+1}(B)$ \\
		        \State $n_h^{k+1}(B_i) \gets n_h^{k+1}(B)$ \\
		        \State Add $B_i$ to $\P_h^{k+1}$
		    \EndFor
		 \EndProcedure
	\end{algorithmic}
\end{algorithm}
\section{Proofs for Lipschitz $Q^\star$ Function}
\label{sec:assumptions}
\begin{proposition}
	Suppose that the transition kernel and reward function are Lipschitz continuous with respect to $\D$, i.e.
	\begin{align*}
	\norm{\Pr_h(\cdot \mid x,a) - \Pr_h(\cdot \mid x', a')}_{TV} & \leq L_1 \D((x,a), (x',a')) \text{ and } \\
	|r_h(x,a) - r_h(x', a')| & \leq L_2 \D((x,a), (x',a'))
	\end{align*}
	for all $(x,a), (x', a') \in \S \times \A$ and $h$.  Then $Q_h^\star$ is also $(2L_1H + L_2)$ Lipschitz continuous.
\end{proposition}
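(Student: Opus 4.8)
The plan is to prove the bound directly from the Bellman optimality equation via a triangle-inequality split, without needing a genuine induction on the Lipschitz constants. Recall that $Q_h^\star(x,a) = r_h(x,a) + \Exp{V_{h+1}^\star(\hat{x}) \mid x,a}$, so for any two pairs $(x,a),(x',a')$ I would write
\begin{align*}
|Q_h^\star(x,a) - Q_h^\star(x',a')| \leq |r_h(x,a) - r_h(x',a')| + \left| \Exp{V_{h+1}^\star(\hat{x}) \mid x,a} - \Exp{V_{h+1}^\star(\hat{x}) \mid x',a'} \right|.
\end{align*}
The first term is immediately bounded by $L_2 \D((x,a),(x',a'))$ using the reward assumption.

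For the second term, the key observation is that both expectations integrate the \emph{same} function $V_{h+1}^\star$ against the two transition measures $\Pr_h(\cdot \mid x,a)$ and $\Pr_h(\cdot \mid x',a')$. I would invoke the standard inequality $\left|\int f \, d\mu - \int f \, d\nu\right| \leq 2 \norm{f}_\infty \norm{\mu - \nu}_{TV}$ with $f = V_{h+1}^\star$, giving
\begin{align*}
\left| \Exp{V_{h+1}^\star(\hat{x}) \mid x,a} - \Exp{V_{h+1}^\star(\hat{x}) \mid x',a'} \right| \leq 2 \norm{V_{h+1}^\star}_\infty \norm{\Pr_h(\cdot \mid x,a) - \Pr_h(\cdot \mid x',a')}_{TV}.
\end{align*}
Here I would use the uniform bound $\norm{V_{h+1}^\star}_\infty \leq H$, which holds because each $r_{h'} \in [0,1]$ and there are at most $H$ remaining steps (formally, a one-line backward induction from $V_{H+1}^\star = 0$). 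Combining this with the transition-kernel assumption $\norm{\Pr_h(\cdot \mid x,a) - \Pr_h(\cdot \mid x',a')}_{TV} \leq L_1 \D((x,a),(x',a'))$ bounds the second term by $2 L_1 H \, \D((x,a),(x',a'))$.

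Summing the two contributions yields $|Q_h^\star(x,a) - Q_h^\star(x',a')| \leq (2 L_1 H + L_2) \D((x,a),(x',a'))$, as claimed, and the argument is identical for every $h$. The main subtlety — and essentially the only nontrivial step — is the passage from total variation distance to the difference of expectations: it is crucial that we control $V_{h+1}^\star$ only through its sup-norm $H$ rather than through its own Lipschitz constant, since the latter is exactly what we are trying to bound. This is why the crude uniform estimate suffices and no fixed-point argument or induction on Lipschitz constants is required; the potential factor of two in the TV inequality is merely a convention issue that lands us at the stated constant.
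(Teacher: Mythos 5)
Your proof is correct and follows essentially the same route as the paper's: a triangle-inequality split of the Bellman equation into the reward term and the expectation term, then bounding the latter by $2\norm{V_{h+1}^\star}_\infty\norm{\Pr_h(\cdot\mid x,a)-\Pr_h(\cdot\mid x',a')}_{TV}\leq 2HL_1\D((x,a),(x',a'))$. The only cosmetic difference is that the paper derives the inequality $\left|\int f\,d\mu-\int f\,d\nu\right|\leq 2\norm{f}_\infty\norm{\mu-\nu}_{TV}$ explicitly via Radon--Nikodym derivatives with respect to the averaged base measure, whereas you cite it as standard.
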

\begin{proof}
	We use the definition of $Q_h^\star$.  Indeed, for any $h \in [H]$,
	\begin{align*}
	|Q_h^\star(x,a) - Q_h^\star(x', a')| & = |r_h(x,a) - r_h(x', a') + \Exp{V_{h+1}^\star(\hat{x}) \mid x,a} - \Exp{V_{h+1}^\star(\hat{x}) \mid x', a'}|.
	\end{align*}
	However, noting that both $\Pr_h(\cdot \mid x,a)$ and $\Pr_h(\cdot \mid x', a')$ are absolutely continuous with respect to the base measure $\lambda = \frac{1}{2}\left( \Pr_h(\cdot \mid x, a) + \Pr_h(\cdot \mid x', a' )\right)$ we are able to deconstruct the difference in expectations with respect to the Radon-Nikodym derivatives as the measures are $\sigma$-finite.  Hence,
	\begin{align*}
	& \leq |r_h(x,a) - r_h(x', a')| + \left|\int V_{h+1}^\star(\hat{x}) \, d\Pr_h(\hat{x} \mid x,a) - \int V_{h+1}^\star(\hat{x}) \, d\Pr_h(\hat{x} \mid x', a')\right| \\
	& \leq L_2 \D((x,a), (x',a')) + \left|\int V_{h+1}^\star(\hat{x}) \frac{d\Pr_h(\hat{x} \mid x, a)}{d\lambda(\hat{x})}\, d\lambda(\hat{x}) - \int V_{h+1}^\star(\hat{x}) \frac{d\Pr_h(\hat{x} \mid x', a')}{d\lambda(\hat{x})} \, d\lambda(\hat{x})\right| \\
	& \leq L_2 \D((x,a), (x', a')) + \int |V_{h+1}^\star(\hat{x})| \left|\frac{d\Pr_h(\hat{x} \mid x, a)}{d\lambda(\hat{x})} - \frac{d\Pr_h(\hat{x} \mid x', a')}{d\lambda(\hat{x})}\right| \, d\lambda(\hat{x}) \\
	& \leq L_2 \D((x,a), (x', a')) + \norm{V_{h+1}^\star}_{\infty} \int \left|\frac{d\Pr_h(\hat{x} \mid x, a)}{d\lambda(\hat{x})} - \frac{d\Pr_h(\hat{x} \mid x', a')}{d\lambda(\hat{x})}\right| \, d\lambda(\hat{x}) \\
	& \leq L_2 \D((x,a), (x', a')) + H\left\Vert\frac{d\Pr_h(\cdot \mid x, a)}{d\lambda} - \frac{d\Pr_h(\cdot \mid x', a')}{d\lambda}\right\Vert_1 \\
	& = L_2 \D((x,a), (x', a')) + 2H\norm{\Pr_h(\cdot \mid x, a) - \Pr_h(\cdot \mid x', a')}_{TV} \\
	& \leq L_2 \D((x, a), (x', a')) + 2HL_1\D((x, a),(x', a'))
	\end{align*}
	where we have used the fact that the total variation distance is twice the $\mathcal{L}_1$ distance of the Radon-Nikodym derivatives.
\end{proof}

Now we further assume that $\S$ is a separable and compact metric space with metric $d_\S$.  We also assume that $\D((x,a),(x',a)) \leq Cd_\S(x,x')$ where we assume $C = 1$ for simplicity.

As a preliminary we begin with the following lemma.
\begin{lemma}
\label{lemma:Lipschitz}
Suppose that $f : \S \times \A \rightarrow \mathbb{R}$ is $L$ Lipschitz and uniformly bounded.  Then $g(x) = \sup_{a \in \A} f(x,a)$ is also $L$ Lipschitz.
\end{lemma}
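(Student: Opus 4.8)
The plan is to prove the two-sided Lipschitz bound $\abs{g(x) - g(x')} \leq L d_\S(x,x')$ by controlling each direction of the difference and then invoking symmetry. First I would note that the supremum defining $g$ is finite for every $x$: since $f$ is uniformly bounded, say $\abs{f} \leq M$, we have $g(x) = \sup_{a \in \A} f(x,a) \in [-M, M]$, so $g$ is a well-defined real-valued function. This is the only role the uniform boundedness assumption plays, and it is what lets us manipulate $g(x)$ as a finite quantity in the inequalities below.

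The core of the argument is the standard ``near-maximizer'' trick, which conveniently avoids having to assume the supremum is attained. Fix $x, x' \in \S$ and $\epsilon > 0$. By definition of the supremum there exists $a^\star \in \A$ with $f(x, a^\star) \geq g(x) - \epsilon$. Since $g(x') = \sup_{a \in \A} f(x', a) \geq f(x', a^\star)$, I would write
\begin{align*}
g(x) - g(x') &\leq f(x, a^\star) + \epsilon - f(x', a^\star) \\
&\leq L\, \D\left((x, a^\star), (x', a^\star)\right) + \epsilon \\
&\leq L\, d_\S(x, x') + \epsilon,
\end{align*}
where the middle step uses the $L$-Lipschitz continuity of $f$ and the last step uses the product-metric assumption $\D((x,a),(x',a)) \leq d_\S(x,x')$ (with $C = 1$) applied at the common second coordinate $a^\star$. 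The key point is that holding the action argument fixed at $a^\star$ lets me compare $f$ at two points whose $\D$-distance is controlled purely by $d_\S(x,x')$.

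Since $\epsilon > 0$ was arbitrary, letting $\epsilon \to 0$ gives $g(x) - g(x') \leq L\, d_\S(x,x')$. Exchanging the roles of $x$ and $x'$ (the argument is completely symmetric) yields $g(x') - g(x) \leq L\, d_\S(x,x')$, and combining the two directions gives $\abs{g(x) - g(x')} \leq L\, d_\S(x,x')$, as desired. I do not anticipate a genuine obstacle here, as this is a classical fact; the only point requiring mild care is keeping the second coordinate $a^\star$ identical in both evaluations so that the metric bound $\D((x,a),(x',a)) \leq d_\S(x,x')$ applies, and using the $\epsilon$-approximation rather than assuming the supremum is achieved (so that no compactness of $\A$ is needed for this particular step).
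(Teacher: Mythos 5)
Your proof is correct and is essentially the same argument as the paper's: the paper bounds $f(x_1,a) \leq g(x_2) + L\,d_\S(x_1,x_2)$ for every $a$ and then takes the supremum over $a$, while you pick an $\epsilon$-approximate maximizer first and let $\epsilon \to 0$ — two standard phrasings of the identical idea, both using the product-metric bound $\D((x,a),(x',a)) \leq d_\S(x,x')$ at a fixed action and concluding by symmetry.
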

\begin{proof}
Fix any $x_1$ and $x_2 \in \S$.  First notice that $|f(x_1, a) - f(x_2, a)| \leq L \D((x_1, a), (x_2,a)) \leq Ld_\S(x_1, x_2)$ by choice of product metric.

Thus, for any $a \in \A$ we have that $$f(x_1, a) \leq f(x_2, a) + Ld_\S(x_1, x_2) \leq g(x_2) + Ld_\S(x_1, x_2).$$  However, as this is true for any $a \in \A$ we see that $g(x_1) \leq g(x_2) + Ld_\S(x_1, x_2)$.  Swapping the role of $x_1$ and $x_2$ in the inequality shows $|g(x_1) - g(x_2)| \leq Ld_\S(x_1, x_2)$ as needed.
\end{proof}
We will use this and properties of the Wasserstein metric from \cite{gibbs2002choosing} to give conditions on the $Q^\star$ function to be Lipschitz.  First notice the definition of the Wasserstein metric on a separable metric space via:
$$d_W(\mu, \nu) = \sup \left\{ \left| \int f \, d\mu - \int f \, d\nu \right| : \norm{f}_L \leq 1\right\}$$ where $\norm{f}_L$ is the smallest Lipschitz constant of the function $f$ and $\mu$ and $\nu$ are measures on the space $\S$.
\begin{proposition}
Suppose that the transition kernel is Lipschitz with respect to the Wasserstein metric and the reward function is Lipschitz continuous, i.e. 
\begin{align*}
    |r_h(x,a) - r_h(x', a')| & \leq L_1 D((x,a),(x',a')) \\
    d_W(\Pr_h(\cdot \mid x,a), \Pr_h(\cdot \mid x', a')) & \leq L_2 D((x,a),(x',a'))
\end{align*}
for all $(x,a), (x', a') \in \S \times \A$ and $h$ where $d_W$ is the Wasserstein metric.  Then $Q_h^\star$ and $V_h^\star$ are both $(\sum_{i=0}^{H-h} L_1 L_2^{i})$ Lipschitz continuous.
\end{proposition}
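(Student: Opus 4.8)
The plan is to prove both statements simultaneously by backward induction on $h$, showing that for each $h$ the function $Q_h^\star$ is $\ell_h$-Lipschitz with respect to $\D$ and $V_h^\star$ is $\ell_h$-Lipschitz with respect to $d_\S$, where I abbreviate $\ell_h := \sum_{i=0}^{H-h} L_1 L_2^i$. For the base case I would take $h = H+1$, where $V_{H+1}^\star \equiv 0$ is trivially $0$-Lipschitz and $\ell_{H+1}$ is an empty sum, hence $0$; equivalently one can start at $h = H$, where the Bellman equation gives $Q_H^\star(x,a) = r_H(x,a)$, which is $L_1 = \ell_H$ Lipschitz directly from the reward assumption.

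For the inductive step, suppose $V_{h+1}^\star$ is $\ell_{h+1}$-Lipschitz with respect to $d_\S$. Using the Bellman equation $Q_h^\star(x,a) = r_h(x,a) + \Exp{V_{h+1}^\star(\hat{x}) \mid x,a}$ together with the triangle inequality, I would bound $|Q_h^\star(x,a) - Q_h^\star(x',a')|$ by the sum of the reward difference and the difference of the two expectations. The reward term is at most $L_1 \D((x,a),(x',a'))$ by assumption. For the expectation term I would invoke the Kantorovich--Rubinstein definition of the Wasserstein metric recalled just above: since $V_{h+1}^\star/\ell_{h+1}$ is $1$-Lipschitz with respect to $d_\S$, it is an admissible test function, so
\begin{align*}
\left| \Exp{V_{h+1}^\star(\hat{x}) \mid x,a} - \Exp{V_{h+1}^\star(\hat{x}) \mid x',a'} \right| &\leq \ell_{h+1}\, d_W\!\left(\Pr_h(\cdot \mid x,a), \Pr_h(\cdot \mid x',a')\right) \\
&\leq \ell_{h+1} L_2 \D((x,a),(x',a')).
\end{align*}
Combining the two bounds shows $Q_h^\star$ is $(L_1 + \ell_{h+1} L_2)$-Lipschitz, and a short reindexing of the geometric sum gives $L_1 + L_2 \ell_{h+1} = L_1 + \sum_{i=0}^{H-h-1} L_1 L_2^{i+1} = \sum_{i=0}^{H-h} L_1 L_2^i = \ell_h$, as required. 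Finally, since $V_h^\star(x) = \sup_{a \in \A} Q_h^\star(x,a)$, Lemma~\ref{lemma:Lipschitz} (applied under the product-metric assumption $\D((x,a),(x',a)) \leq d_\S(x,x')$) upgrades the $\ell_h$-Lipschitzness of $Q_h^\star$ into $\ell_h$-Lipschitzness of $V_h^\star$ with respect to $d_\S$, closing the induction.

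The main obstacle, though it is a minor one, is correctly passing from a Lipschitz bound on $V_{h+1}^\star$ to a bound on the difference of its expectations under two transition kernels; this is precisely where the Wasserstein (rather than total-variation) Lipschitz assumption is used, via normalizing the test function to be $1$-Lipschitz so that it becomes admissible in the dual representation. One should separately dispatch the degenerate case $\ell_{h+1} = 0$, in which $V_{h+1}^\star$ is constant and the expectation difference vanishes outright, so no rescaling is needed. Everything else is bookkeeping: the telescoping of the geometric sum and the invocation of Lemma~\ref{lemma:Lipschitz} to handle the supremum over actions, which relies on the compactness and separability of $\S$ ensuring the supremum is well behaved.
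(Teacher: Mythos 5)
Your proposal is correct and follows essentially the same route as the paper's proof: backward induction on $h$, splitting the Bellman equation into the reward term and the expectation term, rescaling $V_{h+1}^\star$ by its Lipschitz constant so it becomes an admissible test function in the Kantorovich--Rubinstein dual form of $d_W$, and then invoking Lemma~\ref{lemma:Lipschitz} to pass from $Q_h^\star$ to $V_h^\star$. The only difference is that you explicitly dispatch the degenerate case $\ell_{h+1}=0$, which the paper leaves implicit.
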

\begin{proof}
We show this by induction on $h$.  For the base case when $h = H+1$ then $V_{H+1}^\star = Q_{H+1}^\star = 0$ and so the result trivially follows.

Similarly when $h = H$ then by Equation~\ref{eqn:bellman_equation} we have that $Q_H^\star(x,a) = r_H(x,a)$.  Thus, $|Q_H^\star(x,a) - Q_H^\star(x',a')| = |r_H(x,a) - r_H(x',a')| \leq L_1 \D((x,a),(x',a'))$ by assumption.  Moreover, $V_H^\star(x) = \max_{a \in \A} Q_H^\star(x,a)$ is $L_1$ Lipschitz by Equation~\ref{eqn:bellman_equation} and Lemma~\ref{lemma:Lipschitz}.

For the step case we assume that $Q_{h+1}^\star$ and $V_{h+1}^\star$ are both $\sum_{i=0}^{H-h-1} L_1 L_2^{i}$ Lipschitz and show the result for $Q_h^\star$ and $V_h^\star$.  Indeed,
\begin{align*}
    & |Q_h^\star(x,a) - Q_h^\star(x', a')| = |r_h(x,a) - r_h(x', a') + \Exp{V_{h+1}^\star(\hat{x}) \mid x,a} - \Exp{V_{h+1}^\star(\hat{x}) \mid x' a'}| \\
	& \leq |r_h(x,a) - r_h(x', a')| + |\Exp{V_{h+1}^\star(\hat{x}) \mid x,a} - \Exp{V_{h+1}^\star(\hat{x}) \mid x' a'}| \\
	& \leq L_1 \D((x,a),(x',a')) + \left|\int V_{h+1}^\star(\hat{x}) \, d\Pr_h(\hat{x} \mid x,a) - \int V_{h+1}^\star(\hat{x}) \, d\Pr_h(\hat{x} \mid x', a')\right|
\end{align*}
Now denoting by $K = \sum_{i=0}^{H-h-1} L_1 L_2^{i}$ by the induction hypothesis and the properties of Wasserstein metric (Equation 3 from \cite{gibbs2002choosing}) we have that
\begin{align*}
|Q_h^\star(x,a) - Q_h^\star(x', a')| & \leq L_1 \D((x,a),(x',a')) \\
& \qquad + K \left|\int \frac{1}{K} V_{h+1}^\star(\hat{x}) \, d\Pr_h(\hat{x} \mid x,a) - \int \frac{1}{K} V_{h+1}^\star(\hat{x}) \, d\Pr_h(\hat{x} \mid x', a')\right| \\
	& \leq L_1 \D((x,a),(x',a')) + K d_W(\Pr_h(\cdot \mid x, a), \Pr_h(\cdot \mid x', a')) \\
	& \leq L_1 \D((x,a),(x',a')) + K L_2 \D((x,a), (x',a')) \\
	& = (L_1 + K L_2)\D((x,a),(x',a')).
\end{align*}
Noting that by definition of $K$ we have $L_1 + K L_2 = \sum_{i=0}^{H-h-1} L_1 L_2^{i}$.  To show that $V_h^\star$ is also Lipschitz we simply use Lemma~\ref{lemma:Lipschitz}. 
\end{proof}
\section{Policy-Identification Guarantees Proof}
\label{app:pac_proof}

\begin{theorem}
{For $K = \tilde{O}\left( \left(H^{5/2}/\pfail\epsilon \right)^{d_c+2} \right)$ (where $d_c$ is the covering dimension with parameter $c$), consider a policy $\pi$ chosen uniformly at random from $\pi_1, \ldots, \pi_K$. Then, for an initial state $X\sim F_1$, with probability at least $1-\pfail$, the policy $\pi$ obeys
\begin{equation*}
{V_1^\star(X) - V_1^\pi(X)} \leq \epsilon.
\end{equation*}
}
\end{theorem}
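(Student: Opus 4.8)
The plan is to carry out a standard online-to-batch conversion, feeding the cumulative regret bound of Theorem~\ref{thm:regret} through an averaging argument and a single application of Markov's inequality. The structural fact that makes this work is special to the PAC setting: the initial states $X_1^k \sim F_1$ are drawn i.i.d. and independently of the algorithm's internal randomness, and since the policy $\pi^k$ is a measurable function only of the data gathered in episodes $1,\ldots,k-1$, it is independent of the fresh draw $X_1^k$. Writing $\delta_k = V_1^\star(X_1^k) - V_1^{\pi^k}(X_1^k)$ for the realized per-episode gap and $\Delta_k = \Exp{V_1^\star(X) - V_1^{\pi^k}(X) \mid \mathcal{F}_{k-1}}$ for the expected suboptimality of $\pi^k$ against an independent test state $X \sim F_1$ (with $\mathcal{F}_{k-1}$ the history through episode $k-1$), this independence yields the identity $\Exp{\delta_k \mid \mathcal{F}_{k-1}} = \Delta_k$.

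First I would control the expected cumulative suboptimality. By the tower rule, $\Exp{\sum_k \Delta_k} = \Exp{\sum_k \delta_k} = \Exp{R(K)}$. To pass from the high-probability regret guarantee of Theorem~\ref{thm:regret} to a bound in expectation, I would combine the deterministic worst-case estimate $R(K) \leq KH$ with the fact that the event $\{R(K) > R_0\}$, where $R_0 = \tilde{O}(H^{5/2}K^{(d_c+1)/(d_c+2)})$, has probability at most $\pfail'$; taking $\pfail' = 1/K$ makes the error contribution $\pfail' \cdot KH = H$ lower order, so that $\Exp{R(K)} = \tilde{O}(H^{5/2}K^{(d_c+1)/(d_c+2)})$.

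Next I would reduce to the single random policy. Since $\pi$ is drawn uniformly from $\pi^1,\ldots,\pi^K$ and $X \sim F_1$ is independent of the entire run, $Z := V_1^\star(X) - V_1^\pi(X) \geq 0$ satisfies $\Exp{Z} = \tfrac1K \Exp{\sum_k \Delta_k} = \Exp{R(K)}/K = \tilde{O}(H^{5/2} K^{-1/(d_c+2)})$. Because $Z$ is nonnegative, Markov's inequality gives $\Pr[Z > \epsilon] \leq \Exp{Z}/\epsilon$. Imposing $\Exp{Z} \leq \pfail\epsilon$ and solving $H^{5/2} K^{-1/(d_c+2)} \lesssim \pfail\epsilon$ for $K$ produces exactly $K = \tilde{O}\big((H^{5/2}/(\pfail\epsilon))^{d_c+2}\big)$, at which threshold $\Pr[Z > \epsilon] \leq \pfail$, as claimed.

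The main obstacle is the independence of $\pi^k$ from $X_1^k$: this is precisely what the i.i.d. PAC setting buys us and what distinguishes the argument from the adversarial-starting-state analysis behind Theorem~\ref{thm:regret}; without it the conditional-expectation identity $\Exp{\delta_k \mid \mathcal{F}_{k-1}} = \Delta_k$ would fail and the averaging would not close. A secondary, routine subtlety is the reliance on Markov's inequality in place of a concentration bound, which is exactly what forces the polynomial $1/\pfail$ dependence (rather than a logarithmic one) and is reflected in $\pfail$ sitting inside the base of the exponent in the statement. One could instead sharpen the comparison between $\sum_k \Delta_k$ and $\sum_k \delta_k$ via an Azuma-Hoeffding bound on the martingale $\sum_k(\delta_k - \Delta_k)$, analogously to the treatment of $\xi_{h+1}^k$ in the sketch of Lemma~\ref{lemma:induction_main}, but for the stated polynomial-in-$1/\pfail$ guarantee the expectation-plus-Markov route is the cleanest.
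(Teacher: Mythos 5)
Your proposal is correct and follows essentially the same route as the paper's proof in Appendix~\ref{app:pac_proof}: average the per-episode suboptimality over the $K$ policies, bound the resulting expectation by $\Exp{R(K)}/K$ via Theorem~\ref{thm:regret}, and apply Markov's inequality to the nonnegative gap $V_1^\star(X)-V_1^\pi(X)$. You are in fact somewhat more careful than the paper: you justify the identity $\Exp{\delta_k \mid \mathcal{F}_{k-1}} = \Delta_k$ through the independence of $\pi^k$ from the fresh draw $X_1^k$, and you explicitly convert the high-probability regret guarantee into a bound on $\Exp{R(K)}$ using the deterministic cap $R(K)\leq KH$ --- a step the paper's one-line invocation of Theorem~\ref{thm:regret} glosses over.
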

\noindent Note that in the above guarantee, both $X$ and $\pi$ are random.
\begin{proof}
Let $X\sim F_1$ be a random starting state sampled from $F_1$. Then from Markov's inequality and the law of total expectation, we have that:
\begin{align*}
\Pr(V_1^\star(X) - V_1^\pi(X) > c) 
		& \leq \frac{1}{K c}\sum_{k=1}^K \Exp{V_1^\star(X) - V_1^{\pi^k}(X)} \\
		& \leq \frac{1}{K c} \tilde{O}\left(H^{5/2}K^{(d_c+1)/(d_c+2)})\right)
		 = \tilde{O}\left(\frac{H^{5/2}K^{-\frac{1}{d_c+2}}}{c}\right),
\end{align*}
where the last inequality follows from the guarantee in Theorem~\ref{thm:regret}.
Setting $c = \epsilon$ and $K = \tilde{O}\left( \left(H^{5/2}/\pfail \epsilon \right)^{d_c+2} \right)$ in the above expression gives that $V_1^\star(x) - V_1^\pi(x) \leq \epsilon$ with probability at least $1 - \pfail$.
\end{proof}
\section{Proof Details}
\label{sec:detailedproof}

\subsection{Required Lemmas}
We start with a collection of lemmas which are required to show Theorem~\ref{thm:regret}.

The first two lemmas consider invariants established by the partitioning of the algorithm.  These state that the algorithm maintains a partition of the state-action space at every iteration of the algorithm, and that the balls of similar radius are sufficiently apart.

\begin{lemma}[Lemma~\ref{lemma:partition_main} from the main paper]
\label{lemma:partition}
For every $(h, k) \in [H] \times [K]$ the following invariants are maintained:
\begin{itemize}
	\item (Covering) The domains of each ball in $\P_h^k$ cover $\S \times \A$.
	\item (Separation) For any two balls of radius $r$, their centers are at distance at least r.
\end{itemize}
\end{lemma}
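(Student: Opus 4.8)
The plan is to prove both invariants simultaneously by induction on the \emph{split operations} performed by the algorithm; since for each fixed step $h$ the inner loop splits at most one ball per episode, this is the same as inducting on $k$ with $h$ fixed. The base case is $\P_h^1$, which consists of a single ball of radius $d_{max}$: by Assumption~\ref{assumption:bounded} this ball contains all of $\S \times \A$, its domain equals the whole ball (there being no ball of smaller radius to excise), so the domains cover $\S \times \A$, and the separation condition holds vacuously. For the inductive step I would assume both invariants for $\P_h^k$ and show they survive the splitting of a selected ball $B$ of radius $\rho = r(B)$ into children $B_1, \ldots, B_n$ that form a $\frac{1}{2}\rho$-Net of $\dom{B}$, so that $\P_h^{k+1} = \P_h^k \cup \{B_1, \ldots, B_n\}$.

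For the covering invariant I would first reduce it to the cleaner claim that the \emph{balls} of the partition cover $\S \times \A$. Indeed, if every point $(x,a)$ lies in some ball, then choosing a ball $B^\star$ of minimal radius among the (finitely many) balls that contain $(x,a)$, minimality forces $(x,a) \notin B'$ for every $B'$ with $r(B') < r(B^\star)$, whence $(x,a) \in \dom{B^\star}$; thus the domains cover exactly when the balls do. Since a split only \emph{adds} balls to a collection that already covers by the inductive hypothesis, the balls of $\P_h^{k+1}$ still cover, and the covering invariant follows. (Equivalently, one can argue directly that the new balls form a $\frac{1}{2}\rho$-Net of $\dom{B}$ and hence recover the only region whose domain-coverage the split could have disturbed.)

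The separation invariant is where the real work lies, and I expect the cross-parent case to be the main obstacle. Balls are created only with radii in $\{d_{max} 2^{-i}\}$, and a ball of radius $r$ arises precisely as a child of a split ball of radius $2r$; hence the split of $B$ creates only balls of radius $\rho/2$, so I need only re-examine pairs of radius $\rho/2$ (pairs of any other radius lie in $\P_h^k$ and inherit their bound from the inductive hypothesis). Fixing $r = \rho/2$, I would split into three cases. If both balls are old, $\D \geq r$ by the inductive hypothesis. If $B_i, B_j$ are both new, they belong to a $\frac{1}{2}\rho$-packing of $\dom{B}$, so $\D(c_i, c_j) \geq \frac{1}{2}\rho = r$ directly from the packing property. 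The delicate case is a new child $B_i$ with center $c_i$ and a pre-existing ball $C'$ of radius $\rho/2$ with center $c'$: here I would use that net points lie in $\dom{B}$, so $c_i \in \dom{B} = B \setminus \bigcup_{r(B'') < \rho} B''$, and that this domain excludes \emph{every} ball of radius strictly less than $\rho$, in particular $C'$ (which has radius $\rho/2 < \rho$ and is present in $\P_h^k$). Therefore $c_i \notin C' = B(c', \rho/2)$, i.e. $\D(c_i, c') \geq \rho/2 = r$. Combining the three cases shows every pair of radius-$r$ balls in $\P_h^{k+1}$ is $r$-separated, closing the induction. The crux throughout is that the definition of $\dom{B}$ deliberately carves out all strictly-smaller balls, which is exactly what prevents a freshly inserted child from landing too close to a previously created ball of its own radius.
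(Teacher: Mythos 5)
Your proof is correct and follows essentially the same route as the paper's: the covering invariant reduces to the fact that the union of domains equals the union of balls (via a minimal-radius ball containing the point) and that the initial ball is never removed, while the separation invariant splits into the sibling case (handled by the packing property) and the cross-parent case (handled by noting the new center lies in $\dom{B^{par}}$, which excludes every strictly smaller ball, hence the earlier-activated ball of the same radius). Your write-up is if anything slightly more careful than the paper's, which states the cross-parent exclusion as ``$(x,a)\notin\dom{B_2}$'' where it means $(x,a)\notin B_2$ as you correctly have it.
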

\begin{proof}
Let $(h, k) \in [H] \times [K]$ be arbitrary.

For the covering invariant notice that $\cup_{B \in \P_h^k} \dom{B} = \cup_{B \in \P_h^k} B$ where we are implicitly taking the domain with respect to the partition $\P_h^k$.  The covering invariant follows then since $\P_h^k$ contains a ball which covers the entire space $\S \times \A$ from the initialization in the algorithm.

{To show the separation invariant, suppose that $B_1$ and $B_2$ are two balls of radius $r$.  If $B_1$ and $B_2$ share a parent, we note by the splitting rule the algorithm maintains the invariant that the centers are at a distance at least $r$ from each other.  Otherwise, suppose without loss of generality that $B_1$ has parent $B^{par}$ and that $B_2$ was activated before $B_1$.  The center of $B_1$ is some point $(x,a) \in \dom{B^{par}}$.  It follows then that $(x,a) \not\in \dom{B_2}$ by definition of the domain as $r(B^{par}) > r(B_2)$.  Thus, their centers are at a distance at least $r$ from each other.}
\end{proof}

The second property is useful as it maintains that the centers of the balls of radius $r$ form an $r-$packing of $\S \times \A$ and hence there are at most $N_r^{pack} \leq N_r$ balls activated of radius $r$.

The next theorem gives an analysis on the number of times that a ball of a given radius will be selected by the algorithm.  {This Lemma also shows the second condition required for a partitioning algorithm to achieve the regret bound as described in Section~\ref{section:discussion}.}
\begin{lemma}[Lemma~\ref{lemma:bound_ball_main} from the main paper]
	\label{lemma:bound_ball}
	For any $h \in [H]$ and child ball $B \in \P_h^K$ (the partition at the end of the last episode $K$) the number of episodes $k \leq K$ such that $B$ is selected by the algorithm is less than $\frac{3}{4}\left(d_{max} / r \right)^2$ where $r = r(B)$.  I.e. denoting by $B_h^k$ the ball selected by the algorithm in step $h$ episode $k$,
	$$| \{ k : B_h^k = B \}| \leq \frac{3}{4} \left(\frac{d_{max}}{r}\right)^2.$$
	
	Moreover, the number of times that ball $B$ and its ancestors have been played is at least $\frac{1}{4} \left(\frac{d_{max}}{r}\right)^2$.
	
	For the case when $B$ is the initial ball which covers the entire space then the number of episodes that $B$ is selected is only one.
\end{lemma}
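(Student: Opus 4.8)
The plan is to track the counter $n_h^k(B)$---which by definition accumulates one increment for every selection of $B$ \emph{or} of any of its ancestors---against the splitting threshold $(d_{max}/r(B))^2$ governing subdivision. Both bounds are two sides of the same accounting: a ball of radius $r$ enters the partition already carrying the count it inherits from its parent, and it ceases to be selectable once its own count reaches $(d_{max}/r)^2$. The one arithmetic fact driving everything is that halving the radius quadruples the threshold, so a parent of radius $2r$ has threshold $(d_{max}/(2r))^2 = \tfrac14(d_{max}/r)^2$, exactly one quarter of the threshold of $B$.

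First I would prove the lower bound (the ``moreover'' claim). Since $B$ is a child ball, it has a parent $B^{par}$ of radius $2r$. By the splitting rule in Algorithm~\ref{alg}, $B^{par}$ was split precisely when its count first satisfied $n_h(B^{par}) \geq (d_{max}/(2r))^2 = \tfrac14(d_{max}/r)^2$, and at that episode the newly created children---including $B$---inherit this count. As $n_h^k(B)$ is nondecreasing in $k$, we get $n_h^K(B) \geq \tfrac14(d_{max}/r)^2$, which is exactly the number of times $B$ and its ancestors have been played.

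For the upper bound I would first observe that $B$ can be selected only while it is relevant for some state, i.e. while $\dom{B} \neq \emptyset$, and that splitting $B$ empties this domain. Indeed, when $B$ is split its children form a $\tfrac12 r$-Net of $\dom{B}$ and hence cover it (Definition~\ref{definition:net}); since the children have strictly smaller radius, the definition of $\dom{\cdot}$ removes all of $\dom{B}$, so $B$ is never relevant again and is never selected after its split. Therefore every selection of $B$ occurs before its count reaches $(d_{max}/r)^2$. Each selection increments $n_h$ by one from its inherited starting value, so the number of selections of $B$ is at most $(d_{max}/r)^2 - \tfrac14(d_{max}/r)^2 = \tfrac34(d_{max}/r)^2$. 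The same bound holds trivially when $B$ is never split by episode $K$, since then $n_h^K(B) < (d_{max}/r)^2$.

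Finally, for the initial ball of radius $d_{max}$ the threshold is $(d_{max}/d_{max})^2 = 1$, so it splits the first time it is selected and is thus chosen exactly once. The only point requiring care is the bookkeeping of the inherited counter together with the precise claim that a split empties the parent's domain; both follow directly from the definitions of $n_h^k(\cdot)$ and $\dom{\cdot}$, with no probabilistic input needed, so I expect this to be the most delicate but still essentially routine part of the argument.
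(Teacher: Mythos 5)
Your proposal is correct and follows essentially the same accounting argument as the paper's proof: the child inherits a count of exactly $\tfrac14(d_{max}/r)^2$ from its parent's split (giving the lower bound), and can be selected at most until its count reaches its own threshold $(d_{max}/r)^2$, yielding the $\tfrac34(d_{max}/r)^2$ upper bound. Your explicit observation that splitting empties $\dom{B}$ so the ball is never selected again is a detail the paper leaves implicit, but it does not change the approach.
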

\begin{proof}
	Consider an arbitrary $h \in [H]$ and child ball $B \in \P_h^K$ such that $r(B) = r$.  Furthermore, let $k$ be the episode for which ball $B$ was activated.  Then $B_h^k$, the ball selected by the algorithm at step $h$ in episode $k$ is the parent of $B$.  Moreover, if $t = n_h^{k+1}(B_h^k)$ is the number of times that $B_h^k$ or it's ancestors have been played then $t = \left(\frac{d_{max}}{r(B_h^k)}\right)^2$ by the activation rule.  Also, $r(B_h^k) = 2r(B)$ by the re-partitioning scheme. Hence, the number of times that $B$ and its ancestors have been played is at least $$t = \left(\frac{d_{max}}{r(B_h^k)}\right)^2 = \left(\frac{d_{max}}{2r(B)}\right)^2 = \frac{1}{4}\left(\frac{d_{max}}{r(B)}\right)^2.$$
	
	The number of episodes that $B$ can be selected (i.e. $| \{ k : B_h^k = B \}|$) by the algorithm is bounded above by $\left(\frac{d_{max}}{r(B)}\right)^2 - \left(\frac{d_{max}}{r(B_h^k)}\right)^2$ as this is the number of samples of $B$ required to split the ball by the partitioning scheme.  However, plugging in $r(B_h^k) = 2r(B)$ gives
	\begin{align*}
	\left(\frac{d_{max}}{r(B)}\right)^2 - \left(\frac{d_{max}}{2r(B)}\right)^2 = \frac{3}{4} \left(\frac{d_{max}}{r(B)}\right)^2
	\end{align*}
	as claimed.
	
	Lastly if $B$ is the initial ball which covers the entire space then $r(B) = d_{max}$ initially and so the ball is split after it is selected only once.
\end{proof}

Next we provide a recursive relationship for the update in $Q$ estimates.

\begin{lemma}
\label{lemma:recursive_relationship}
	For any $h, k \in [H] \times [K]$ and ball $B \in \P_h^k$ let $t = n_h^k(B)$ be the number of times that $B$ or its ancestors were encountered during the algorithm before episode $k$.  Further suppose that $B$ and its ancestors were encountered at step $h$ of episodes $k_1 < k_2 < \ldots < k_t < k$.  By the update rule of $Q$ we have that:
	$$ Q_{h}^k(B) = \Ind{t = 0} H + \sum_{i=1}^t \alpha_t^i \left(r_h(x_h^{k_i}, a_h^{k_i}) + \Vhat{h+1}{k_i}(x_{h+1}^{k_i}) + b(i)\right).$$
\end{lemma}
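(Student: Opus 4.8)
The plan is to prove the identity by induction on $t = n_h^k(B)$, unwinding the one-step $Q$-update in Equation~\ref{eqn:update}. Before starting the induction I would record two bookkeeping facts that let me treat ``$B$ or its ancestors'' as a single object evolving across episodes. First, $\Qhat{h}{\cdot}(B)$ changes only in those episodes where the relevant ball selected by the algorithm is $B$ (or the ancestor that later became $B$ through \textproc{Split Ball}); in every intervening episode the value is carried over unchanged, so that $\Qhat{h}{k}(B) = \Qhat{h}{k_t+1}(B)$. Second, the \textproc{Split Ball} routine makes each child inherit both the estimate and the visit counter from its parent, so the $t$ updates indexed by $k_1 < \cdots < k_t$ are exactly the $t$ successive increments of the shared counter, and the bonus added at the $i$-th such update is $b(i)$, the confidence radius evaluated at the post-increment count.

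For the induction, the base case $t = 0$ is immediate: the ball has never been selected, so its value is the initialization $\Qhat{h}{1}(B) = H$, which matches $\Ind{t=0}H$ together with an empty sum. For the step, assume the formula after $t-1$ encounters and apply Equation~\ref{eqn:update} at episode $k_t$:
\[
\Qhat{h}{k_t+1}(B) = (1-\alpha_t)\,\Qhat{h}{k_t}(B) + \alpha_t\Big(r_h(x_h^{k_t}, a_h^{k_t}) + \Vhat{h+1}{k_t}(x_{h+1}^{k_t}) + b(t)\Big).
\]
The only algebraic identity needed is that, directly from the definition $\alpha_t^i = \alpha_i \prod_{j=i+1}^t(1-\alpha_j)$, one has $\alpha_t^i = (1-\alpha_t)\,\alpha_{t-1}^i$ for $i \leq t-1$ and $\alpha_t^t = \alpha_t$. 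Substituting the inductive hypothesis for $\Qhat{h}{k_t}(B)$ and distributing the factor $(1-\alpha_t)$ over the sum then collapses everything into $\sum_{i=1}^t \alpha_t^i\big(r_h(x_h^{k_i}, a_h^{k_i}) + \Vhat{h+1}{k_i}(x_{h+1}^{k_i}) + b(i)\big)$, as claimed; finally $\Qhat{h}{k}(B) = \Qhat{h}{k_t+1}(B)$ by the first bookkeeping fact.

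The one point that requires care — and the closest thing to an obstacle — is the transition from $t=0$ to $t=1$, since the inductive hypothesis there supplies the initialization value $H$ rather than a sum. This is resolved by the specific choice of learning rate in Equation~\ref{eqn:learning_rate}: at the first encounter $\alpha_1 = (H+1)/(H+1) = 1$, so the term $(1-\alpha_1)\Qhat{h}{k_1}(B)$ vanishes regardless of the initialization, leaving exactly $\alpha_1^1$ times the first observed quantity and matching the $t=1$ instance of the formula (where $\Ind{t=0}=0$). After this first step the $H$ initialization never reappears, and the remaining induction is the routine telescoping described above; notably, no concentration or Lipschitz arguments enter here, as this lemma is a purely algebraic restatement of the update rule.
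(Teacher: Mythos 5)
Your proposal is correct and follows essentially the same route as the paper: induction on $t$, unwinding the update rule at episode $k_t$ via the identity $\alpha_t^i = (1-\alpha_t)\alpha_{t-1}^i$ for $i<t$, with the inheritance through \textproc{Split Ball} and the carry-over of $\Qhat{h}{k}(B)$ between updates handled as bookkeeping. Your explicit observation that $\alpha_1 = 1$ kills the initialization term for $t \geq 1$ is the same mechanism the paper encodes (slightly more tersely) via the vanishing coefficient $(1-\alpha_t)\alpha_{t-1}^0$ on the $H$ term.
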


\begin{proof}
We show the claim by induction on $t = n_h^k(B)$.  

First suppose that $t = 0$, i.e. that the ball $B$ has not been encountered before by the algorithm.  Then initially $\Qhat{h}{1}(B) = H = \Ind{t = 0} H$.

Now for the step case we notice that $\Qhat{h}{k}(B)$ was last updated at episode $k_t$.  $k_t$ is either the most recent episode when ball $B$ was encountered, or the most recent episode when it's parent was encountered if $B$ was activated and not yet played.  In either case by the update rule (Equation~\ref{eqn:update}) we have
\begin{align*}
\Qhat{h}{k}(B) & = (1 - \alpha_t)\Qhat{h}{k_t}(B) + \alpha_t\left(r_h(x_h^{k_t}, a_h^{k_t}) + \Vhat{h+1}{k_t}(x_{h+1}^{k_t}) + b(t)\right) \\
& = (1 - \alpha_t)\alpha_{t-1}^0 H + (1 - \alpha_t) \sum_{i=1}^{t-1} \alpha_t^i \left(r_h(x_h^{k_i}, a_h^{k_i}) + \Vhat{h+1}{k_i}(x_{h+1}^{k_i}) + b(i)\right) \\ & + \alpha_t\left(r_h(x_h^{k_t}, a_h^{k_t}) + V_{h+1}^{k_t}(x_{h+1}^{k_t}) + b(t)\right) \text{ by the induction hypothesis}\\
& = \Ind{t = 0} H + \sum_{i=1}^t \alpha_t^i \left(r_h(x_h^{k_i}, a_h^{k_i}) + \Vhat{h+1}{k_i}(x_{h+1}^{k_i}) + b(i)\right)
\end{align*}
by definition of $\alpha_t^i$.
\end{proof}

The next lemma extends the relationship between the optimal $Q$ value $Q_h^\star(x,a)$ for any $(x, a) \in \S \times \A$ to the estimate of the $Q$ value for any ball $B$ containing $(x,a)$.  For ease of notation we denote $(\Vhat{h}{k}(x) - V_h^\star(x)) = (\Vhat{h}{k} - V_h^\star)(x)$.

\begin{lemma}
\label{lemma:recursive_difference}
	For any $(x, a, h, k) \in \S \times \A \times [H] \times [K]$ and ball $B \in \P_h^k$ such that $(x, a) \in \dom{B}$ then if $t = n_h^k(B)$ and $B$ and its ancestors were previously encountered at step $h$ in episodes $k_1 < k_2 < \ldots < k_t < k$ then 
	\begin{align*}
	\Qhat{h}{k}(B) - Q_h^\star(x,a) & = \Ind{t = 0}(H - Q_h^\star(x,a)) +  \sum_{i=1}^t \alpha_t^i \Big((\Vhat{h+1}{k_i} - V_{h+1}^\star)(x_{h+1}^{k_i}) + V_{h+1}^\star(x_{h+1}^{k_i})\\
	& \hspace{1cm} - \Exp{V_{h+1}^\star(\hat{x}) \mid x_h^{k_i}, a_h^{k_i}} + b(i) + Q_h^\star(x_h^{k_i}, a_h^{k_i}) - Q_h^\star(x,a) \Big)
	\end{align*}
\end{lemma}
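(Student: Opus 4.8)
The plan is to recognize that this is an exact algebraic identity rather than an inequality, so no probabilistic tools are needed here; everything follows by unwinding the recursive form of the $Q$-estimate from Lemma~\ref{lemma:recursive_relationship} and inserting the Bellman equation at each historical sample point. First I would invoke Lemma~\ref{lemma:recursive_relationship} to write
$$\Qhat{h}{k}(B) = \Ind{t = 0} H + \sum_{i=1}^t \alpha_t^i \left(r_h(x_h^{k_i}, a_h^{k_i}) + \Vhat{h+1}{k_i}(x_{h+1}^{k_i}) + b(i)\right),$$
where the episodes $k_1 < \cdots < k_t$ are exactly those in which $B$ or an ancestor was selected at step $h$.

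The next step is to eliminate the observed reward using the Bellman optimality equation~\eqref{eqn:bellman_equation} at the sampled pair $(x_h^{k_i}, a_h^{k_i})$, namely $r_h(x_h^{k_i}, a_h^{k_i}) = Q_h^\star(x_h^{k_i}, a_h^{k_i}) - \Exp{V_{h+1}^\star(\hat{x}) \mid x_h^{k_i}, a_h^{k_i}}$. Substituting this in, and then splitting the estimate as $\Vhat{h+1}{k_i}(x_{h+1}^{k_i}) = (\Vhat{h+1}{k_i} - V_{h+1}^\star)(x_{h+1}^{k_i}) + V_{h+1}^\star(x_{h+1}^{k_i})$, produces exactly the groups of terms appearing in the claim: the value-estimation error $(\Vhat{h+1}{k_i} - V_{h+1}^\star)(x_{h+1}^{k_i})$, a transition-noise term $V_{h+1}^\star(x_{h+1}^{k_i}) - \Exp{V_{h+1}^\star(\hat{x}) \mid x_h^{k_i}, a_h^{k_i}}$, the bonus $b(i)$, and the optimal $Q$-value at the sampled point.

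Finally I would subtract $Q_h^\star(x,a)$ and fold it inside the weighted sum using the normalization $\sum_{i=1}^t \alpha_t^i = 1$ from Lemma~\ref{lemma:lr}, writing $Q_h^\star(x,a) = \sum_{i=1}^t \alpha_t^i\, Q_h^\star(x,a)$ so that it pairs with each $Q_h^\star(x_h^{k_i}, a_h^{k_i})$ to form the discretization difference $Q_h^\star(x_h^{k_i}, a_h^{k_i}) - Q_h^\star(x,a)$. The only point requiring care is the boundary case $t = 0$: here the sum is empty and the normalization $\sum_{i=1}^t \alpha_t^i = 1$ fails, so I would treat it separately, noting that the estimate is still its initialized value $H$, which yields the leading term $\Ind{t=0}(H - Q_h^\star(x,a))$ and matches the claim with an empty sum. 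Since every manipulation is an equality, there is no genuine obstacle; the main thing to verify is that the $t=0$ case is cleanly absorbed by the indicator and that no term is double counted when distributing $-Q_h^\star(x,a)$ across the convex weights $\alpha_t^i$.
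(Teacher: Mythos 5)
Your proposal is correct and follows essentially the same route as the paper: expand $\Qhat{h}{k}(B)$ via Lemma~\ref{lemma:recursive_relationship}, substitute the Bellman equation to trade $r_h(x_h^{k_i},a_h^{k_i})$ for $Q_h^\star(x_h^{k_i},a_h^{k_i}) - \Exp{V_{h+1}^\star(\hat{x}) \mid x_h^{k_i}, a_h^{k_i}}$, split $\Vhat{h+1}{k_i}$ around $V_{h+1}^\star$, and distribute $Q_h^\star(x,a)$ over the weights using $\Ind{t=0} + \sum_{i=1}^t \alpha_t^i = 1$. The paper packages this as an induction on $t$, but the content of its step case is exactly your direct algebraic computation, and your handling of the $t=0$ boundary matches the paper's.
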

\begin{proof}
Consider any $(x, a) \in \S \times \A$ and $h \in [H]$ arbitrary and an episode $k \in [K]$.  Furthermore, let $B$ be any ball such that $(x, a) \in B$.  First notice by Lemma~\ref{lemma:lr} that $\Ind{t = 0} + \sum_{i=1}^t \alpha_t^i = 1$ for any $t \geq 0$.
	
We show the claim by induction on $t = n_h^k(B)$.
	
For the base case that $t = 0$ then $\Qhat{h}{k}(B) = H$ as it has not been encountered before by the algorithm.  Then $$\Qhat{h}{k}(B) - Q_h^\star(x,a) = H - Q_h^\star(x,a) = \Ind{t = 0}(H - Q_h^\star(x,a)).$$
	
For the step case we use the fact that $\sum_{i=1}^t \alpha_t^i + \Ind{t = 0} = 1$.  Thus, $Q_h^\star(x,a) = \sum_{i=1}^t \alpha_t^i Q_h^\star(x,a) + \Ind{t = 0} Q_h^\star(x,a)$.  Subtracting this from $\Qhat{h}{k}(B)$ and using Lemma~\ref{lemma:recursive_relationship} yields
\begin{align*}
    \Qhat{h}{k}(B) - Q_h^\star(x,a) & = \Ind{t = 0} H + \sum_{i=1}^t \alpha_t^i \left(r_h(x_h^{k_i}, a_h^{k_i}) + \Vhat{h+1}{k_i}(x_{h+1}^{k_i}) + b(i)\right) \\
    & \qquad - \Ind{t = 0} Q_h^\star(x,a) - \sum_{i=1}^t \alpha_t^i Q_h^\star(x,a) \\
	& = \Ind{t = 0}(H - Q_h^\star(x,a)) + \sum_{i=1}^t \alpha_t^i \Big( r_h(x_h^{k_i}, a_h^{k_i}) + \Vhat{h+1}{k_i}(x_{h+1}^{k_i}) + b(i) \\
	& \qquad + Q_h^\star(x_h^{k_i}, a_h^{k_i}) - Q_h^\star(x_h^{k_i}, a_h^{k_i}) - Q_h^\star(x,a) \Big).
\end{align*}
However, using the Bellman Equations (\ref{eqn:bellman_equation}) we know that $$Q_h^\star(x_h^{k_i}, a_h^{k_i}) = r_h(x_h^{k_i}, a_h^{k_i}) + \Exp{V_{h+1}^\star(\hat{x}) \mid x_h^{k_i}, a_h^{k_i}}.$$  Using this above we get that
\begin{align*}
		\Qhat{h}{k}(B) - Q_h^\star(x,a) & = \Ind{t = 0}(H - Q_h^\star(x,a)) + \sum_{i=1}^t \alpha_t^i \Big(\Vhat{h+1}{k_i}(x_{h+1}^{k_i}) - \Exp{V_{h+1}^\star(\hat{x})\mid x_h^{k_i}, a_h^{k_i}} \\
		&  + b(i) + Q_h^\star(x_h^{k_i}, a_h^{k_i}) -  Q_h^\star(x,a)\Big) \\
		& = \Ind{t = 0}(H - Q_h^\star(x,a)) +  \sum_{i=1}^t \alpha_t^i \Big((\Vhat{h+1}{k_i} - V_{h+1}^\star)(x_{h+1}^{k_i}) \\ & + V_{h+1}^\star(x_{h+1}^{k_i}) - \Exp{V_{h+1}^\star(\hat{x}) \mid x_h^{k_i}, a_h^{k_i}} + b(i) + Q_h^\star(x_h^{k_i}, a_h^{k_i}) -  Q_h^\star(x,a) \Big).
\end{align*}
\end{proof}

Consider now $V_{h+1}^\star(x_{h+1}^{k_i}) - \Exp{V_{h+1}^\star(\hat{x}) \mid x_h^{k_i}, a_h^{k_i}}$.  We notice that as the next state $x_{h+1}^{k_i}$ is drawn from $\Pr_h(\cdot \mid x_h^{k_i}, a_h^{k_i})$ then $\Exp{V_{h+1}^\star(x_{h+1}^{k_i})} = \Exp{V_{h+1}^\star(\hat{x}) \mid x_h^{k_i}, a_h^{k_i}}$.  Thus, this sequence forms a martingale difference sequence and so by Azuma-Hoeffding's inequality we are able to show the following.

\begin{lemma}
\label{lemma:concentration_bound}
For all $(x,a, h, k) \in \S \times \A \times [H] \times [K]$ and for all $\pfail \in (0, 1)$ we have with probability at least $1 - \pfail/2$ if $(x,a) \in B$ for some $B \in \P_h^k$ then for $t = n_h^k(B)$ and episodes $k_1 < k_2 < \ldots < k_t < k$ where $B$ and its ancestors were encountered at step $h$ before episode $k$ then $$\left| \sum_{i=1}^t \alpha_t^i \left( V_{h+1}^\star(x_{h+1}^{k_i}) - \Exp{V_{h+1}^\star(\hat{x}) \mid x_h^{k_i}, a_h^{k_i}} \right) \right| \leq H\sqrt{2 \sum_{i=1}^t (\alpha_t^i)^2 \log(4HK/\pfail)}.$$
\end{lemma}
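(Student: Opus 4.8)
The plan is to recognize the left-hand side as a weighted martingale and to apply the Azuma--Hoeffding inequality, with the coefficients $\alpha_t^i$ playing the role of bounded, predictable weights. First I would fix a step $h$ and, for each episode $m$, set
$$D_h^m = V_{h+1}^\star(x_{h+1}^m) - \Exp{V_{h+1}^\star(\hat x) \mid x_h^m, a_h^m}.$$
Working with the filtration in which the new information revealed at step $h$ of episode $m$ is the sampled transition $x_{h+1}^m \sim \Pr_h(\cdot \mid x_h^m, a_h^m)$, the remark immediately preceding the lemma shows that $\Exp{V_{h+1}^\star(x_{h+1}^m) \mid x_h^m, a_h^m} = \Exp{V_{h+1}^\star(\hat x) \mid x_h^m, a_h^m}$, so $\{D_h^m\}$ is a martingale difference sequence. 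Since the rewards lie in $[0,1]$ and there are at most $H$ remaining steps, we have $0 \le V_{h+1}^\star \le H$, and hence $|D_h^m| \le H$.

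Next, for a fixed ball $B$ and a fixed target count $t$, the episodes $k_1 < \cdots < k_t$ at which $B$ or its ancestors are encountered are revealed sequentially, and crucially the event that episode $m$ is the $i$-th such encounter is measurable with respect to the history up to the selection of $a_h^m$, i.e. \emph{before} $x_{h+1}^m$ is drawn. Thus, once $t$ is fixed, the $\alpha_t^i$ are deterministic and the terms $\alpha_t^i D_h^{k_i}$ form a martingale difference sequence with $|\alpha_t^i D_h^{k_i}| \le \alpha_t^i H$. Applying Azuma--Hoeffding with these coefficients and $\lambda = H\sqrt{2 \sum_{i=1}^t (\alpha_t^i)^2 \log(4HK/\pfail)}$ yields
$$\Pr\left( \left| \sum_{i=1}^t \alpha_t^i D_h^{k_i} \right| \ge \lambda \right) \le 2 \exp\left( -\frac{\lambda^2}{2 H^2 \sum_{i=1}^t (\alpha_t^i)^2} \right) = 2 \exp\left( -\log(4HK/\pfail) \right) = \frac{\pfail}{2HK}.$$

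Finally I would take a union bound over $h \in [H]$, over the count $t \in [K]$, and over the (finitely many) activated balls, whose number is controlled by the separation/partition invariants of Lemma~\ref{lemma:partition} and the count bounds of Lemma~\ref{lemma:bound_ball}, so that the stated inequality holds simultaneously for all $(x,a,h,k)$ and all $B$ with $(x,a) \in \dom{B}$ with probability at least $1 - \pfail/2$. The main obstacle is precisely that $t = n_h^k(B)$ is itself a random quantity, so the weights $\alpha_t^i$ cannot be regarded as fixed a priori; the remedy is to condition on each possible value of $t$ separately and to verify the predictability of the encounter indicators, which is exactly what legitimizes treating $\{\alpha_t^i D_h^{k_i}\}$ as a genuine martingale difference sequence before paying for the choice of $t$, $h$, and $B$ in the union bound.
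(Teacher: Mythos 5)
Your core argument---recognizing the sum as a weighted martingale with predictable weights $\alpha_t^i$, bounding $|V_{h+1}^\star|\le H$, and applying Azuma--Hoeffding for each fixed count $t$ to get per-event failure probability $\pfail/(2HK)$---is exactly the paper's argument. The gap is in your final union bound. You propose to union over $h\in[H]$, over $t\in[K]$, \emph{and} over the activated balls; with per-event failure probability $\pfail/(2HK)$ this gives total failure probability on the order of $|\P_h^K|\cdot\pfail/2$, which exceeds the claimed $\pfail/2$ unless the log factor in the deviation is enlarged. There is also a more basic problem with ``union bounding over activated balls'': the collection of activated balls is itself data-dependent (their centers are chosen at random locations by the algorithm), so you cannot directly index a union bound by it without first unioning over all balls that \emph{could} be activated, which is far too many.

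The paper closes both issues with one observation that your write-up is missing: the quantity $\sum_{i=1}^t \alpha_t^i Z_i$ attached to a ball $B$ with count $t=n_h^k(B)$ is \emph{identical} to the quantity attached to whichever ancestor of $B$ (or $B$ itself) was selected at the episode where that lineage's count last reached $t$---children inherit their counts and their concentration from their parents, and the sum does not change in episodes where the lineage is not selected. Hence the distinct events that must be controlled are indexed by the $(h,k)$ pairs at which some ball is selected and its count incremented: at most $HK$ events, each deterministically indexed, each with failure probability $\pfail/(2HK)$, totalling $\pfail/2$. This also disposes of the randomness of $t$ (each realized value of $t$ for each lineage corresponds to exactly one such $(h,k)$ pair), which is the obstacle you correctly identified but resolved with an over-counted union bound.
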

\begin{proof}
Consider the sequence $$k_i = \min(k, \min\{ \hat{k} : n^{\hat{k}}_h(B^a) = i \text{ and } B^a \text{ is an ancestor of } B \text{ or } B \text{ itself}\}).$$  Clearly $k_i$ is the episode for which $B$ or its ancestors were encountered at step $h$ for the $i$-th time (as once a ball is split it is never chosen by the algorithm again).  Setting $$Z_i = \Ind{k_i \leq k}\left( V_{h+1}^\star(x_{h+1}^{k_i}) - \Exp{V_{h+1}^\star(\hat{x}) \mid x_h^{k_i}, a_h^{k_i}} \right)$$ then $Z_i$ is a martingale difference sequence with respect to the filtration $\mathcal{F}_i$ which we denote as the information available to the agent up to an including the step $k_i$.  Moreover, as the sum of a martingale difference sequence is a martingale then for any $\tau \leq K$, $\sum_{i=1}^\tau Z_i$ is a martingale.  Noticing that the difference between subsequent terms is bounded by $H \alpha_\tau^{i}$ and Azuma-Hoeffding's inequality we see that for a fixed $\tau \leq K$

\begin{align*}
&\Pr \left( \left| \sum_{i=1}^\tau \alpha_\tau^i Z_i \right| \leq H\sqrt{2 \sum_{i=1}^\tau (\alpha_\tau^i)^2 \log\left(\frac{4 HK }{\pfail}\right)} \right) \\
&\qquad \geq 1 - 2\exp \left( - \frac{2H^2 \sum_{i=1}^\tau (\alpha_\tau^i)^2 \log(\frac{4HK}{\pfail})}{2 H^2 \sum_{i=1}^\tau (\alpha_\tau^i)^2} \right) \\
&\qquad = 1 - \frac{\pfail}{2HK}.
\end{align*}
Since the inequality holds for fixed $\tau \leq K$ the the inequality must also hold for the random stopping time $\tau = t = n_h^k(B) \leq K$.  Moreover, for this stopping time each of the indicator functions will be $1$.

{Taking a union bound over the number of episodes and over all $H$ the result follows.  We only need to union bound over the number of episodes instead of the number of balls as the inequality is satisfied for all balls not selected in a given round as it inherits its concentration from its parent ball because the value for $t$ does not change.}

We also notice that for $t > 0$ that $\sum_{i=1}^t (\alpha_t^i)^2 \leq \frac{2H}{t}$ by Lemma~\ref{lemma:lr} and so $$H\sqrt{2 \sum_{i=1}^t (\alpha_t^i)^2 \log(4HK/\pfail)} \leq H \sqrt{2 \frac{2H}{t} \log(4HK/\pfail)} = 2 \sqrt{\frac{H^3 \log(4HK/\pfail)}{t}}.$$
\end{proof}

The next lemma uses the activation rule and Assumption~\ref{assumption:Lipschitz} to bound the difference between the optimal $Q$ functions at a point $(x,a)$ in $B$ from the points sampled in ancestors of $B$ by the algorithm.  This corresponds to the accumulation of discretization errors in the algorithm by accumulating estimates over a ball.

\begin{lemma}
\label{lemma:radius_ball_bonus}
For any $(x,a,h,k) \in \S \times \A \times [H] \times [K]$ and ball $B \in \P_h^k$ with $(x,a) \in \dom{B}$ if $B$ and its ancestors were encountered at step $h$ in episodes $k_1 < k_2 < \ldots < k_t < k$ where $t = n_h^k(B)$ then

$$\sum_{i=1}^t \alpha_t^i \left|Q_h^\star(x_h^{k_i}, a_h^{k_i}) - Q_h^\star(x,a)\right| \leq \frac{4Ld_{max}}{\sqrt{t}}.$$
\end{lemma}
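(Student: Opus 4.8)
The plan is to first strip away the $Q$-function via its regularity and then reduce the claim to a purely geometric statement about the radii of the balls selected along the ancestry of $B$. By Assumption~\ref{assumption:Lipschitz}, each summand satisfies $|Q_h^\star(x_h^{k_i}, a_h^{k_i}) - Q_h^\star(x,a)| \le L \D((x_h^{k_i}, a_h^{k_i}), (x,a))$, so after pulling out the constant $L$ it suffices to bound $\sum_{i=1}^t \alpha_t^i \D((x_h^{k_i}, a_h^{k_i}), (x,a))$ by $4 d_{max}/\sqrt{t}$.

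The central observation is that the $i$-th sample was drawn while some ancestor $B'_i$ of $B$ (possibly $B$ itself) was the selected ball, so $(x_h^{k_i}, a_h^{k_i}) \in \dom{B'_i}$. Writing $\rho_i := r(B'_i)$, I would bound $\D((x_h^{k_i}, a_h^{k_i}), (x,a))$ by a constant multiple of $\rho_i$: since $(x,a) \in \dom{B}$ with $B$ a descendant of $B'_i$, and since the ancestor radii shrink geometrically by factors of two with each child center lying in its parent's domain (cf. Lemma~\ref{lemma:partition}), both points are controlled by the selected ball and the distance is at most its diameter, $\le 2\rho_i$. Next I invoke the splitting rule: a ball of radius $\rho$ is split once its visit count reaches $(d_{max}/\rho)^2$ and inherits its count from its parent (cf. Lemma~\ref{lemma:bound_ball}), so when $B'_i$ is selected for the overall $i$-th time its count obeys $i \le (d_{max}/\rho_i)^2$, i.e. $\rho_i \le d_{max}/\sqrt{i}$. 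Chaining the two bounds gives the per-term estimate $\D((x_h^{k_i}, a_h^{k_i}), (x,a)) \le 2\rho_i \le 2 d_{max}/\sqrt{i}$.

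It then remains to carry out the weighted sum. Substituting yields $\sum_{i=1}^t \alpha_t^i \D((x_h^{k_i}, a_h^{k_i}),(x,a)) \le 2 d_{max} \sum_{i=1}^t \alpha_t^i / \sqrt{i}$, and the second property of Lemma~\ref{lemma:lr}, namely $\sum_{i=1}^t \alpha_t^i/\sqrt{i} \le 2/\sqrt{t}$, closes the argument at $4 d_{max}/\sqrt{t}$. Reintroducing the Lipschitz constant $L$ recovers the stated bound $4 L d_{max}/\sqrt{t}$.

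The main obstacle I anticipate is the geometric step, specifically pinning down the constant relating $\D((x_h^{k_i}, a_h^{k_i}),(x,a))$ to $\rho_i$. Because a half-radius child centered at a point of its parent's domain need not itself be contained in the parent, a naive telescoping of the triangle inequality over the ancestor chain produces a constant strictly larger than $2$; obtaining the clean factor $2$ that yields exactly $4 L d_{max}$ requires arguing directly, using the domain and separation invariants of Lemma~\ref{lemma:partition}, that the sampled pair and $(x,a)$ both lie within the selected ball $B'_i$. The index-to-radius relation $\rho_i \le d_{max}/\sqrt{i}$ also warrants care, since it rests on the exact visit-count thresholds of the splitting rule and on the fact that the shared counter $n_h$ records selections of $B'_i$ together with its (and hence $B$'s) ancestors.
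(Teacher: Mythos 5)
Your proposal matches the paper's proof essentially step for step: Lipschitzness reduces the claim to bounding $\sum_{i=1}^t \alpha_t^i\, \D((x_h^{k_i},a_h^{k_i}),(x,a))$, each distance is bounded by the diameter $2r(B_h^{k_i})$ of the selected ancestor ball on the grounds that it contains both points, the splitting rule gives $r(B_h^{k_i}) \le d_{max}/\sqrt{i}$, and the property $\sum_{i=1}^t \alpha_t^i/\sqrt{i} \le 2/\sqrt{t}$ from Lemma~\ref{lemma:lr} closes the argument. The containment subtlety you flag (a half-radius child centered in its parent's domain need not lie inside the parent) is present but unaddressed in the paper's own proof as well, which simply asserts that $B_h^{k_i}$ contains both $(x,a)$ and $(x_h^{k_i},a_h^{k_i})$.
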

\begin{proof}
Denote by $B_h^{k_i}$ as the ball selected in step $h$ of episode $k_i$ and $n_h^{k_i}$ as $n_h^{k_i}(B_h^{k_i})$.  Then each $B_h^{k_i}$ is an ancestor of $B$ and s both $(x,a)$ and $(x_h^{k_i}, a_h^{k_i})$.  Hence by the Lipschitz assumption (Assumption~\ref{assumption:Lipschitz}) we have that $|Q_h^\star(x_h^{k_i}, a_h^{k_i}) - Q_h^\star(x,a)| \leq L \diam{B_h^{k_i}} \leq 2L r(B_h^{k_i}).$

However, $r(B_h^{k_i}) \leq \frac{d_{max}}{\sqrt{i}}$.  Indeed, by the re-partition rule we split when $n_h^{k_i} = \left( \frac{d_{max}}{r(B_h^{k_i})} \right)^2$ (and afterwards $B_h^{k_i}$ is not chosen again) and so $n_h^{k_i} \leq \left( \frac{d_{max}}{r(B_h^{k_i})} \right)^2$.  Square rooting this gives that $\sqrt{n_h^{k_i}} \leq  \frac{d_{max}}{r(B_h^{k_i})}$.  However, as $n_h^{k_i} = i$ we get that $r(B_h^{k_i}) \leq \frac{d_{max}}{\sqrt{i}}.$  Using this we have that
\begin{align*}
\sum_{i=1}^t \alpha_t^i \left|Q_h^\star(x_h^{k_i}, a_h^{k_i}) - Q_h^\star(x,a)\right| & \leq \sum_{i=1}^t \alpha_t^i 2L r(B_h^{k_i}) \\
& \leq 2\sum_{i=1}^t \alpha_t^i L d_{max} \frac{1}{\sqrt{i}} \\
& \leq 4 L d_{max} \frac{1}{\sqrt{t}} \text{ by Lemma}~\ref{lemma:lr}.
\end{align*}
\end{proof}

The next lemma provides an upper and lower bound on the difference between $\Qhat{h}{k}(B)$ and $Q_h^\star(x,a)$ for any $(x,a) \in \dom{B}$.  It also shows that our algorithm is \textit{optimistic}, in that the estimates are upper bounds for the true quality and value function for the optimal policy \cite{simchowitz2019nonasymptotic}.

\begin{lemma}[Expanded version of Lemma~\ref{lemma:upper_lower_bounds_main} from the main paper]
\label{lemma:upper_lower_bounds}
For any $\pfail \in (0,1)$ if $\beta_t = 2\sum_{i=1}^t \alpha_t^i b(i)$ then $$\beta_t \leq 8\sqrt{\frac{H^3 \log(4HK/\pfail)}{t}} + 16\frac{L d_{max}}{\sqrt{t}}$$ and $$\beta_t \geq 4\sqrt{\frac{H^3 \log(4HK/\pfail)}{t}} +  8\frac{L d_{max}}{\sqrt{t}}.$$  Moreover, with probability at least $1 - \pfail/2$ the following holds simultaneously for all $(x,a,h,k) \in \S \times \A \times [H] \times [K]$ and ball $B$ such that $(x,a) \in \dom{B}$ where $t = n_h^k(B)$ and $k_1 < \ldots < k_t$ are the episodes where $B$ or its ancestors were encountered previously by the algorithm
\begin{align*}
	0 \leq \Qhat{h}{k}(B) - Q_h^\star(x,a) \leq \Ind{t = 0}H + \beta_t + \sum_{i=1}^t \alpha_t^i(\Vhat{h+1}{k_i} - V_{h+1}^\star)(x_{h+1}^{k_i})
\end{align*}
\end{lemma}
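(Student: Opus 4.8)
The plan is to dispatch the two deterministic $\beta_t$ bounds first by direct computation, then to read off the two-sided bound on $\Qhat{h}{k}(B) - Q_h^\star(x,a)$ from the exact recursion of Lemma~\ref{lemma:recursive_difference}, controlling its error terms with Lemma~\ref{lemma:concentration_bound} (statistical) and Lemma~\ref{lemma:radius_ball_bonus} (discretization). Only the lower bound, i.e.\ optimism, will require an induction.

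For the $\beta_t$ bounds, recall $b(i) = 2\sqrt{H^3 \log(4HK/\pfail)/i} + 4Ld_{max}/\sqrt{i}$, so both summands carry the common factor $1/\sqrt{i}$ and $\beta_t = 2\sum_{i=1}^t \alpha_t^i b(i) = 2\bigl(2\sqrt{H^3 \log(4HK/\pfail)} + 4Ld_{max}\bigr)\sum_{i=1}^t \alpha_t^i/\sqrt{i}$. The second property of Lemma~\ref{lemma:lr} gives $\sum_{i=1}^t \alpha_t^i/\sqrt{i} \in [1/\sqrt{t},\, 2/\sqrt{t}]$, and multiplying through yields exactly the stated upper bound $8\sqrt{H^3\log(4HK/\pfail)/t} + 16 Ld_{max}/\sqrt{t}$ and lower bound $4\sqrt{H^3\log(4HK/\pfail)/t} + 8 Ld_{max}/\sqrt{t}$.

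Next I would prove the upper bound. Starting from the identity of Lemma~\ref{lemma:recursive_difference}, I group the right-hand side into the future-value gap $\sum_{i=1}^t \alpha_t^i (\Vhat{h+1}{k_i} - V_{h+1}^\star)(x_{h+1}^{k_i})$, the martingale term $\sum_{i=1}^t \alpha_t^i\bigl(V_{h+1}^\star(x_{h+1}^{k_i}) - \Exp{V_{h+1}^\star(\hat{x}) \mid x_h^{k_i}, a_h^{k_i}}\bigr)$, the discretization term $\sum_{i=1}^t \alpha_t^i\bigl(Q_h^\star(x_h^{k_i}, a_h^{k_i}) - Q_h^\star(x,a)\bigr)$, and the bonus $\sum_{i=1}^t \alpha_t^i b(i) = \beta_t/2$. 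On the event of probability at least $1 - \pfail/2$ from Lemma~\ref{lemma:concentration_bound}, the martingale term is at most $2\sqrt{H^3\log(4HK/\pfail)/t}$ in absolute value, and Lemma~\ref{lemma:radius_ball_bonus} bounds the discretization term by $4Ld_{max}/\sqrt{t}$; their sum is at most $\beta_t/2$ by the lower bound just established. Since $Q_h^\star \geq 0$ gives $\Ind{t=0}(H - Q_h^\star(x,a)) \leq \Ind{t=0}H$, the three error terms and the indicator combine to the claimed upper bound, keeping the nonnegative future-value gap intact.

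For optimism I would run a backward induction on $h$, with base case $\Vhat{H+1}{k} \equiv 0 \equiv V_{H+1}^\star$. Assuming $\Vhat{h+1}{k}(x) \geq V_{h+1}^\star(x)$ for all $k,x$, the future-value gap in Lemma~\ref{lemma:recursive_difference} is nonnegative; bounding the martingale and discretization terms from below by $-\bigl(2\sqrt{H^3\log(4HK/\pfail)/t} + 4Ld_{max}/\sqrt{t}\bigr)$ and noting $\beta_t/2$ dominates this (again the lower bound on $\beta_t$), together with the $t=0$ case handled by $Q_h^\star \leq H$, gives $\Qhat{h}{k}(B) \geq Q_h^\star(x,a)$ for every $(x,a) \in \dom{B}$. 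Lifting to the value function uses Lemma~\ref{lemma:partition}: the domains cover $\S \times \A$, so for any $x$ some relevant ball contains $(x, \pi_h^\star(x))$, whence $\max_B \Qhat{h}{k}(B) \geq Q_h^\star(x, \pi_h^\star(x)) = V_h^\star(x)$, and since $V_h^\star \leq H$ the truncation at $H$ preserves the inequality, closing the induction. The main obstacle is that both inequalities must hold simultaneously for \emph{all} $(x,a,h,k,B)$ on one high-probability event; this is exactly what the union bound inside Lemma~\ref{lemma:concentration_bound} supplies, with non-selected balls inheriting their concentration from the parent. The true crux, however, is the tight matching of constants: $b(t)$ is engineered precisely so that $\beta_t/2$ dominates the combined statistical and discretization error, which is simultaneously what forces optimism and what makes the upper bound collapse cleanly onto $\beta_t$.
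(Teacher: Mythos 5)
Your proposal is correct and follows essentially the same route as the paper's proof: the $\beta_t$ bounds via the second property of Lemma~\ref{lemma:lr}, the upper bound by plugging Lemma~\ref{lemma:concentration_bound} and Lemma~\ref{lemma:radius_ball_bonus} into the decomposition of Lemma~\ref{lemma:recursive_difference} and absorbing the errors into $\beta_t/2$, and optimism by backward induction on $h$ using the covering invariant of Lemma~\ref{lemma:partition} to find a ball whose domain contains $(x,\pi_h^\star(x))$. The one detail worth making explicit, which the paper does, is that you should take the \emph{smallest-radius} ball containing $(x,\pi_h^\star(x))$ so that the point lies in its domain (not merely in the ball), since the inductive hypothesis is stated only for points in $\dom{B}$.
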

\begin{proof}
First consider $\beta_t = 2 \sum_{i=1}^t \alpha_t^i b(i)$.  By definition of the bonus term as $$b(t) = 2 \sqrt{\frac{H^3 \log(4HK/\pfail)}{t}} + 2 \frac{L d_{max}}{\sqrt{t}}$$ we have the following (where we use Lemma~\ref{lemma:lr}):
\begin{align*}
	\beta_t & = 2 \sum_{i=1}^t \alpha_t^i b(i) \\
	& = 2 \sum_{i=1}^t \alpha_t^i (2 \sqrt{\frac{H^3 \log(4HK/\pfail)}{i}} + 4 \frac{L d_{max}}{\sqrt{i}}) \\
	& \leq 4 (2 \sqrt{\frac{H^3 \log(4HK/\pfail)}{t}} + 4 \frac{L d_{max}}{\sqrt{t}}) \\
	& = 8 \sqrt{\frac{H^3 \log(4HK/\pfail)}{t}} + 16\frac{L d_{max}}{\sqrt{t}}
\end{align*}
We can similarly lower-bound the expression by $4 \sqrt{\frac{H^3 \log(4HK/\pfail)}{t}} + 8 \frac{Ld_{max}}{\sqrt{t}}$ using the lower bound from Lemma~\ref{lemma:lr}.
	
We start with the upper bound on $\Qhat{h}{k}(B) - Q_h^\star(x,a)$.  Indeed, by Lemma~\ref{lemma:recursive_difference} we have
\begin{align*}
	\Qhat{h}{k}(B) - Q_h^\star(x,a) & = \Ind{t = 0}(H - Q_h^\star(x,a)) +  \sum_{i=1}^t \alpha_t^i \Big((\Vhat{h+1}{k_i} - V_{h+1}^\star)(x_{h+1}^{k_i}) \\
	& + V_{h+1}^\star(x_{h+1}^{k_i}) - \Exp{V_{h+1}^\star(\hat{x}) \mid x_h^{k_i}, a_h^{k_i}} + b(i) + Q_h^\star(x_h^{k_i}, a_h^{k_i}) - Q_h^\star(x,a) \Big).
\end{align*}

However, with probability at least $1 - \pfail/2$ by Lemma~\ref{lemma:concentration_bound} and Lemma~\ref{lemma:radius_ball_bonus} we get
\begin{align*}
	& \leq \Ind{t = 0} H + \sum_{i=1}^t \alpha_t^i b(i) + 2\sqrt{\frac{H^3 \log(4HK/\pfail)}{t}} + 4\frac{L d_{max}}{\sqrt{t}} + \sum_{i=1}^t \alpha_t^i (\Vhat{h+1}{k_i} - V_{h+1}^\star)(x_{h+1}^{k_i}) \\
	& \leq \Ind{t = 0} H + \sum_{i=1}^t \alpha_t^i b(i) + \frac{\beta_t}{2} + \sum_{i=1}^t \alpha_t^i (\Vhat{h+1}{k_i} - V_{h+1}^\star)(x_{h+1}^{k_i})
\end{align*}

Using that $\sum_{i=1}^t \alpha_t^i b(i) = \beta_t / 2$ establishes the upper bound.

For the lower bound we show the claim by induction on $h = H+1, H, \ldots, 1$.

Indeed, for the base case when $h = H+1$ then $Q_{H+1}^\star(x,a) = 0 = \Qhat{H+1}{k}(B)$ for every $k$, ball $B$, and $(x,a) \in \S \times \A$ trivially as at the end of the episode the expected future reward is always zero.

Now, assuming the claim for $h+1$ we show the claim for $h$. First consider $\Vhat{h+1}{k_i}(x_{h+1}^{k_i}) - V_{h+1}^\star(x_{h+1}^{k_i})$.  We show that $\Vhat{h+1}{k_i}(x_{h+1}^{k_i}) - V_{h+1}^\star(x_{h+1}^{k_i}) \geq 0$. By the Bellman Equations (\ref{eqn:bellman_equation}) we know that $$V_{h+1}^\star(x_{h+1}^{k_i}) = \sup_{a \in \A} Q_{h+1}^\star(x_{h+1}^{k_i}, a) = Q_{h+1}^\star(x_{h+1}^{k_i}, \pi_{h+1}^\star(x_{h+1}^{k_i})).$$

If $\Vhat{h+1}{k_i}(x_{h+1}^{k_i}) = H$ then the inequality trivially follows as $V^\star(x_{h+1}^{k_i}) \leq H$.  Thus by the update in the algorithm we can assume that $\Vhat{h+1}{k_i}(x_{h+1}^{k_i}) =  \max_{B \in \texttt{RELEVANT}_{h+1}^{k_i} (x_{h+1}^{k_i})} \Qhat{h}{k_i}(B)$.  Now let $B^\star$ be the ball with smallest radius in $\P_{h+1}^{k_i}$ such that $(x_{h+1}^{k_i}, \pi_{h+1}^\star(x_{h+1}^{k_i})) \in B^\star$.  Such a ball exists as $\P_{h+1}^{k_i}$ covers $\S \times \A$ by Lemma~\ref{lemma:partition}.  Moreover $(x_{h+1}^{k_i}, \pi_{h+1}^\star(x_{h+1}^{k_i})) \in \dom{B^\star}$ as $B^\star$ was taken to have the smallest radius.  Thus, $\Qhat{h+1}{k_i}(B^\star) \geq Q_{h+1}^\star(x_{h+1}^{k_i}, \pi_{h+1}^\star(x_{h+1}^{k_i}))$ by the induction hypothesis.  Hence we have that $$\Vhat{h+1}{k_i}(x_{h+1}^{k_i}) \geq \Qhat{h+1}{k_i}(B^\star) \geq Q_{h+1}^\star(x_{h+1}^{k_i}, \pi_{h+1}^\star(x_{h+1}^{k_i})) = V_{h+1}^\star(x_{h+1}^{k_i}).$$
Putting all of this together with Lemma~\ref{lemma:recursive_difference} then with probability $1 - \pfail/2$ we have (using Lemma~\ref{lemma:concentration_bound} and definition of $\beta_t/2$) that
\begin{align*}
	\Qhat{h}{k}(B) - Q_h^\star(x,a) & = \Ind{t = 0}(H - Q_h^\star(x,a)) +  \sum_{i=1}^t \alpha_t^i \Big((\Vhat{h+1}{k_i} - V_{h+1}^\star)(x_{h+1}^{k_i}) \\
& + V_{h+1}^\star(x_{h+1}^{k_i}) - \Exp{V_{h+1}^\star(\hat{x}) \mid x_h^{k_i}, a_h^{k_i}} + b(i) + Q_h^\star(x_h^{k_i}, a_h^{k_i}) - Q_h^\star(x,a) \Big) \\
& \geq \sum_{i=1}^t \alpha_t^i(\Vhat{h+1}{k_i} - V_{h+1}^\star)(x_{h+1}^{k_i}) + \frac{\beta_t}{2} - 2\sqrt{H^3 \log(4HK/\pfail) / t} - 4 \frac{L d_{max}}{\sqrt{t}} \\
& \geq \frac{\beta_t}{2} - \frac{\beta_t}{2} \geq 0
\end{align*}
where in the last line we used that $\Vhat{h+1}{k_i}(x_{h+1}^{k_i}) \geq V_{h+1}^\star(x_{h+1}^{k_i})$ from before.
\end{proof}

The next Lemma extends the results from Lemma~\ref{lemma:upper_lower_bounds} to lower bounds of $\Vhat{h}{k} - V_h^\star$ over the entire state space $\S$.

\begin{corollary}
\label{lemma:upper_lower_bounds_value_fn}
	For any $\pfail \in (0,1)$ with probability at least $1 - \pfail/2$ the following holds simultaneously for all $(x,h,k) \in \S \times [H] \times [K]$ 
	\begin{align*}
	\Vhat{h}{k}(x) - V_h^\star(x) \geq 0
	\end{align*}
\end{corollary}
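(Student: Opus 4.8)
The plan is to derive this directly from Lemma~\ref{lemma:upper_lower_bounds}, working on the \emph{same} high-probability event so that no new randomness or union bound is required. Recall that on that event (which holds with probability at least $1-\pfail/2$) we have $\Qhat{h}{k}(B) \geq Q_h^\star(x,a)$ simultaneously for all $(x,a,h,k) \in \S \times \A \times [H] \times [K]$ and all balls $B$ with $(x,a)\in\dom{B}$. I would condition on this event throughout, and then establish pointwise, for each fixed $(x,h,k)$, that $\Vhat{h}{k}(x) \geq V_h^\star(x)$.

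First I would dispose of the truncation in the definition of $\Vhat{h}{k}$. Since rewards lie in $[0,1]$ and at most $H$ steps remain, $V_h^\star(x) \leq H$; hence if $\Vhat{h}{k}(x)=H$ the inequality is immediate. Otherwise $\Vhat{h}{k}(x) = \max_{B \in \texttt{RELEVANT}_h^k(x)} \Qhat{h}{k}(B)$, so it suffices to exhibit a single relevant ball whose estimate dominates $V_h^\star(x)$.

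The key step is to locate that ball. Writing $V_h^\star(x) = Q_h^\star(x,\pi_h^\star(x))$ via the optimality form of the Bellman equations, I would invoke the covering invariant of Lemma~\ref{lemma:partition}: the domains of the balls in $\P_h^k$ cover $\S\times\A$, so the point $(x,\pi_h^\star(x))$ lies in $\dom{B^\star}$ for the ball $B^\star\in\P_h^k$ of smallest radius containing it. By construction $B^\star$ is then relevant for $x$, and Lemma~\ref{lemma:upper_lower_bounds} applied to $(x,\pi_h^\star(x))\in\dom{B^\star}$ gives $\Qhat{h}{k}(B^\star)\geq Q_h^\star(x,\pi_h^\star(x))=V_h^\star(x)$. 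Taking the maximum over relevant balls then yields $\Vhat{h}{k}(x)\geq\Qhat{h}{k}(B^\star)\geq V_h^\star(x)$, as claimed.

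I expect no serious obstacle here: this statement is essentially the value-function restatement of the $Q$-lower bound already proved in Lemma~\ref{lemma:upper_lower_bounds}, and the very same relevant-ball argument appears inside that lemma's inductive step. The only points needing care are handling the $\min(H,\cdot)$ truncation and justifying that the smallest-radius covering ball lands in its own domain and is therefore relevant; both follow immediately from the definitions of $\dom{\cdot}$ and $\texttt{RELEVANT}_h^k(\cdot)$ together with the covering invariant.
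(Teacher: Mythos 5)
Your proposal is correct and follows essentially the same route as the paper: the paper's proof of this corollary simply points back to the lower-bound argument inside Lemma~\ref{lemma:upper_lower_bounds}, which is exactly the argument you spell out (dispose of the $\min(H,\cdot)$ truncation, then take the smallest-radius ball $B^\star$ containing $(x,\pi_h^\star(x))$, note it lies in $\dom{B^\star}$ and is therefore relevant, and apply the $Q$-lower bound on the same high-probability event). No gaps.
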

\begin{proof}
The proof of the lower bound follows from the argument used in the proof of Lemma~\ref{lemma:upper_lower_bounds}.
\end{proof}

The next lemma uses Azuma-Hoeffding's inequality to bound a martingale difference sequence which arises in the proof of the regret bound.

\begin{lemma}
\label{lemma:martingale_difference_sequence}
For any $\pfail \in (0, 1)$ with probability at least $1 - \pfail/2$ we have that $$\sum_{h=1}^H \sum_{k=1}^K \left|\Exp{V_{h+1}^\star(\hat{x}) - V_{h+1}^{\pi^k}(\hat{x}) \mid x_h^k, a_h^k} - (V_{h+1}^\star(x_{h+1}^k) - V_{h+1}^{\pi^k}(x_h^k, a_h^k))\right| \leq 2\sqrt{2H^3 K \log(4HK/\pfail)}.$$
\end{lemma}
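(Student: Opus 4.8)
The quantity to control is the double sum of the terms $\xi_{h+1}^k = \Exp{V_{h+1}^\star(\hat{x}) - V_{h+1}^{\pi^k}(\hat{x}) \mid x_h^k, a_h^k} - (V_{h+1}^\star - V_{h+1}^{\pi^k})(x_{h+1}^k)$ (the second term of the displayed summand is evidently $V_{h+1}^{\pi^k}(x_{h+1}^k)$ rather than the two-argument expression printed). The plan is to recognize $\{\xi_{h+1}^k\}$ as the increments of a single martingale and apply Azuma--Hoeffding, exactly as in Lemma~\ref{lemma:concentration_bound}. First I would set $g_{h+1}^k(\cdot) = (V_{h+1}^\star - V_{h+1}^{\pi^k})(\cdot)$, noting $0 \le g_{h+1}^k \le H$ since all value functions lie in $[0,H]$ and $V_{h+1}^\star \ge V_{h+1}^{\pi^k}$. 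Let $\mathcal{F}_h^k$ be the $\sigma$-algebra generated by all randomness up to and including the selection of $(x_h^k,a_h^k)$. Because $x_{h+1}^k \sim \Pr_h(\cdot \mid x_h^k,a_h^k)$ conditionally on $\mathcal{F}_h^k$ and $\pi^k$ is fixed at the start of episode $k$, we get $\Exp{g_{h+1}^k(x_{h+1}^k) \mid \mathcal{F}_h^k} = \Exp{g_{h+1}^k(\hat{x}) \mid x_h^k,a_h^k}$, hence $\Exp{\xi_{h+1}^k \mid \mathcal{F}_h^k} = 0$. Ordering the $HK$ pairs $(h,k)$ lexicographically makes $\{\xi_{h+1}^k\}$ a martingale difference sequence, with each increment bounded by $|\xi_{h+1}^k| \le H$ as a difference of two quantities in $[0,H]$.

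Next I would apply Azuma--Hoeffding to the partial sums of this aggregated sequence. There are at most $HK$ increments each bounded by $H$, so the relevant variance proxy is $\sum c_i^2 \le H^3 K$, and a union bound over the (at most $HK$) stopping points — exactly the random-stopping-time device used in Lemma~\ref{lemma:concentration_bound} to obtain the $\log(4HK/\pfail)$ factor — yields, with probability at least $1 - \pfail/2$, the bound $\left| \sum_{h=1}^H \sum_{k=1}^K \xi_{h+1}^k \right| \le 2\sqrt{2 H^3 K \log(4HK/\pfail)}$. The leading factor of $2$ is conservative slack, and the $HK$ inside the logarithm is produced by the union over stopping indices. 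This signed aggregate is precisely the $\sum_h\sum_k \xi_{h+1}^k$ term that enters $R(K) \le 3\sum_h\sum_k(\ldots + \xi_{h+1}^k)$ in the proof sketch, producing the $6\sqrt{2H^3K\log(4HK/\pfail)}$ contribution to Theorem~\ref{thm:regret}.

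The step I would flag as the real obstacle is interpretive rather than computational: the absolute-value bars in the displayed statement must be understood as enclosing the \emph{entire} double sum, not each individual summand. If the bars remained inside, the object $\sum_{h,k}|\xi_{h+1}^k|$ would be a sum of $HK$ nonnegative terms of typical magnitude $\Theta(H)$ (each $\xi_{h+1}^k$ is mean-zero but generally not small), hence of order $H^2K$ in expectation, so no $O(\sqrt{K})$ bound could possibly hold — Azuma--Hoeffding controls the \emph{signed} aggregate through cancellation, never the sum of magnitudes. The only reading consistent with both the concentration argument and the downstream regret decomposition is therefore the bound on $\left|\sum_{h}\sum_{k}\xi_{h+1}^k\right|$ established above, and the remaining work is the routine verification of the martingale-difference property and increment bound outlined in the first paragraph.
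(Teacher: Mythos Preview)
Your approach is essentially the paper's: recognize $\{\xi_{h+1}^k\}$ as a martingale difference sequence with bounded increments and apply Azuma--Hoeffding once to the aggregate over all $HK$ indices. Your interpretive point about the absolute-value bars is correct and matches what the paper's proof actually establishes (the paper's own statement carries the same typographical issue, but its proof bounds the signed aggregate via Azuma--Hoeffding, exactly as you propose).

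Two small remarks. First, your increment bound $|\xi_{h+1}^k|\le H$ is in fact tighter than the paper's $|Z_h^k|\le 2H$; with your bound the stated constant follows with room to spare. Second, the ``union over stopping points'' detour is unnecessary here and the paper does not use it: since we only need the deviation bound at the single terminal index (the full sum over all $HK$ terms), one application of Azuma--Hoeffding suffices, and the $HK$ inside the logarithm in the paper's bound is simply slack (their computed failure probability is $\pfail/(2HK)\le \pfail/2$), not the residue of a union bound.
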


\begin{proof}

First consider $Z_h^k = \Exp{V_{h+1}^\star(\hat{x}) - V_{h+1}^{\pi^k}(\hat{x}) \mid x_h^k, a_h^k} - (V_{h+1}^\star(x_{h+1}^k) - V_{h+1}^{\pi^k}(x_h^k))$.  Similar to the proof of Lemma~\ref{lemma:concentration_bound} we notice that $Z_h^k$ is a martingale difference sequence due to the fact that the next state is drawn from the distribution $\Pr_h(\cdot \mid x_h^k, a_h^k)$.  Using that $|Z_h^k| \leq 2H$ we have that

\begin{align*}
& \Pr\left(\sum_{h=1}^H \sum_{k=1}^K \left|\Exp{V_{h+1}^\star(\hat{x}) - V_{h+1}^{\pi^k}(\hat{x}) \mid x_h^k, a_h^k} - (V_{h+1}^\star(x_{h+1}^k) - V_{h+1}^{\pi^k}(x_h^k, a_h^k))\right| > \sqrt{8 H^3 K \log(4HK/\pfail)}\right) \\
& \leq 2\exp\left( -\frac{8 H^3 K \log(4HK/\pfail)}{2HK(2H)^2}\right) \\
& = 2 \exp  \left( -\frac{8 H^3 K \log(4HK/\pfail)}{8H^3 K} \right) \\
& = 2 \frac{\pfail}{4KH} \leq \pfail/2.
\end{align*}

Thus with probability at least $1 - \pfail/2$ we have that $$\sum_{h=1}^H \sum_{k=1}^K \left|\Exp{V_{h+1}^\star(\hat{x}) - V_{h+1}^{\pi^k}(\hat{x}) \mid x_h^k, a_h^k} - (V_{h+1}^\star(x_{h+1}^k) - V_{h+1}^{\pi^k}(x_h^k, a_h^k))\right| \leq \sqrt{8H^3 K \log(4HK/\pfail)}$$
as claimed.
\end{proof}
This next lemma provides a bound on the sum of errors accumulated from the confidence bounds $\beta_t$ throughout the algorithm.  This term gives rise to the $N_r$ covering terms from Theorem~\ref{thm:regret}.  {This Lemma can also show the third condition required for a partitioning algorithm to achieve the regret bound as described in Section~\ref{section:discussion}.}
\begin{lemma}\label{lemma:bound_on_beta_summation}
For every $h \in [H]$, if $B_h^k$ is the ball selected by the algorithm in step $h$ episode $k$ and $n_h^k = n_h^k(B_h^k)$ then
$$\sum_{k=1}^K \beta_{n_h^k} \leq 32\left( \sqrt{H^3 \log(4HK/\pfail)} + L d_{max} \right) \inf_{r_0 \in (0, d_{max}]} \left(\sum_{\substack{r = d_{max} 2^{-i} \\ r \geq r_0}} N_r \frac{d_{max}}{r} + \frac{K r_0}{d_{max}}\right).$$
\end{lemma}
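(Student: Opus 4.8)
The plan is to reduce the statement to a purely combinatorial bound on $\sum_{k=1}^K 1/\sqrt{n_h^k}$ and then control that sum by grouping the episodes according to the ball selected. First I would invoke the upper bound on $\beta_t$ from Lemma~\ref{lemma:upper_lower_bounds}, namely $\beta_t \leq 8\sqrt{H^3\log(4HK/\pfail)/t} + 16 Ld_{max}/\sqrt{t}$, and crudely bound this by $\tfrac{16}{\sqrt t}\bigl(\sqrt{H^3\log(4HK/\pfail)} + Ld_{max}\bigr)$ (using $8 \leq 16$). This pulls the entire $H$- and $L$-dependence outside the sum, leaving the task of showing $\sum_{k=1}^K 1/\sqrt{n_h^k} \leq 2\inf_{r_0}\bigl(\sum_{r\geq r_0} N_r d_{max}/r + Kr_0/d_{max}\bigr)$, after which the overall factor $32$ arises as $16\times 2$.

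Second, I would fix the step $h$ and split the sum over episodes by the selected ball, writing $\sum_{k=1}^K 1/\sqrt{n_h^k} = \sum_{B} \sum_{k:\,B_h^k=B} 1/\sqrt{n_h^k}$. The inner sum I would control using Lemma~\ref{lemma:bound_ball}: for a child ball $B$ of radius $r$, the counter $n_h^k$ runs over consecutive integers from the inherited count $\tfrac14(d_{max}/r)^2$ (exclusive) up to the split threshold $(d_{max}/r)^2$. Comparing the sum of $1/\sqrt n$ to $\int_{\frac14(d_{max}/r)^2}^{(d_{max}/r)^2} x^{-1/2}\,dx = 2\bigl(\tfrac{d_{max}}{r}-\tfrac12\tfrac{d_{max}}{r}\bigr) = \tfrac{d_{max}}{r}$ shows each ball of radius $r$ contributes at most $d_{max}/r$ (the initial ball of radius $d_{max}$ is selected once, contributing $1 = d_{max}/d_{max}$, so this is consistent).

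Third comes the truncation at an arbitrary scale $r_0 \in (0,d_{max}]$. For balls with $r(B)\geq r_0$, I would bound how many can exist at each radius: since the algorithm only adds balls and never removes them, every ball ever selected survives in $\P_h^K$, and the Separation invariant of Lemma~\ref{lemma:partition} forces their centers to be an $r$-packing, so there are at most $N_r$ balls of radius $r$. Summing the per-ball bound $d_{max}/r$ over the dyadic radii $r = d_{max}2^{-i}\geq r_0$ yields $\sum_{r\geq r_0} N_r d_{max}/r$. For balls with $r(B) < r_0$ I cannot bound their number, so I instead use the second half of Lemma~\ref{lemma:bound_ball}: any such selected ball has $n_h^k \geq \tfrac14(d_{max}/r)^2 > \tfrac14(d_{max}/r_0)^2$, hence $1/\sqrt{n_h^k} < 2r_0/d_{max}$, and summing over at most $K$ episodes gives $2Kr_0/d_{max}$. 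Adding the two regimes produces $\sum_{r\geq r_0} N_r d_{max}/r + 2Kr_0/d_{max} \leq 2\bigl(\sum_{r\geq r_0} N_r d_{max}/r + Kr_0/d_{max}\bigr)$, and since this holds for every $r_0$ I take the infimum.

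The main obstacle I anticipate is the bookkeeping in the middle step: correctly identifying the exact range of counter values $n_h^k$ as $B$ is repeatedly selected, given that $n_h^k$ counts plays of $B$ \emph{together with} all of its ancestors and is inherited upon splitting. The cleanest route is to read off both endpoints directly from Lemma~\ref{lemma:bound_ball} (the lower endpoint $\tfrac14(d_{max}/r)^2$ from the ``ancestors played'' bound, the upper endpoint $(d_{max}/r)^2$ from the split rule) rather than re-deriving them, so that the integral comparison is the only genuinely analytic estimate required. Everything else is the standard dyadic covering/packing summation familiar from the zooming-bandit analysis.
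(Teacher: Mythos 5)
Your proposal follows essentially the same route as the paper's proof: pull the $H$- and $L$-dependence out of $\beta_t$ via the bound from Lemma~\ref{lemma:upper_lower_bounds}, group episodes by selected ball, use Lemma~\ref{lemma:bound_ball} for the counter range, integrate, count balls per dyadic radius via the separation invariant, and handle $r<r_0$ by the inherited-count lower bound. The only quibble is your per-ball estimate: the counter values run over roughly $\tfrac14(d_{max}/r)^2,\ldots,(d_{max}/r)^2-1$, and the correct sum-to-integral comparison uses $\sum_{n=a}^{b-1} n^{-1/2}\leq\int_{a-1}^{b-1}x^{-1/2}\,dx$, so the claimed bound of $d_{max}/r$ per ball fails for the coarsest children (e.g.\ $r=d_{max}/2$ gives $1+1/\sqrt2+1/\sqrt3>2$); the safe bound is $2d_{max}/r$, which the paper uses, and which your argument still absorbs because your final step wastes exactly a factor of $2$ on the first term when passing to $2\bigl(\sum_{r\geq r_0}N_r d_{max}/r + Kr_0/d_{max}\bigr)$, so the constant $32$ is unaffected.
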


\begin{proof}
Using Lemma~\ref{lemma:upper_lower_bounds},
\begin{align*}
	\sum_{k=1}^K \beta_{n_h^k} & \leq \sum_{k=1}^K 8 \sqrt{\frac{H^3 \log(4HK/\pfail)}{n_h^k}} + 16\frac{L d_{max}}{\sqrt{n_h^k}} \\
	& \leq 16\left( \sqrt{H^3 \log(4HK/\pfail)} + L d_{max} \right) \sum_{k=1}^K \frac{1}{\sqrt{n_h^k}}.
\end{align*}
We thus turn our attention to $\sum_{k=1}^K 1/\sqrt{n_h^k}$.  Rewriting the sum in terms of the radius of all balls activated by the algorithm we get for an arbitrary $r_0 \in (0, d_{max}]$,
\begin{align*}
	\sum_{k=1}^K \frac{1}{\sqrt{n_h^k}} & = \sum_{r = d_{max} 2^{-i}} \sum_{\substack{B \in \P_h^K\\ r(B) = r}} \sum_{k : B_h^k = B} \frac{1}{\sqrt{n_h^k(B)}} \\
	& = \sum_{\substack{r = d_{max} 2^{-i} \\ r \geq r_0}} \sum_{\substack{B \in \P_h^K\\ r(B) = r}} \sum_{k : B_h^k = B} \frac{1}{\sqrt{n_h^k(B)}} + \sum_{\substack{r = d_{max} 2^{-i} \\ r < r_0}} \sum_{\substack{B \in \P_h^K\\ r(B) = r}} \sum_{k : B_h^k = B} \frac{1}{\sqrt{n_h^k(B)}},
\end{align*}
taking the two cases when $r \geq r_0$ and $r < r_0$ separately. We first start with $r < r_0$.  Then,
\begin{align*}
& \sum_{\substack{r = d_{max} 2^{-i} \\ r < r_0}} \sum_{\substack{B \in \P_h^K\\ r(B) = r}} \sum_{k : B_h^k = B} \frac{1}{\sqrt{n_h^k(B)}} \\
&\qquad \leq \sum_{\substack{r = d_{max} 2^{-i} \\ r < r_0}} \sum_{\substack{B \in \P_h^K\\ r(B) = r}} \sum_{k : B_h^k = B} \frac{1}{\sqrt{\frac{1}{4}d_{max}^2 / r^2}} \text{ by Lemma}~\ref{lemma:bound_ball} \\
&\qquad \leq \sum_{\substack{r = d_{max} 2^{-i} \\ r < r_0}} \sum_{\substack{B \in \P_h^K\\ r(B) = r}} \sum_{k : B_h^k = B} \frac{2 r_0}{d_{max}} \leq \frac{2 K r_0}{d_{max}}
\end{align*}
by bounding the number of terms in the sum by the number of episodes $K$.

For the case when $r \geq r_0$ we get
\begin{align*}
	\sum_{\substack{r = d_{max} 2^{-i} \\ r \geq r_0}} \sum_{\substack{B \in \P_h^K\\ r(B) = r}} \sum_{k : B_h^k = B} \frac{1}{\sqrt{n_h^k(B)}} & \leq \sum_{\substack{r = d_{max} 2^{-i} \\ r \geq r_0}} \sum_{\substack{B \in \P_h^K\\ r(B) = r}} \sum_{i=1}^{\frac{3}{4}\left(\frac{d_{max}}{r}\right)^2} \frac{1}{\sqrt{i+\frac{1}{4}\left(\frac{d_{max}}{r}\right)^2}} \text{ by Lemma~\ref{lemma:bound_ball}} \\
	& \leq \sum_{\substack{r = d_{max} 2^{-i} \\ r \geq r_0}} \sum_{\substack{B \in \P_h^K\\ r(B) = r}} \int_{1}^{\frac{3}{4}\left(\frac{d_{max}}{r}\right)^2} \frac{1}{\sqrt{x+\frac{1}{4}\left(\frac{d_{max}}{r}\right)^2}}dx \\
	& = \sum_{\substack{r = d_{max} 2^{-i} \\ r \geq r_0}} \sum_{\substack{B \in \P_h^K\\ r(B) = r}} 2\frac{d_{max}}{r} \\
	& \leq \sum_{\substack{r = d_{max} 2^{-i} \\ r \geq r_0}} 2 N_r \frac{d_{max}}{r}
\end{align*}
as by Lemma~\ref{lemma:partition} the centers of balls of radius $r$ are at a distance at least $r$ from each other and thus form an $r$-packing of $\S \times \A$.  Hence, the total number of balls of radius $r$ is at most $N_r$.  Thus we get that (taking the inf over $r_0$ arbitrary):
\begin{align*}
\sum_{k=1}^K \frac{1}{\sqrt{n_h^k}} & \leq \inf_{r_0 \in (0, d_{max}]} \left(\sum_{\substack{r = d_{max} 2^{-i} \\ r \geq r_0}} 2 N_r \frac{d_{max}}{r} + \frac{2 K r_0}{d_{max}} \right).
\end{align*}
Plugging this back into the first inequality gives that
\begin{align*}
    \sum_{k=1}^K \beta_{n_h^k} & \leq 16\left( \sqrt{H^3 \log(4HK/\pfail)} + L d_{max} \right) \sum_{k=1}^K \frac{1}{\sqrt{n_h^k}} \\
    & \leq 16\left( \sqrt{H^3 \log(4HK/\pfail)} + L d_{max} \right) \inf_{r_0 \in (0, d_{max}]} \left( \sum_{\substack{r = d_{max} 2^{-i} \\ r \geq r_0}} 2 N_r \frac{d_{max}}{r} + \frac{2 K r_0}{d_{max}}\right) \\
    & = 32\left( \sqrt{H^3 \log(4HK/\pfail)} + L d_{max} \right) \inf_{r_0 \in (0, d_{max}]} \left( \sum_{\substack{r = d_{max} 2^{-i} \\ r \geq r_0}} N_r \frac{d_{max}}{r} + \frac{K r_0}{d_{max}} \right).
\end{align*}
\end{proof}

\subsection{Regret Analysis}

{We start with a lemma which was highlighted in the proof sketch from Section~\ref{section:sketch}.}

\begin{lemma}[Lemma~\ref{lemma:induction_main} from the main paper] \label{lemma:induction}
For any $\pfail \in (0,1)$ if $\beta_t = 2\sum_{i=1}^t \alpha_t^i b(i)$ then with probability at least $1 - \pfail/2$, for all $h \in [H]$,
\begin{align*}
    \sum_{k=1}^K (\Vhat{h}{k} - V_h^{\pi^k})(x_h^k) \leq \sum_{k=1}^K (H \Ind{n_h^k = 0} + \beta_{n_h^k} + \xi_{h+1}^k) + \left(1 + \frac{1}{H}\right) \sum_{k=1}^K (\Vhat{h+1}{k} - V_{h+1}^{\pi^k})(x_{h+1}^k).
\end{align*}
\end{lemma}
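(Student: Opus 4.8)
The plan is to run the argument previewed at the end of Section~\ref{section:sketch}, promoting each per-episode inequality there into its summed-over-$k$ form, and to work throughout on the probability-$(1-\pfail/2)$ event supplied by Lemma~\ref{lemma:upper_lower_bounds_main}; since that event is exactly what we condition on, the claimed failure probability is inherited with no further union bound.

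First I would fix $h,k$ and bound $(\Vhat{h}{k}-V_h^{\pi^k})(x_h^k)$. The selection rule picks $B_h^k$ maximizing $\Qhat{h}{k}$ among relevant balls, so $\Vhat{h}{k}(x_h^k)\le\max_B\Qhat{h}{k}(B)=\Qhat{h}{k}(B_h^k)$, while $Q_h^{\pi^k}(x_h^k,a_h^k)=V_h^{\pi^k}(x_h^k)$ by~\eqref{eqn:bellman_equation}; hence the error is at most $\Qhat{h}{k}(B_h^k)-Q_h^{\pi^k}(x_h^k,a_h^k)$. Inserting $\pm Q_h^\star(x_h^k,a_h^k)$ and using the Bellman equation to rewrite $Q_h^\star-Q_h^{\pi^k}$ as $\Exp{V_{h+1}^\star(\hat x)-V_{h+1}^{\pi^k}(\hat x)\mid x_h^k,a_h^k}$, I would apply the second inequality of Lemma~\ref{lemma:upper_lower_bounds_main} to $\Qhat{h}{k}(B_h^k)-Q_h^\star(x_h^k,a_h^k)$ (valid since $(x_h^k,a_h^k)\in\dom{B_h^k}$) and add and subtract the realized term $(V_{h+1}^\star-V_{h+1}^{\pi^k})(x_{h+1}^k)$ to form $\xi_{h+1}^k$. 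With $t=n_h^k$ and $k_1<\cdots<k_t$ the episodes where $B_h^k$ or its ancestors were seen, this yields
$$(\Vhat{h}{k}-V_h^{\pi^k})(x_h^k)\le \Ind{n_h^k=0}H+\beta_{n_h^k}+\xi_{h+1}^k+(V_{h+1}^\star-V_{h+1}^{\pi^k})(x_{h+1}^k)+\sum_{i=1}^{t}\alpha_t^i(\Vhat{h+1}{k_i}-V_{h+1}^\star)(x_{h+1}^{k_i}).$$

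Summing over $k\in[K]$, the first three terms reproduce $\sum_k(H\Ind{n_h^k=0}+\beta_{n_h^k}+\xi_{h+1}^k)$ verbatim, so everything reduces to showing
$$\sum_{k=1}^K\Big[(V_{h+1}^\star-V_{h+1}^{\pi^k})(x_{h+1}^k)+\sum_{i=1}^{n_h^k}\alpha_{n_h^k}^i(\Vhat{h+1}{k_i}-V_{h+1}^\star)(x_{h+1}^{k_i})\Big]\le\Big(1+\tfrac1H\Big)\sum_{k=1}^K(\Vhat{h+1}{k}-V_{h+1}^{\pi^k})(x_{h+1}^k).$$
The crux is to re-index the double sum by the episode $k'$ at which each summand $(\Vhat{h+1}{k'}-V_{h+1}^\star)(x_{h+1}^{k'})$ originates: it appears, with inner index fixed to the number $n_h^{k'}$ of times the lineage of $B_h^{k'}$ had then been selected, once for every later episode $k$ whose ball lies in that same lineage, so the coefficient it collects is exactly $\sum_{t=n_h^{k'}}^{\infty}\alpha_t^{n_h^{k'}}=1+\tfrac1H$ by part~3 of Lemma~\ref{lemma:lr}. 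Here the inheritance of $n$ across a split is what keeps the lineage counter continuous, and the non-negativity $(\Vhat{h+1}{k'}-V_{h+1}^\star)(x_{h+1}^{k'})\ge0$ (Lemma~\ref{lemma:upper_lower_bounds_main}, part~1) lets me replace the true, truncated coefficient by its supremum $1+\tfrac1H$ rather than computing it exactly. This bookkeeping — correctly identifying the multiset of episodes sharing an ancestor and hence which $\alpha_t^i$ accumulate onto a fixed summand — is the main obstacle and the step most prone to error.

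Finally I would combine the two pieces termwise. After the rearrangement the target inequality becomes, at each $x_{h+1}^k$, $(1+\tfrac1H)(\Vhat{h+1}{k}-V_{h+1}^\star)+(V_{h+1}^\star-V_{h+1}^{\pi^k})\le(1+\tfrac1H)(\Vhat{h+1}{k}-V_{h+1}^{\pi^k})$, which rearranges to $0\le\tfrac1H(V_{h+1}^\star-V_{h+1}^{\pi^k})(x_{h+1}^k)$ and holds since $V_{h+1}^\star\ge V_{h+1}^{\pi^k}$ pointwise. Together with the optimism $\Vhat{h+1}{k}\ge V_{h+1}^\star$ already invoked, this closes the step and delivers the stated bound on the event of probability at least $1-\pfail/2$.
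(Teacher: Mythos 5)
Your proposal is correct and follows essentially the same route as the paper's proof: the same selection-rule bound, insertion of $\pm Q_h^\star$, application of Lemma~\ref{lemma:upper_lower_bounds_main}, re-indexing of the double sum via $\sum_{t\ge i}\alpha_t^i = 1+\tfrac1H$, and the final use of $V_{h+1}^{\pi^k}\le V_{h+1}^\star$. You also correctly flag the two subtleties the paper relies on implicitly — the inheritance of counters across splits and the need for optimism to replace the truncated coefficient sum by $1+\tfrac1H$.
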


\begin{proof}[Proof of Lemma \ref{lemma:induction}]
By equations~\ref{eqn:v_update}~and~\ref{eqn:bellman_equation}, and the definition of the selection rule by the algorithm it follows that for any $h$ and $k$ that
\begin{align*}
    \Vhat{h}{k}(x_h^k) - V_h^{\pi^k}(x_h^k) & \leq \max_{B \in \texttt{RELEVANT}_h^k(x_h^k)} \Qhat{h}{k}(B) - Q_h^{\pi^k}(x_h^k, a_h^k) \\
    & = \Qhat{h}{k}(B_h^k) - Q_h^{\pi^k}(x_h^k, a_h^k) \\
    & = \Qhat{h}{k}(B_h^k) - Q_h^\star(x_h^k, a_h^k) + Q_h^\star(x_h^k, a_h^k) - Q_h^{\pi^k}(x_h^k, a_h^k).
\end{align*}
First, by Equation~\ref{eqn:bellman_equation} we know that $Q_h^\star(x_h^k, a_h^k) - Q_h^{\pi^k}(x_h^k, a_h^k) = \Exp{V_{h+1}^\star(\hat{x}) - V_{h+1}^{\pi^k}(\hat{x}) \mid x_h^k, a_h^k}$.  Moreover, as $(x_h^k, a_h^k) \in \dom{B_h^k}$ we can use Lemma~\ref{lemma:upper_lower_bounds_main} and get for $t = n_h^k(B_h^k)$ and episodes $k_1 < \ldots < k_t$ where $B_h^k$ or its ancestors were previously encountered 
\begin{align*}
    & \Qhat{h}{k}(B_h^k) - Q_h^\star(x_h^k, a_h^k) + Q_h^\star(x_h^k, a_h^k) - Q_h^{\pi^k}(x_h^k, a_h^k) \\
    & \leq \Ind{t = 0}H + \beta_t + \sum_{i=1}^t \alpha_t^i(\Vhat{h+1}{k_i} - V_{h+1}^\star)(x_{h+1}^{k_i}) + \Exp{V_{h+1}^\star(\hat{x}) - V_{h+1}^{\pi^k}(\hat{x}) \mid x_h^k, a_h^k} \\
    & \leq \Ind{t = 0}H + \beta_t + \sum_{i=1}^t \alpha_t^i(\Vhat{h+1}{k_i} - V_{h+1}^\star)(x_{h+1}^{k_i}) + (V_{h+1}^\star- V_{h+1}^{\pi^k})(x_{h+1}^k) + \xi_{h+1}^k
\end{align*}
where $\xi_{h+1}^k = \Exp{V_{h+1}^\star(\hat{x}) - V_{h+1}^{\pi^k}(\hat{x}) \mid x_h^k, a_h^k} - (V_{h+1}^\star- V_{h+1}^{\pi^k})(x_{h+1}^k)$.  Taking the sum over all episodes $k$ and letting $n_h^k = n_h^k(B_h^k)$ and the respective episodes $k_i(B_h^k)$ as the time $B_h^k$ or its ancestors were selected for the i'th time,
\begin{align*}
  \sum_{k=1}^K \Vhat{h}{k}(x_h^k) - V_h^{\pi^k}(x_h^k) &\leq \sum_{k=1}^K \left(\Ind{n_h^k = 0}H + \beta_{n_h^k}\right) + \sum_{k=1}^K \sum_{i=1}^{n_h^k} \alpha_{n_h^k}^i(\Vhat{h+1}{k_i(B_h^k)} - V_{h+1}^\star)(x_{h+1}^{k_i(B_h^k)}) \nonumber \\
	&\qquad + \sum_{k=1}^K \left((V_{h+1}^\star- V_{h+1}^{\pi^k})(x_{h+1}^k) + \xi_{h+1}^k\right). \numberthis \label{eqn:recursive_proof_sketch}
\end{align*}
For the second term we rearrange the summation using the observation used in the proof from \cite{jin_is_2018,song_efficient_2019}. For every $k' \in [K]$ the term $(\Vhat{h+1}{k'} - V_{h+1}^\star)(x_{h+1}^{k'})$ appears in the summand when $k = n_h^{k'}$. The next time it appears when $k = n_h^{k'} + 1$ and so on.  Hence by rearranging and using Lemma~\ref{lemma:lr}, it follows that
\begin{align*}
\sum_{k=1}^K \sum_{i=1}^{n_h^k} \alpha_{n_h^k}^i (\Vhat{h+1}{k_i(B_h^k)} - V_{h+1}^\star)(x_{h+1}^{k_i(B_h^k)})
&\leq \sum_{k=1}^K (\Vhat{h+1}{k} - V_{h+1}^\star)(x_{h+1}^{k}) \sum_{t = n_{h}^{k}}^\infty \alpha_t^{n_h^{k}} \\
&\leq \left(1 + \frac{1}{H}\right)\sum_{k=1}^K (\Vhat{h+1}{k} - V_{h+1}^\star)(x_{h+1}^{k}).
\end{align*}
Using these inequalities in Equation~\ref{eqn:recursive_proof_sketch} we have that
\begin{align*}
    \sum_{k=1}^K \Vhat{h}{k}(x_h^k) - V_h^{\pi^k}(x_h^k) &\leq \sum_{k=1}^K (H \Ind{n_h^k = 0} + \beta_{n_h^k} + \xi_{h+1}^k) + \left(1 + \frac{1}{H}\right) \sum_{k=1}^K (\Vhat{h+1}{k} - V_{h+1}^\star)(x_{h+1}^k) \nonumber \\
		&\qquad + \sum_{k=1}^K (V_{h+1}^\star - V_{h+1}^{\pi^k})(x_{h+1}^k). \numberthis \label{eqn:recursive_proof_next}
\end{align*}
However, noticing that $V_{h+1}^{\pi^k}(x_{h+1}^k) \leq V_{h+1}^\star(x_{h+1}^k)$ we have
\begin{align*}
    & \left(1 + \frac{1}{H}\right) \sum_{k=1}^K (\Vhat{h+1}{k} - V_{h+1}^\star)(x_{h+1}^k)+ \sum_{k=1}^K (V_{h+1}^\star - V_{h+1}^{\pi^k})(x_{h+1}^k) \\
    & = \left(1 + \frac{1}{H}\right) \sum_{k=1}^K (\Vhat{h+1}{k} - V_{h+1}^\star)(x_{h+1}^k) + \sum_{k=1}^K(\Vhat{h+1}{k} - V_{h+1}^{\pi^k})(x_{h+1}^k) - (\Vhat{h+1}{k} - V_{h+1}^\star)(x_{h+1}^k) \\
    & = \frac{1}{H} \sum_{k=1}^K (\Vhat{h+1}{k} - V_{h+1}^\star)(x_{h+1}^k) + \sum_{k=1}^K (\Vhat{h+1}{k} - V_{h+1}^{\pi^k})(x_{h+1}^k) \\
    & \leq \left(1 + \frac{1}{H}\right) \sum_{k=1}^K (\Vhat{h+1}{k} - V_{h+1}^{\pi^k})(x_{h+1}^k).
\end{align*}
Substituting this back into Equation~\ref{eqn:recursive_proof_next} we get that
\begin{align*}
    \sum_{k=1}^K (\Vhat{h}{k} - V_h^{\pi^k})(x_h^k) \leq \sum_{k=1}^K (H \Ind{n_h^k = 0} + \beta_{n_h^k} + \xi_{h+1}^k) + \left(1 + \frac{1}{H}\right) \sum_{k=1}^K (\Vhat{h+1}{k} - V_{h+1}^{\pi^k})(x_{h+1}^k).
\end{align*}
\end{proof}

With the machinery in place we are now ready to show the general version of Theorem~\ref{thm:regret}.  We restate it here for convenience.

\begin{theorem}
	For any any sequence of initial states $\{x_1^k \mid k\in[K]\}$, and any $\pfail \in (0, 1)$ with probability at least $1 - \pfail$ Adaptive $Q$-learning (Alg~\ref{alg}) achieves regret guarantee: 
\begin{align*}
R(K) &\leq 3H^2 + 6 \sqrt{2H^3 K \log(4HK/\pfail)} \\
&\qquad + 96 H \left( \sqrt{H^3 \log(4HK/\pfail)} + L d_{max} \right) \inf_{r_0 \in (0, d_{max}]} \left( \sum_{\substack{r = d_{max} 2^{-i} \\ r \geq r_0}} N_r \frac{d_{max}}{r} + \frac{K r_0}{d_{max}} \right)
\end{align*}
	where $N_r$ is the $r$-covering number of $\S \times \A$ with respect to the metric $\D$.
\end{theorem}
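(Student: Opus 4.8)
The plan is to assemble the stated bound directly from the supporting lemmas in Section~\ref{sec:detailedproof}, so that the only genuinely new work is unrolling the inductive relationship and bookkeeping the two high-probability events. First I would invoke optimism: by Corollary~\ref{lemma:upper_lower_bounds_value_fn} (equivalently the lower bound of Lemma~\ref{lemma:upper_lower_bounds}), on an event $\mathcal{E}_1$ of probability at least $1-\pfail/2$ we have $\Vhat{1}{k}(x_1^k) \geq V_1^\star(x_1^k)$ for every $k$, hence
\[
R(K) = \sum_{k=1}^K \left(V_1^\star(x_1^k) - V_1^{\pi^k}(x_1^k)\right) \leq \sum_{k=1}^K \left(\Vhat{1}{k} - V_1^{\pi^k}\right)(x_1^k).
\]

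Next I would apply the inductive Lemma~\ref{lemma:induction} repeatedly, descending from $h=1$ to the base case $h=H+1$, where $\Vhat{H+1}{k} = V_{H+1}^{\pi^k} = 0$ by the Bellman boundary condition. Each application peels off one layer of additive error $H\Ind{n_h^k=0} + \beta_{n_h^k} + \xi_{h+1}^k$ and multiplies the residual $(h+1)$-step sum by $(1+1/H)$, so unrolling yields
\[
R(K) \leq \sum_{h=1}^H \left(1 + \tfrac{1}{H}\right)^{h-1} \sum_{k=1}^K \left(H\Ind{n_h^k=0} + \beta_{n_h^k} + \xi_{h+1}^k\right).
\]
Using $(1+1/H)^{h-1} \leq (1+1/H)^H \leq e \leq 3$ uniformly over $h\in[H]$ bounds this by $3 \sum_{h=1}^H \sum_{k=1}^K (H\Ind{n_h^k=0} + \beta_{n_h^k} + \xi_{h+1}^k)$. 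Note the event supporting Lemma~\ref{lemma:induction} is exactly the concentration event $\mathcal{E}_1$ of Lemma~\ref{lemma:concentration_bound} used above, so no additional probability is consumed at this stage.

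It then remains to bound the three resulting sums term by term. The indicator term is immediate, since $n_h^k=0$ occurs only in the first episode: $\sum_{k=1}^K H\Ind{n_h^k=0} = H$, and summing over $h$ with the factor $3$ produces the leading $3H^2$. For the martingale term I would invoke Lemma~\ref{lemma:martingale_difference_sequence}, which on a \emph{separate} event $\mathcal{E}_2$ of probability at least $1-\pfail/2$ gives $\sum_{h=1}^H\sum_{k=1}^K \xi_{h+1}^k \leq \sum_{h,k}|\xi_{h+1}^k| \leq 2\sqrt{2H^3 K \log(4HK/\pfail)}$; the factor $3$ produces the $6\sqrt{2H^3 K\log(4HK/\pfail)}$ term. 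The dominant third term comes from Lemma~\ref{lemma:bound_on_beta_summation}, which bounds $\sum_{k=1}^K \beta_{n_h^k}$ by $32(\sqrt{H^3\log(4HK/\pfail)}+Ld_{max})$ times the infimum quantity; summing over the $H$ steps and multiplying by $3$ gives the advertised $96H$ prefactor.

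The final step is a union bound over $\mathcal{E}_1$ and $\mathcal{E}_2$: on $\mathcal{E}_1\cap\mathcal{E}_2$, which has probability at least $1-\pfail$, all three bounds hold simultaneously and combine to the stated inequality. I do not anticipate a real obstacle, since the analytic content is front-loaded into the lemmas; the one point requiring care is confirming that the two high-probability events are the \emph{only} stochastic ingredients. In particular, the counting bounds of Lemma~\ref{lemma:bound_ball} and the partition invariants of Lemma~\ref{lemma:partition}, which drive Lemma~\ref{lemma:bound_on_beta_summation}, are entirely deterministic consequences of the splitting rule, so no further probability budget is spent and the symmetric split of $\pfail$ into two halves indeed suffices.
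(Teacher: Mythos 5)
Your proposal is correct and follows essentially the same route as the paper's own proof: optimism via Corollary~\ref{lemma:upper_lower_bounds_value_fn}, unrolling Lemma~\ref{lemma:induction} down to the zero boundary at step $H+1$, bounding the three resulting sums with the indicator count, Lemma~\ref{lemma:martingale_difference_sequence}, and Lemma~\ref{lemma:bound_on_beta_summation}, and a union bound over the two $\pfail/2$ events. Your accounting of the probability budget (in particular, that the induction lemma reuses the same concentration event and that the partition/counting lemmas are deterministic) matches the paper exactly.
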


\begin{proof}
By definition of the regret we have that $R(K) = \sum_{k=1}^K (V_1^\star(x_1^k) - V_1^{\pi^k}(x_1^k))$.  By Lemma~\ref{lemma:upper_lower_bounds_value_fn} we know with probability at least $1 - \pfail/2$ that for any $(x,a,h,k)$ that $\Vhat{h}{k}(x) - V_h^\star(x) \geq 0$.  Hence we have that $R(K) \leq \sum_{k=1}^K (\Vhat{1}{k}(x_1^k) - V_1^{\pi^k}(x_1^k))$.

The main idea of the rest of the proof is to upper bound $\sum_{k=1}^K (\Vhat{h}{k}(x_h^k) - V_h^{\pi^k}(x_h^k))$ by the next step $\sum_{k=1}^K (\Vhat{h+1}{k}(x_{h+1}^k) - V_{h+1}^{\pi^k}(x_{h+1}^k))$.  For any fixed $(h,k) \in [H] \times [K]$ let $B_h^k$ be the ball selected at episode $k$ step $h$ and $t = n_h^k(B)$.  Using Lemma~\ref{lemma:induction} we know for any $h \in [H]$ that
\begin{align*}
        \sum_{k=1}^K (\Vhat{h}{k} - V_h^{\pi^k})(x_h^k) \leq \sum_{k=1}^K (H \Ind{n_h^k = 0} + \beta_{n_h^k} + \xi_{h+1}^k) + \left(1 + \frac{1}{H}\right) \sum_{k=1}^K (\Vhat{h+1}{k} - V_{h+1}^{\pi^k})(x_{h+1}^k).
\end{align*}

For the first term we notice that 
\begin{align*}
	\sum_{k=1}^K \Ind{n_h^k = 0} H \leq H
\end{align*}
as the indicator is $1$ only when $k = 1$ for the first iteration when there is only a single ball covering the entire space.

Substituting this in and expanding this relationship out and using the fact that $\Vhat{H+1}{k}(x_{H+1}^k) - V_{H+1}^{\pi^k}(x_{h+1}^k) = 0$ as $V^k_{H+1} = V_{H+1}^{\pi^k} = 0$ gives
$$R(K) \leq \sum_{k=1}^K (V_1^k(x_1^k) - V_1^{\pi^k}(x_1^k)) \leq H \sum_{h=1}^H \left(1 + \frac{1}{H}\right)^{h-1} + \sum_{h=1}^H \left(1 + \frac{1}{H}\right)^{h-1}\sum_{k=1}^K (\beta_{n_h^k} + \xi_{h+1}^k).$$

Indeed, we can show this by induction on $H$.  For the case when $H = 1$ then we get (using that $\Vhat{2}{k}(x) = V_2^{\pi^k}(x) = 0$)
\begin{align*}
	\sum_{k=1}^K (V_1^k(x_1^k) - V_1^{\pi^k}(x_1^k)) & \leq H + \left(1 + \frac{1}{H}\right)\sum_{k=1}^K \left(\Vhat{2}{k}(x_2^k) - V_2^{\pi^k}(x_2^k)\right) + \sum_{k=1}^K \left(\beta_{n_1^k} + \xi_{2}^k\right) \\
	& = H \sum_{h=1}^1 \left(1 + \frac{1}{H}\right)^{h-1} + \sum_{h=1}^1 \left(1 + \frac{1}{H}\right)^{h-1}\sum_{k=1}^K \left(\beta_{n_1^k} + \xi_{2}^k\right).
\end{align*}
For the step case then
\begin{align*}
	&\sum_{k=1}^K V_1^k(x_1^k) - V_1^{\pi^k}(x_1^k) \\
	& \leq H + \left(1 + \frac{1}{H}\right)\sum_{k=1}^K \left(\Vhat{2}{k}(x_2^k) - V_2^{\pi^k}(x_2^k)\right) + \sum_{k=1}^K \left(\beta_{n_1^k} + \xi_{2}^k\right) \\
	& \leq H + \left(1 + \frac{1}{H}\right) \left( H \sum_{h=1}^{H-1} \left(1 + \frac{1}{H}\right)^{h-1} + \sum_{h=1}^{H-1} \left(1 + \frac{1}{H}\right)^{h-1}\sum_{k=1}^K \left( \beta_{n_{h+1}^k} + \xi_{h+2}^k\right) \right) + \sum_{k=1}^K \left(\beta_{n_1^k} + \xi_{2}^k\right) \\
	& = H \sum_{h=1}^H \left(1 + \frac{1}{H}\right)^{h-1} + \sum_{h=1}^H \left(1 + \frac{1}{H}\right)^{h-1}\sum_{k=1}^K \left( \beta_{n_h^k} + \xi_{h+1}^k \right).
\end{align*}
Moreover, noticing that $\sum_{h=1}^H \left(1 + \frac{1}{H}\right)^{h-1} \leq 3H$ and that $\left(1 + \frac{1}{H}\right)^{h-1} \leq \left(1 + \frac{1}{H}\right)^{H} \leq 3$ we get
$$\sum_{k=1}^K \left(V_1^k(x_1^k) - V_1^{\pi^k}(x_1^k)\right) \leq 3H^2 + 3\sum_{h=1}^H \sum_{k=1}^K \left( \beta_{n_h^k} + \xi_{h+1}^k \right).$$

However, $\sum_{k=1}^K \sum_{h=1}^H \xi_{h+1}^k \leq 2\sqrt{2H^3 K \log(4HK/\pfail)}$ with probability at least $1 - \pfail/2$ by Lemma~\ref{lemma:martingale_difference_sequence}.  Hence by with probability $1 - \pfail$ by combining the two high probability guarantees we have
$$\sum_{k=1}^K \left(V_1^k(x_1^k) - V_1^{\pi^k}(x_1^k)\right) \leq 3H^2 + 6 \sqrt{2H^3 K \log(4HK/\pfail)} + 3\sum_{h=1}^H \sum_{k=1}^K \beta_{n_h^k}.$$
We use Lemma \ref{lemma:bound_on_beta_summation} to bound $\sum_{k=1}^K \beta_{n_h^k}$.

Combing all of the pieces in the final regret bound we get with probability at least $1 - \pfail$ that
\begin{align*}
 R(K) &\leq \sum_{k=1}^K (V_1^k(x_1^k) - V_1^{\pi^k}(x_1^k)) \\
&\leq 3H^2 + 6 \sqrt{2H^3 K \log\big(4HK/\pfail\big)} + 3\sum_{h=1}^H \sum_{k=1}^K \beta_{n_h^k}\\
	& \leq 3H^2 + 6\sqrt{2H^3 K \log(4HK/\pfail)} \\
	&\qquad + 3 \sum_{h=1}^H  16\left( \sqrt{H^3 \log(4HK/\pfail)} + L d_{max} \right) \inf_{r_0 \in (0, d_{max}]} \left(\sum_{\substack{r = d_{max} 2^{-i} \\ r \geq r_0}} 2 N_r \frac{d_{max}}{r} + \frac{2 K r_0}{ d_{max}}\right)  \\
	& \leq 3H^2 + 6 \sqrt{2H^3 K \log(4HK/\pfail)} \\
	&\qquad + 96 H \left( \sqrt{H^3 \log(4HK/\pfail)} + L d_{max} \right) \inf_{r_0 \in (0, d_{max}]} \left( \sum_{\substack{r = d_{max} 2^{-i} \\ r \geq r_0}} N_r \frac{d_{max}}{r} + \frac{K r_0}{d_{max}} \right)
\end{align*}
To recover the term established in Theorem~\ref{thm:regret} we simply take $r_0 = O\left( K^{\frac{-1}{d_c+2}}\right)$.
\end{proof}

\clearpage
\section{Experimental Results and Figures}
\label{section:full_experiments}

\subsection{Oil Problem with Quadratic Survey Function}

\begin{figure*}[h!]
\centering
\includegraphics[scale=0.65]{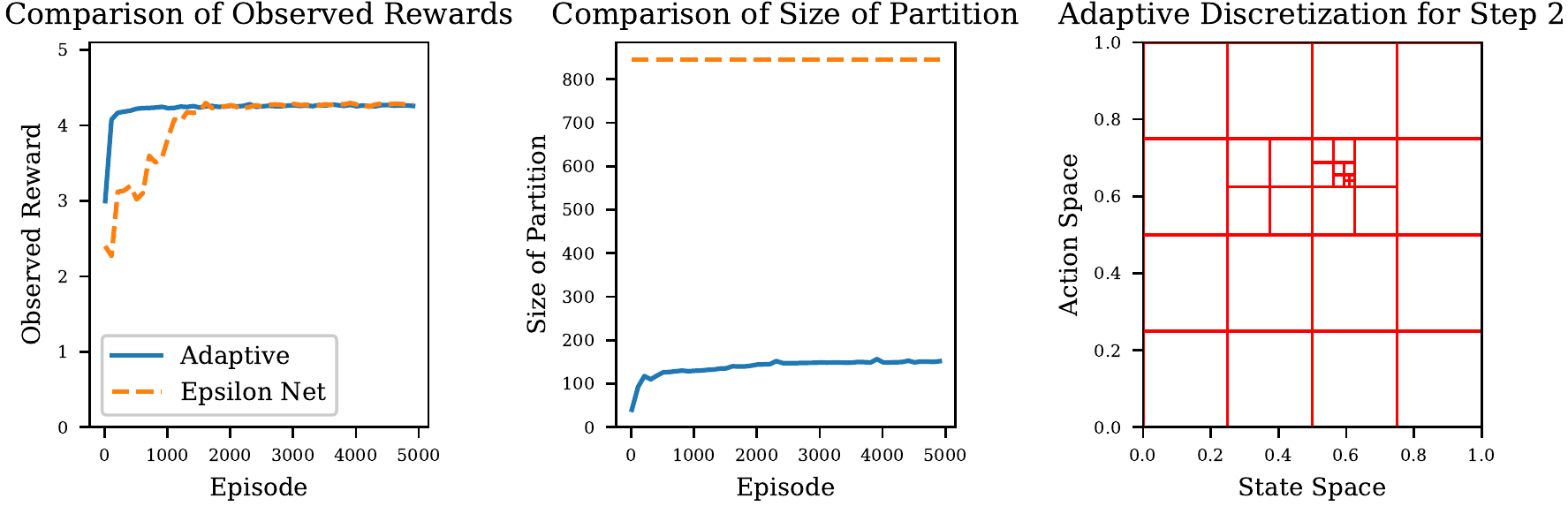}
\caption{
Comparison of the algorithms on the oil problem with quadratic survey function.  The transition kernel is $\Pr_h(x' \mid x,a) = \Ind{x' = a}$ and reward function is $r(x,a) = (1 - (a-0.75)^2 - |x - a|)_+$ (see Section~\ref{section:experiments_set_up}).  The adaptive algorithm creates a fine partition of the space around the optimal point of $0.75$.
}
\end{figure*}

\begin{figure*}[h!]
\centering
\includegraphics[scale=0.65]{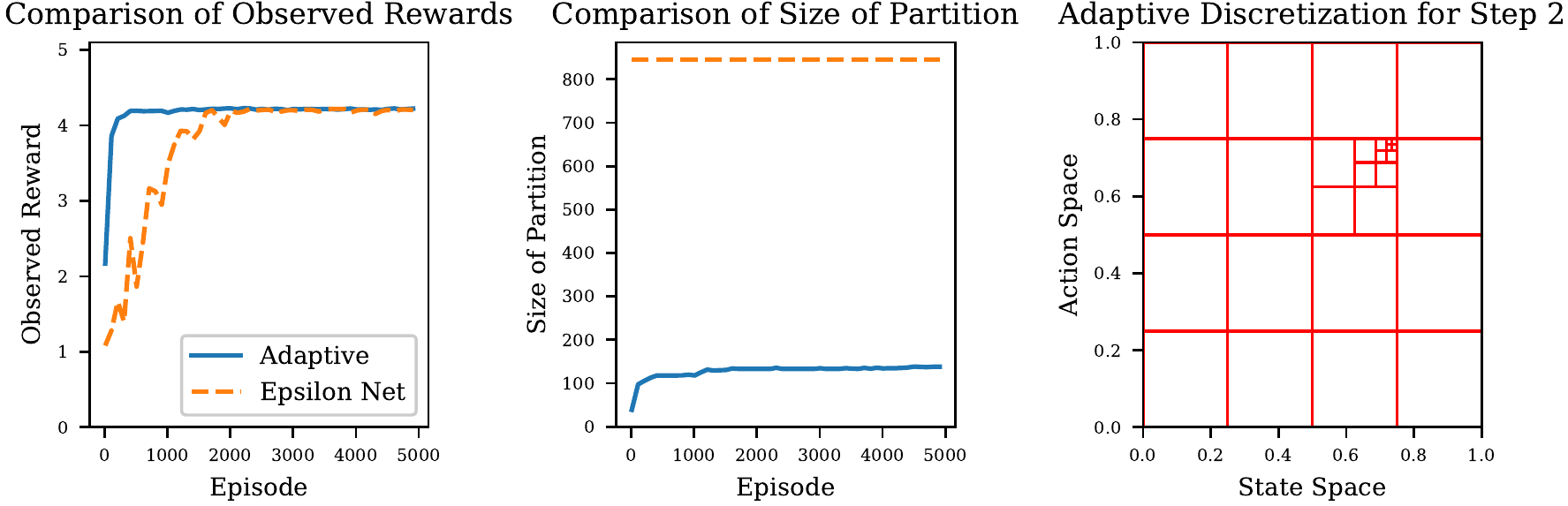}
\caption{
Comparison of the algorithms on the oil problem with quadratic survey function.  The transition kernel is $\Pr_h(x' \mid x,a) = \Ind{x' = a}$ and reward function is $r(x,a) = (1 - 10(a-0.75)^2 - |x - a|)_+$ (see Section~\ref{section:experiments_set_up}).  The epsilon net algorithm suffers by exploring more parts of the space.
}
\end{figure*}

\begin{figure*}[h!]
\centering
\includegraphics[scale=0.65]{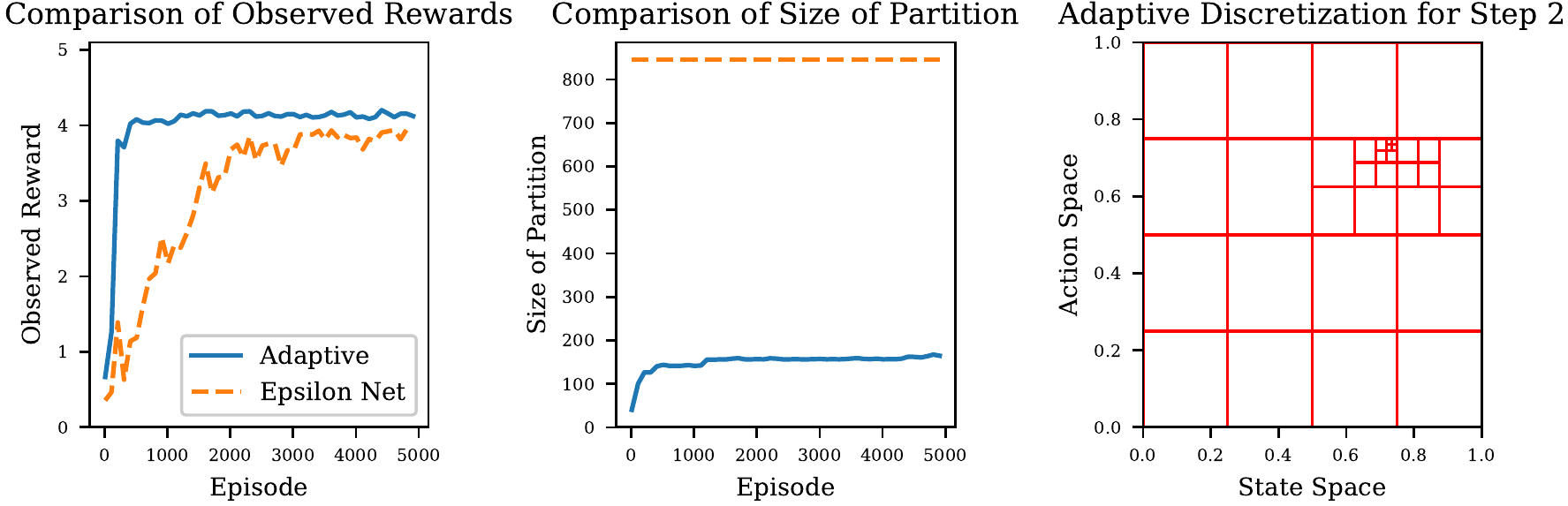}
\caption{
Comparison of the algorithms on the oil problem with quadratic survey function.  The transition kernel is $\Pr_h(x' \mid x,a) = \Ind{x' = a}$ and reward function is $r(x,a) = (1 - 50(a-0.75)^2 - |x - a|)_+$ (see Section~\ref{section:experiments_set_up}).  The epsilon net algorithm suffers a fixed discretization error as the mesh is not fine enough to capture the peak reward.
}
\end{figure*}

\FloatBarrier
\clearpage
\subsection{Oil Problem with Laplace Survey Function}

\begin{figure*}[th!]
\centering
\includegraphics[scale=0.65]{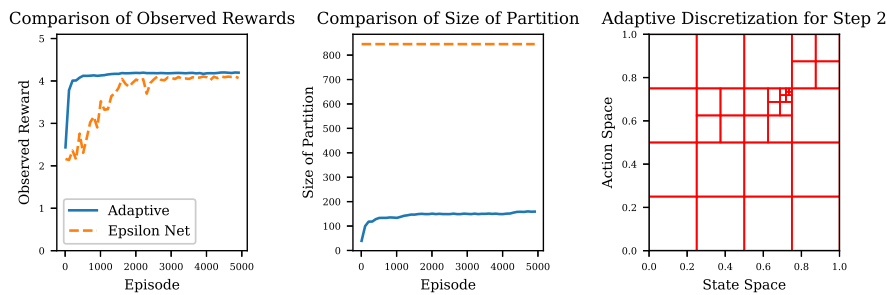}
\caption{\textit{(Duplicate of Figure~\ref{fig:oil_laplace})}  Comparison of the algorithms on the oil discovery problem with survey function $f(x) = e^{- |x - 0.75|}$.  The transition kernel is $\Pr_h(x' \mid x,a) = \Ind{x' = a}$ and reward function is $r(x,a) = (1 - e^{-|a - 0.75|} - |x - a|)_+$ (see Section~\ref{section:experiments_set_up}).  The adaptive algorithm quickly learns the location of the optimal point $0.75$ and creates a fine partition of the space around the optimal.
}
\label{fig:oil_laplace_repeat}
\end{figure*}

\begin{figure*}[ht!]
\centering

\includegraphics[scale=0.65]{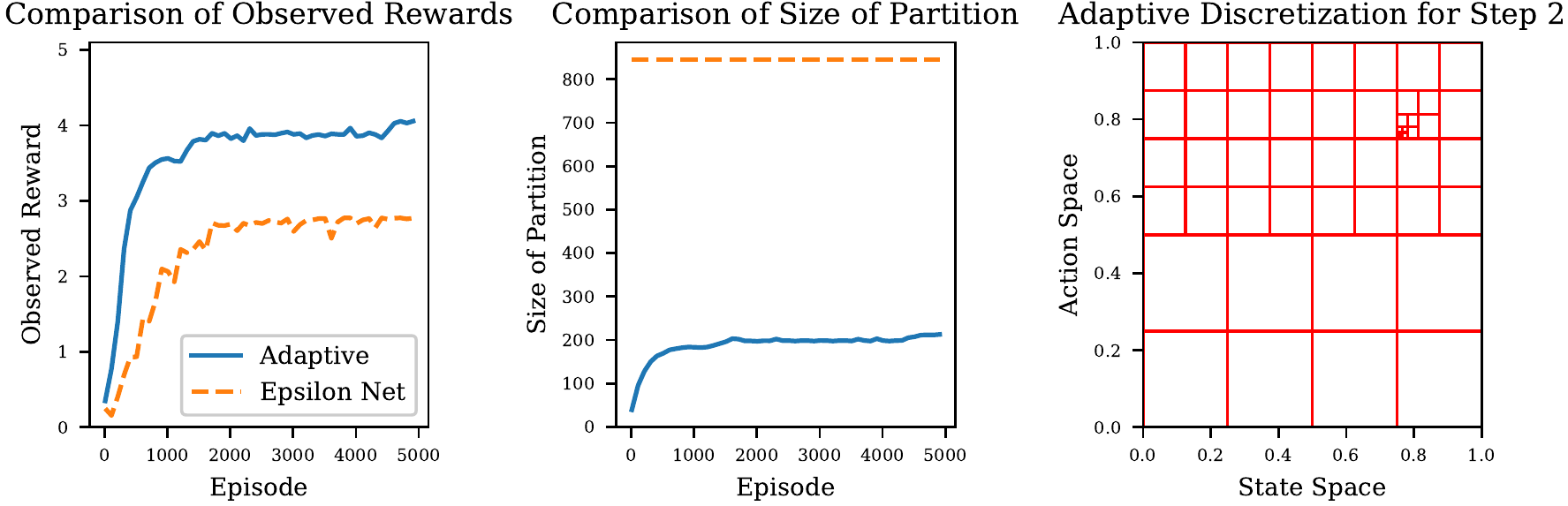}

\caption{Comparison of the algorithms on the oil problem with Laplace survey function.  The transition kernel is $\Pr_h(x' \mid x,a) = \Ind{x' = a}$ and reward function is $r(x,a) = (1 - 10e^{-|a - 0.75|} - |x - a|)_+$ (see Section~\ref{section:experiments_set_up}).  The adaptive algorithm learns a fine partition around the optimal point $0.75$ while the $\epsilon$-Net algorithm suffers from a large discretization error.
}
\end{figure*}

\begin{figure*}[ht!]
\centering

\includegraphics[scale=0.65]{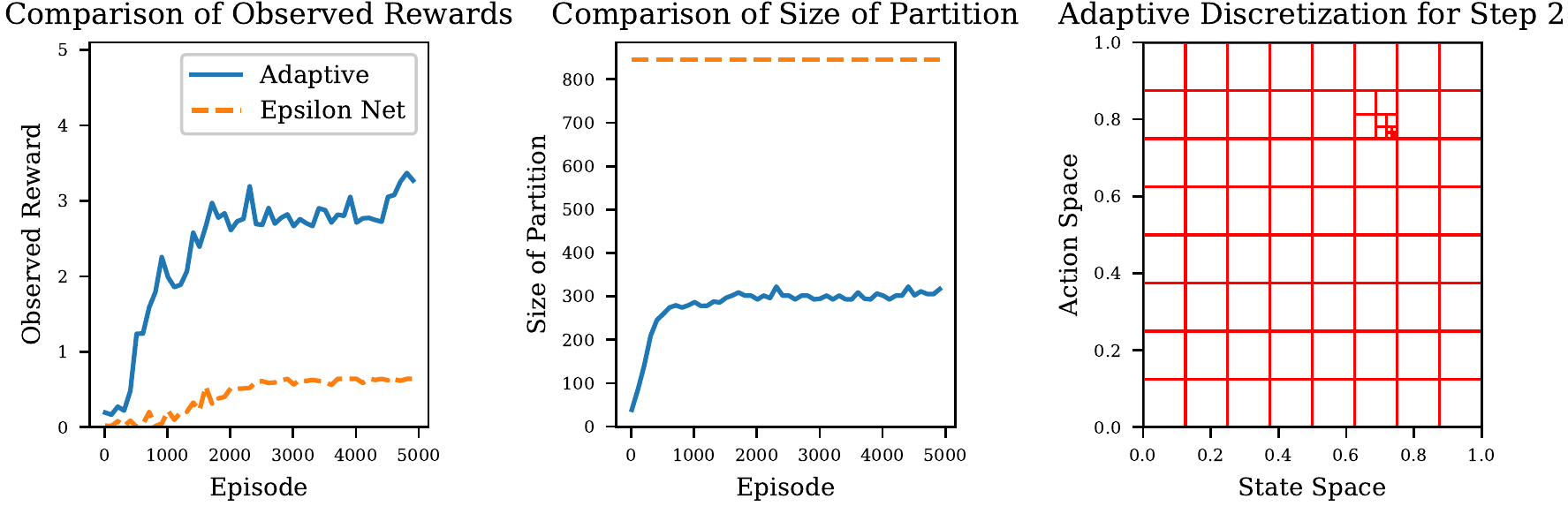}

\caption{Comparison of the algorithms on the oil problem with Laplace survey function.  The transition kernel is $\Pr_h(x' \mid x,a) = \Ind{x' = a}$ and reward function is $r(x,a) = (1 - 50e^{-|a - 0.75|} - |x - a|)_+$ (see Section~\ref{section:experiments_set_up}).  The epsilon net algorithm suffers from a large discretization error as the mesh is not fine enough to capture the high reward region.
}
\end{figure*}

\FloatBarrier
\clearpage
\subsection{Ambulance Problem with Uniform Arrivals}

\begin{figure*}[th!]
\centering
\includegraphics[scale=0.65]{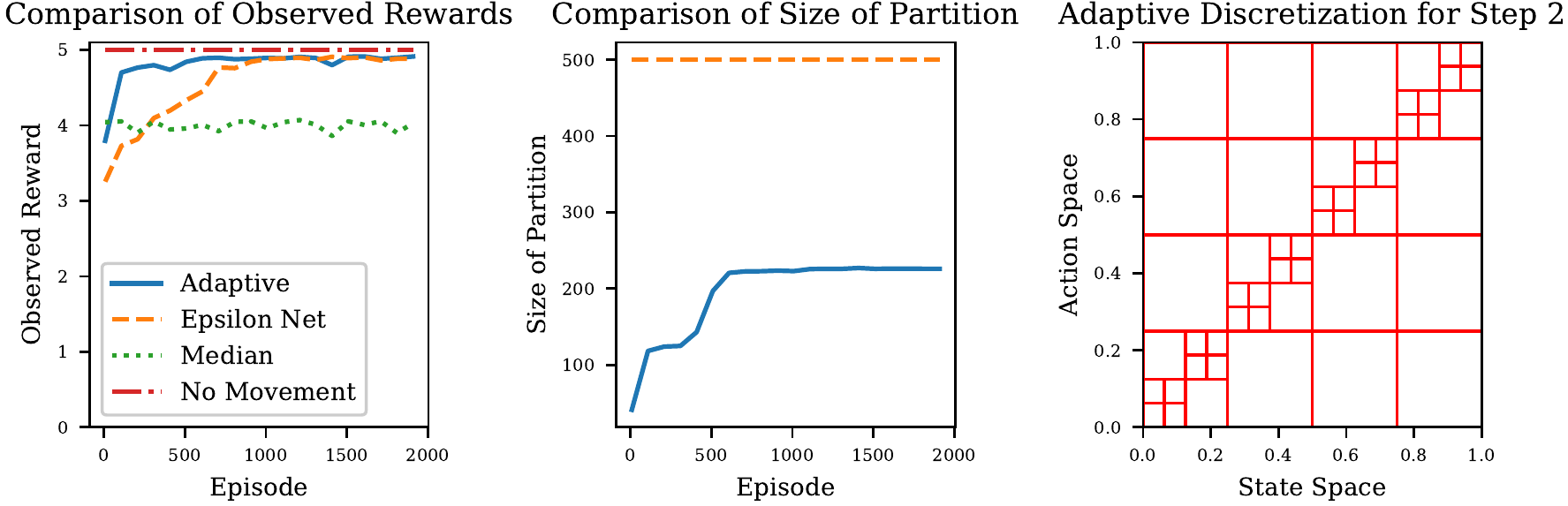}
\caption{
Comparison of the algorithms on the ambulance problem with Uniform$(0,1)$ arrivals and reward function $r(x, a) = 1 - |x - a|$ (see Section~\ref{section:experiments_set_up}).  Clearly, the no movement heuristic is the optimal policy but the adaptive algorithm learns a fine partition across the diagonal of the space where the optimal policy lies.
}
\end{figure*}

\begin{figure*}[th!]
\centering
\includegraphics[scale=0.65]{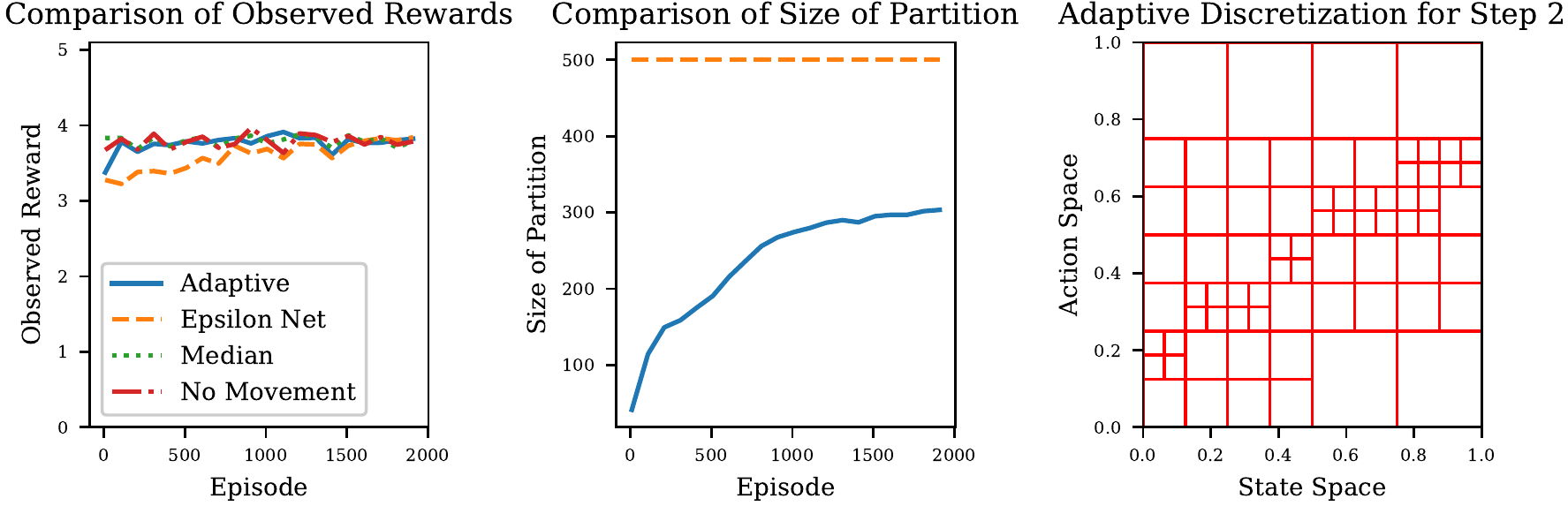}
\caption{
Comparison of the algorithms on the ambulance problem with Uniform$(0,1)$ arrival distribution and reward function $r(x',x, a) = 1 - .25|x - a|-.75|x'-a|$ (see Section~\ref{section:experiments_set_up}).  All algorithms perform sub optimally but the adaptive algorithm is able to learn a mixed policy between no movement and traveling to the median.
}
\end{figure*}

\begin{figure*}[th!]
\centering
\includegraphics[scale=0.65]{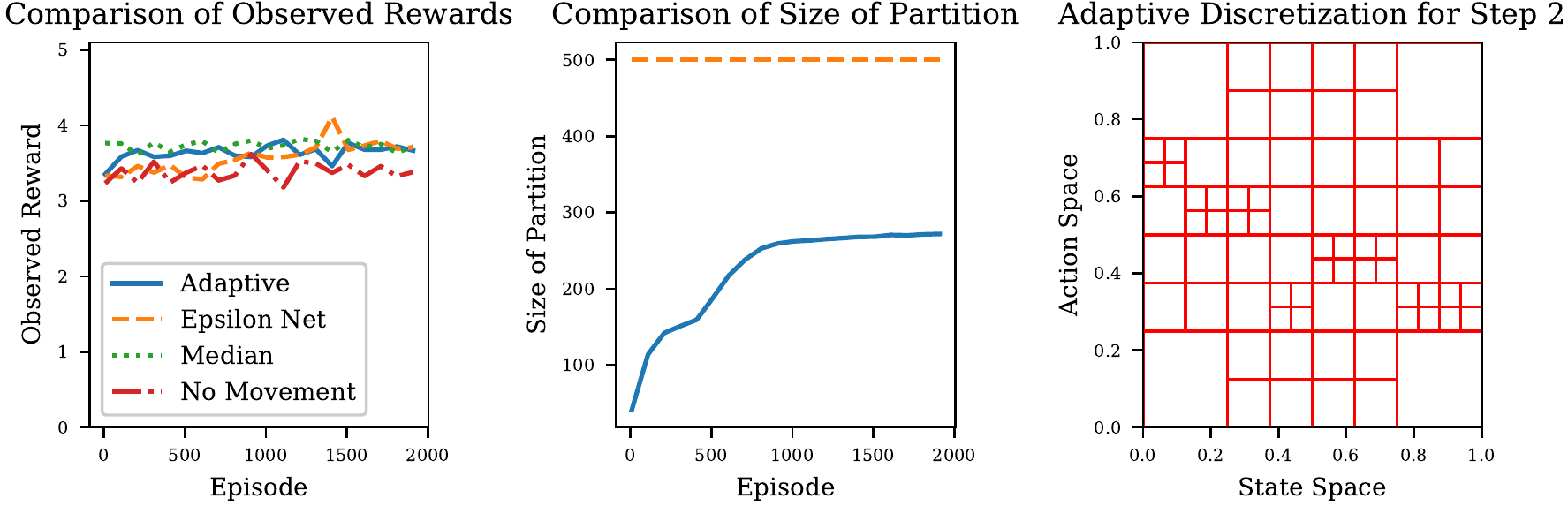}
\caption{
Comparison of the algorithms on the ambulance problem with Uniform$(0,1)$ arrival distribution and reward function $r(x',x, a) = 1 - |x'-a|$ (see Section~\ref{section:experiments_set_up}).   The median policy performs best, and the adaptive algorithm is beginning to learn a finer partition around the median of the arrival distribution ($0.5$).
}
\end{figure*}

\FloatBarrier
\clearpage

\subsection{Ambulance Problem with Beta Arrivals}

\begin{figure*}[th!]
\centering
\includegraphics[scale=0.65]{ambulance_beta_1-eps-converted-to.pdf}
\caption{
\textit{(Duplicate of Figure~\ref{fig:ambulance_beta})}  Comparison of the algorithms on the ambulance problem with Beta$(5,2)$ arrival distribution and reward function $r(x, a) = 1 - |x - a|$ (see Section~\ref{section:experiments_set_up}).  Clearly, the no movement heuristic is the optimal policy but the adaptive algorithm learns a fine partition across the diagonal of the space where the optimal policy lies.
}
\label{fig:ambulance_beta_duplicate}
\end{figure*}

\begin{figure*}[th!]
\centering
\includegraphics[scale=0.65]{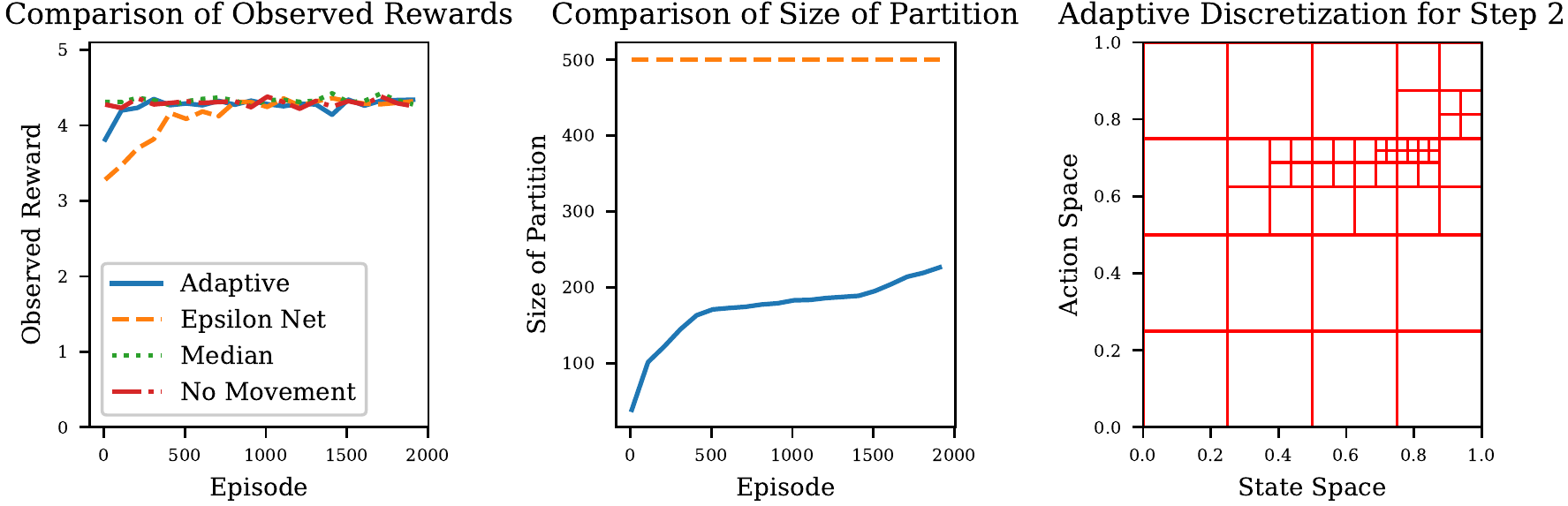}
\caption{
Comparison of the algorithms on the ambulance problem with Beta$(5,2)$ arrival distribution and reward function $r(x',x, a) = 1 - .25|x - a|-.75|x'-a|$ (see Section~\ref{section:experiments_set_up}).  All algorithms perform sub optimally but the adaptive algorithm is able to learn a mixed policy of traveling to the median.
}
\end{figure*}

\begin{figure*}[th!]
\centering
\includegraphics[scale=0.65]{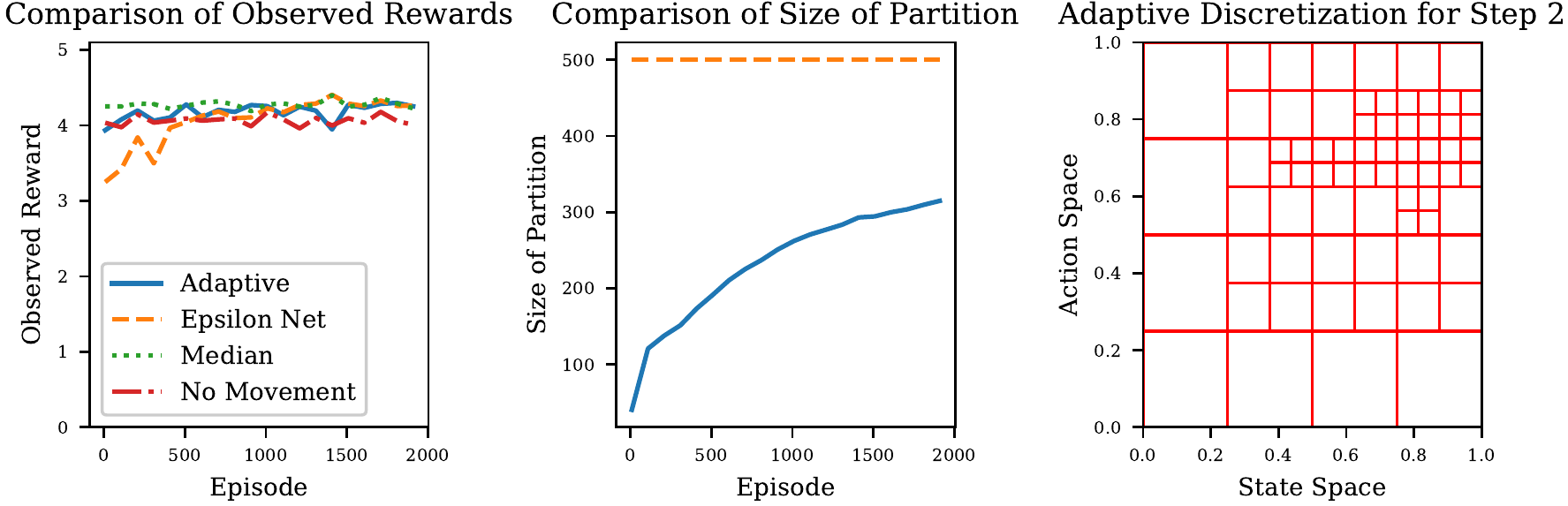}
\caption{
Comparison of the algorithms on the ambulance problem with Beta$(5,2)$ arrival distribution and reward function $r(x',x, a) = 1 - |x'-a|$ (see Section~\ref{section:experiments_set_up}).  The median policy performs best, and the adaptive algorithm is beginning to learn a finer partition around the median of the arrival distribution ($\simeq 0.7$).
}
\end{figure*}

\FloatBarrier
	
\end{document}